\documentclass[twoside]{article}

%\usepackage{aistats2020}
% If your paper is accepted, change the options for the package
% aistats2020 as follows:
%
\usepackage[accepted]{aistats2020}
%
% This option will print headings for the title of your paper and
% headings for the authors names, plus a copyright note at the end of
% the first column of the first page.

% If you set papersize explicitly, activate the following three lines:

\setlength{\pdfpageheight}{11in}
\setlength{\pdfpagewidth}{8.5in}

% If you use natbib package, activate the following three lines:
%\usepackage[round]{natbib}
%\renewcommand{\bibname}{References}
%\renewcommand{\bibsection}{\subsubsection*{\bibname}}

% If you use BibTeX in apalike style, activate the following line:
%\bibliographystyle{apalike}

% begin: Amir added

\hyphenation{op-tical net-works semi-conduc-tor}

\usepackage[T1]{fontenc}

%\usepackage[cmex10]{amsmath}
%\interdisplaylinepenalty=2500

\usepackage{pgfplots}
\makeatletter
\def\BState{\State\hskip-\ALG@thistlm}
\makeatother

\usepackage{cite}
\usepackage[utf8]{inputenc}

\usepackage{graphicx}
\usepackage{subcaption}
\usepackage{pstricks}
\usepackage{color}
\usepackage{bbm}
\usepackage{tcolorbox}
\usepackage{tikz}
\usepackage{pgfplots}
\usepackage{wrapfig}
\usepackage{lipsum}  % generates 
\usetikzlibrary{positioning}
\usepackage{tcolorbox}
\usepackage{amssymb}
\usepackage{amsmath,amsfonts,amssymb,amsthm}
\usepackage{natbib}

\usepackage{mathtools}
\usepackage{commath}
\usepackage{relsize}
\usepackage{bbm}
\usepackage{bm}
\usepackage[font={small}]{caption}
\usepackage{comment,color,soul}
\usepackage{enumerate} 
\usetikzlibrary{arrows,automata}
\usepackage{amsfonts}

\usepackage{mathtools}
\usepackage{url}
\usepackage{algorithm}
\usepackage{algpseudocode}
\usepackage{lipsum}
\usepackage[thinlines]{easytable}
 \usepackage{nicefrac}

\makeatletter
\newcommand{\algmargin}{\the\ALG@thistlm}
\makeatother
\newlength{\whilewidth}
\settowidth{\whilewidth}{\algorithmicwhile\ }
\algdef{SE}[parWHILE]{parWhile}{EndparWhile}[1]
  {\parbox[t]{\dimexpr\linewidth-\algmargin}{%
     \hangindent\whilewidth\strut\algorithmicwhile\ #1\ \algorithmicdo\strut}}{\algorithmicend\ \algorithmicwhile}%
\algnewcommand{\parState}[1]{\State%
  \parbox[t]{\dimexpr\linewidth-\algmargin}{\strut #1\strut}}
  
\usepackage{mysymbol}
\makeatletter
\def\thm@space@setup{\thm@preskip=2pt
        \thm@postskip=2pt \itshape}
        \def\Let@{\def\\{\notag\math@cr}}
\makeatother

\newtheoremstyle{newstyle}
{} %Aboveskip
{} %Below skip
{\mdseries} %Body font e.g.\mdseries,\bfseries,\scshape,\itshape
{} %Indent
{\bfseries} %Head font e.g.\bfseries,\scshape,\itshape
{.} %Punctuation afer theorem header
{ } %Space after theorem header
{} %Heading

\newtheorem{example}{Example} 
\newtheorem{theorem}{Theorem}
\newtheorem{lemma}{Lemma} 
 
\newtheorem{remark}{Remark}

\newtheorem{assumption}{Assumption}

\makeatletter \setlength{\@fptop}{0pt} \makeatother

\newcommand{\gr}{\nabla}

\newcommand{\bx}{\mathbf{x}}

\newcommand{\bE}{\mathbb{E}}

\newcommand{\xbar}{\overline{\mathbf{x}}}
\newcommand{\zbar}{\overline{\mathbf{z}}}

\newcommand{\tNab}{\widetilde{\nabla}}

\usepackage[hidelinks]{hyperref}
\definecolor{darkred}{RGB}{150,0,0}
\definecolor{darkgreen}{RGB}{0,150,0}
\definecolor{darkblue}{RGB}{0,0,150}
\hypersetup{colorlinks=true, linkcolor=red, citecolor=blue, urlcolor=darkblue}
%\usepackage[bookmarks=false]{hyperref}

% end: Amir added
\begin{document}

% If your paper is accepted and the title of your paper is very long,
% the style will print as headings an error message. Use the following
% command to supply a shorter title of your paper so that it can be
% used as headings.
%
\runningtitle{\texttt{FedPAQ}: A Communication-Efficient Federated Learning Method}

% If your paper is accepted and the number of authors is large, the
% style will print as headings an error message. Use the following
% command to supply a shorter version of the authors names so that
% they can be used as headings (for example, use only the surnames)
%
\runningauthor{Reisizadeh, Mokhtari, Hassani, Jadbabaie, Pedarsani}

\twocolumn[

\aistatstitle{\texttt{FedPAQ}: A Communication-Efficient Federated Learning Method with Periodic Averaging and Quantization}

\vspace{-4mm}

%\aistatsauthor{Amirhossein Reisizadeh\thanks{Dept. of Electrical $\&$ Computer Engineering, University of California, Santa Barbara, \{reisizadeh@ucsb.edu\}.} , Aryan Mokhtari\thanks{Dept. of Electrical $\&$ Computer Engineering, The University of Texas at Austin, \{mokhtari@austin.utexas.edu\}.} , Hamed Hassani\thanks{Dept. of Electrical $\&$ Systems Engineering, University of Pennsylvania, \{hassani@seas.upenn.edu\}.} ,\vspace{2mm}\\
%Ali Jadbabaie\thanks{Laboratory for Information $\&$ Decision Systems, Massachusetts Institute of Technology, \{jadbabai@mit.edu\}.} , Ramtin Pedarsani\thanks{Dept. of Electrical $\&$ Computer Engineering, University of California, Santa Barbara, \{ramtin@ece.ucsb.edu\}.}}

\aistatsauthor{Amirhossein Reisizadeh \And Aryan Mokhtari \And Hamed Hassani}
\aistatsaddress{UC Santa Barbara \And UT Austin \And UPenn}
\vspace{-4mm}
\aistatsauthor{Ali Jadbabaie \And Ramtin Pedarsani}
\aistatsaddress{MIT \And UC Santa Barbara}
]

\begin{abstract}
\vspace{-2mm}
Federated learning is a distributed framework according to which  a model is trained over a set of devices, while keeping data localized. This framework  faces several systems-oriented challenges which include (i) \emph{communication bottleneck} since a large number of devices upload their local updates to a parameter server, and (ii) \emph{scalability} as the federated network consists of millions of devices. Due to these systems challenges as well as issues related to statistical heterogeneity of data and privacy concerns, designing a provably efficient federated learning method is of significant importance yet it remains challenging. In this paper, we present \texttt{FedPAQ}, a communication-efficient \textbf{Fed}erated Learning method with \textbf{P}eriodic \textbf{A}veraging and \textbf{Q}uantization. \texttt{FedPAQ} relies on three key features: (1) periodic averaging where models are updated locally at devices and only periodically averaged at the server; (2) partial device participation where only a fraction of devices participate in each round of the training; and (3) quantized message-passing where the edge nodes quantize their updates before uploading to the parameter server. These features address the communications and scalability challenges in federated learning. We also show that \texttt{FedPAQ} achieves near-optimal theoretical guarantees for strongly convex and non-convex loss functions and empirically demonstrate the communication-computation tradeoff provided by our method.
\vspace{-2mm}

\end{abstract}
\vspace{-2mm}

\section{Introduction}\label{sec:intro}
In many large-scale machine learning applications, data is acquired and processed at the edge nodes of the network such as mobile devices, users' devices, and IoT sensors. \emph{Federated Learning} is a novel paradigm that aims to train a statistical model at the ``edge'' nodes as opposed to the traditional distributed computing systems such as data centers \citep{konevcny2016federated,li2019federated}. The main objective of federated learning is to fit a model to data generated from network devices without continuous transfer of the massive amount of collected data from edge of the network to back-end servers for processing.

Federated learning has been deployed by major technology companies with the goal of providing privacy-preserving services using users' data \citep{bonawitz2019towards}. Examples of such applications are learning from wearable devices \citep{huang2018loadaboost}, learning sentiment \citep{smith2017federated}, and location-based services \citep{samarakoon2018distributed}. While federated learning is a promising paradigm for such applications, there are several challenges that remain to be resolved. In this paper, we focus on two significant challenges of federated learning, and propose a novel federated learning algorithm that addresses the following  two challenges:\vspace{2mm}\\
(1) \textbf{Communication bottleneck.} Communication bandwidth is a major bottleneck in federated learning as a large number of devices attempt to communicate their local updates to a central parameter server. Thus, for a communication-efficient federated learning algorithm, it is crucial that such updates are sent in a compressed manner and infrequently.\vspace{2mm}\\
(2) \textbf{Scale.} A federated network typically consists of thousands to millions of devices that may be active, slow, or completely inactive during the training procedure. Thus, a proposed federated learning algorithm should be able to operate efficiently with partial device participation or random sampling of devices. 
% \item[(iii)] \textbf{Optimality guarantees.} Given the mentioned systems bottlenecks, achieving provable accuracy guarantees in federated learning is significantly challenging. Establishing such guarantees becomes even harder when the users' data is statistically heterogeneous. 

The goal of this paper is to develop a provably efficient federated learning algorithm that addresses the above-mentioned systems challenges. More precisely, we consider the task of training a model in a federated learning setup where we aim to find an \emph{accurate} model over a collection of $n$ distributed nodes. In this setting, each node contains $m$ independent and identically distributed samples from an unknown probability distribution and a parameter server helps coordination between the nodes. We focus on solving the empirical risk minimization problem for a federated architecture while addressing the challenges mentioned above. In particular, we consider both strongly convex and non-convex settings and provide sharp guarantees on the performance of our proposed algorithm.

\vspace{2mm}
\noindent\textbf{Contributions.} In this work, we propose {\texttt{FedPAQ}}, a communication-efficient \textbf{Fed}erated learning algorithm with \textbf{P}eriodic \textbf{A}veraging and \textbf{Q}uantization, which addresses federated learning systems' bottlenecks. In particular, \texttt{FedPAQ} has three key features that enable efficient federated learning implementation:\vspace{2mm}\\
%\begin{enumerate}[(1)]
(1) \texttt{FedPAQ} allows the nodes (users) of the network to run local training before synchronizing with the parameter server. In particular, each node iteratively updates its local model for a period of iterations using the stochastic gradient descent (SGD) method and then uploads its model to the parameter server where all the received models are averaged periodically. By tuning the parameter which corresponds to the number of local iterations before communicating to the server, periodic averaging results in slashing the number of communication rounds and hence the total communication cost of the training process.\vspace{2mm}\\
(2) \texttt{FedPAQ} captures the constraint on availability of active edge nodes by allowing a partial node participation. That is, in each round of the  method, only a fraction of the total devices--which are the active ones--contribute to train the model. This procedure not only addresses the scalability challenge, but also leads to smaller communication load compared to the case that all nodes participate in training the learning model.\vspace{2mm}\\
(3) In \texttt{FedPAQ}, nodes only send a quantized version of their local information to the server at each round of communication. As the training models are of large sizes, quantization significantly helps reducing the communication overhead on the network.

%\end{enumerate}
While these features have been proposed in the literature, to the best of our knowledge, \texttt{FedPAQ} is the first federated learning algorithm that simultaneously incorporates these features and provides near-optimal theoretical guarantees on its statistical accuracy, while being communication-efficient via periodic averaging, partial node participation and quantization. 

In particular, we analyze our proposed \texttt{FedPAQ} method for two general class of loss functions: strongly-convex and non-convex. For the strongly-convex setting, we show that after $T$ iterations the squared norm of the distance between the solution of our method and the optimal solution is of  $\ccalO(1/T)$ in expectation. We also show that \texttt{FedPAQ} approaches a first-order stationary point for non-convex losses at a rate of $\ccalO(1/\sqrt{T})$. This demonstrates that our method significantly improves the communication-efficiency of federated learning while preserving the optimality and convergence guarantees of the baseline methods. In addition, we would like to highlight that our theoretical analysis is based on few relaxed and customary assumptions which yield more technical challenges compared to the existing works with stronger assumptions and hence acquires novel analytical techniques. More explanations will be provided in Section \ref{sec:convg}.

\vspace{2mm}
\noindent
\textbf{Related Work.} The main premise of  federated learning has been collective learning using a network of common devices such as phones and tablets. This framework  potentially allows for smarter models, lower latency, and less power consumption, all while ensuring privacy. Successfully achieving these goals in practice requires addressing key challenges of federated learning such as communication complexity, systems heterogeneity, privacy, robustness, and heterogeneity of the users.  Recently, many federated methods have been considered in the literature which mostly aim at reducing the communication cost.  \cite{mcmahan2016communication} proposed the \texttt{FedAvg} algorithm, where the global model is updated by averaging local SGD updates. \cite{guha2019one} proposed one-shot federated learning in which the master node learns the model after a single round of communication. 

Optimization methods for federated learning are naturally tied with tools from stochastic  and distributed optimization. Minibatch stochastic gradient descent distributed optimization methods have been largely studied in the literature without considering the communication bottleneck. Addressing the communication bottleneck via quantization and compression in distributed learning has recently gained considerable attention for both master-worker \citep{alistarh2017qsgd,seide20141,bernstein2018signsgd,smith2016cocoa} and masterless topologies \citep{reisizadeh2019exact,zhang2018compressed,koloskova2019decentralized,wang2019matcha}. Moreover, \cite{wang2019matcha} reduces the communication delay by decomposing the graph.

Local updates, as another approach to reduce the communication load in distributed learning has been studied in the literature, where each learning node carries out multiple local updates before sharing with the master or its neighboring nodes. \cite{stich2018local} considered a master-worker topology and provides theoretical analysis for the convergence of local-SGD method. \cite{lin2018don} introduced a variant of local-SGD namely post-local-SGD which demonstrates empirical improvements over local-SGD. \cite{wang2018cooperative} provided a general analysis of such cooperative method for decentralized settings as well.

Statistical heterogeneity of users' data points is another major challenge in federated learning. To address this heterogeneity, other methods such as multitask learning and meta learning have been proposed to train multiple local models \citep{smith2017federated,nichol2018first,li2019convergence}. Many methods have been proposed to address systems heterogeneity and in particular stragglers in distributed learning using coding theory, e.g., \citep{lee2018speeding,yu2017polynomial,dutta2016short,tandon2016gradient,reisizadeh2019codedreduce}. 
Another important challenge in federated learning is to preserve privacy in learning \citep{duchi2014privacy}. \cite{mcmahan2017learning,agarwal2018cpsgd} proposed privacy-preserving methods for distributed and federated learning using differential privacy techniques. Federated heavy hitters discovery with differential privacy was proposed in \citep{zhu2019federated}. %We note that privacy is not the focus of our paper, as FL is somewhat inherently private. 

Robustness against adversarial devices is another challenge in federated learning and distributed learning that has been studied in \citep{chen2017distributed,yin2018byzantine,ghosh2019robust}. Finally, several works have considered communication-efficient collaborative learning where there is no master node, and the computing nodes learn a model collaboratively in a decentralized manner \citep{reisizadeh2019exact,zhang2018compressed,doan2018accelerating,koloskova2019decentralized,lalitha2019peer}.  While such techniques are related to federated learning, the network topology in master-less collaborative learning is fundamentally different.

\vspace{-2mm}
\section{Federated Learning Setup}\label{sec:setup}
\vspace{-1mm}
In this paper, we focus on a federated architecture where a parameter server (or server) aims at finding a model that performs well with respect to the data points that are available at different nodes (users) of the network, while nodes exchange their local information with the server. We further assume that the data points for all nodes in the network are generated from a common probability distribution. In particular, we consider the following stochastic learning problem
%%%$nm$ realizations in $\ccalD = \ccalD^1 \cup \cdots \cup \ccalD^n$.
\vspace{-1mm}
\begin{equation}\label{eq:ERM}
  \min_{\bbx}  f(\bbx) \coloneqq \min_{\bbx} \frac{1}{n}\sum_{i = 1}^n f_i(\bbx), \vspace{-1mm}
\end{equation}
where the local objective function of each node $i$ is defined as the expected loss of its local sample distributions
%%%
\vspace{-1mm}
\begin{equation}
    f_i(\bbx) \coloneqq \mathbb{E}_{\xi \sim \ccalP^i} \, [\ell(\bx,\xi)].\vspace{-1mm}
\end{equation}
%%%
%%%
%\begin{equation}\label{population_risk}
%  \min_{\bbx}  L(\bx) \coloneqq \min_{\bbx} \mathbb{E}_{\xi \sim \ccalP} \, [\ell(\bx,\xi)],
%\end{equation}
%%%
Here $\ell:\mathbb{R}^p\times\mathbb{R}^u  \to\mathbb{R}$ is a stochastic loss function, $\bbx\in\mathbb{R}^p$ is the model vector, and  $\xi\in \mathbb{R}^u$ is a random variable with unknown probability distribution $\ccalP^i$. Moreover, $ f:\mathbb{R}^p \to\mathbb{R}$ denotes the expected loss function also called population risk. 
In our considered federated setting, each of the $n$ distributed nodes generates a local loss function according to a distribution $\ccalP^i$ resulting in a local stochastic function $f_i(\bbx) \coloneqq \mathbb{E}_{\xi \sim \ccalP^i} \, [\ell(\bx,\xi)]$. A special case of this formulation is when each node $i$ maintains a collection of $m$ samples from distribution $\ccalP^i$ which we denote by $\ccalD^i = \{\xi^i_1, \cdots, \xi^i_m\}$ for $i \in [n]$. This results in the following empirical risk minimization problem over the collection of $nm$ samples in $\ccalD \coloneqq \ccalD^1 \cup \cdots \cup \ccalD^n$:
%We focus on solving the expected risk minimization problem over the collection of $n$ nodes.This problem can be written as  
%Note that the formulation in \eqref{eq:ERM} can be also interpreted as minimization of the average of the local functions $f_i$ for all nodes $i=1,\dots, n$ in the network, i.e., 
%%%
\vspace{-1mm}
\begin{equation} \label{eq:erm}
    \min_{\bbx} L(\bbx) = \min_{\bbx} \frac{1}{nm}\sum_{\xi \in \ccalD} \ell(\bbx, \xi),\vspace{-1mm}
\end{equation}
%%%

We denote the optimal model $\bbx^*$ as the solution to the expected risk minimization problem in \eqref{eq:ERM} and denote the minimum loss $f^* \coloneqq \min_{\bbx} f(\bbx) = f(\bbx^*)$ as the optimal objective function value of the expected risk minimization problem in \eqref{eq:ERM}. In this work, we focus on the case that the data over the $n$ nodes is independent and identically distributed (i.i.d.), which implies the local distributions are common.

As stated above, our goal is to minimize the expected loss $f(\bx)$. However, due to the fact that we do not have access to the underlying distribution $\ccalP$, there have been prior works that focus on minimizing the empirical risk $L(\bx)$ which can be viewed as an approximation of the expected loss $f(\bx)$. The accuracy of this approximation is determined by the number of samples $N=nm$. It has been shown that for convex losses $\ell$, the population risk $f$ is at most $\ccalO(1/\sqrt{nm})$ distant from the empirical risk $L$, uniformly and with high probability \citep{bottou2008tradeoffs}. That is, $\sup_{\bx} |f(\bx) - L(\bx)| \leq \ccalO(1/\sqrt{nm})$ with high probability. This result implies that if each of the $n$ nodes separately minimizes its local empirical loss function, the expected deviation from the local solution and the solution to the population risk minimization problem is of $\ccalO(1/\sqrt{m})$ (note that each node has access to $m$ data samples). However, if the nodes manage to somehow share or synchronize their solutions, then a more accurate solution can be achieved, that is a solution with accuracy of order $\ccalO(1/\sqrt{nm})$. Therefore, when all the $mn$ available samples are leveraged, one can obtain a solution $\hat{\bx}$ that satisfies $\bE [ L(\hat{\bx}) - L(\bx^*) ] \leq \ccalO(1/\sqrt{nm})$. This also implies that $\bE [ f(\hat{\bx}) - \min_{\bx}f(\bx) ] \leq \ccalO(1/\sqrt{nm})$.

For the case of non-convex loss function $\ell$, however, finding the solution to the expected risk minimization problem in \eqref{eq:ERM} is hard. Even further, finding (or testing) a local optimum is NP-hard in many cases \citep{murty1987some}. Therefore, for non-convex losses we relax our main goal and instead look for first-order optimal solutions (or stationary points) for \eqref{eq:ERM}. That is, we aim to find a model $\hat{\bx}$ that satisfies $\norm{\gr f(\hat{\bx})} \leq \epsilon$ for an arbitrarily small approximation error $\epsilon$. \cite{mei2018landscape} characterized the gap for the gradients of the two expected risk and empirical risk functions. That is, if the gradient of loss is sub-Gaussian, then with high probability $\sup_{\bx} \norm{\gr L(\bx) - \gr f(\bx)} \leq \ccalO(1/\sqrt{nm})$. This result further implies that having all the nodes contribute in minimizing the empirical risk results in better approximation for a first-order stationary point of the expected risk $L$. In summary, our goal in non-convex setting is to find $\hat{\bx}$ that satisfies $\norm{\gr f(\bx)} \leq \ccalO(1/\sqrt{nm})$ which also implies $\norm{\gr L(\bx)} \leq \ccalO(1/\sqrt{nm})$.

\vspace{-1mm}
\section{Proposed \texttt{FedPAQ} Method}\label{sec:qfl}
\vspace{-1mm}
In this section, we present our proposed communication-efficient federated learning method called \texttt{FedPAQ}, which consists of three main modules: (1) periodic averaging, (2) partial node participation, and (3) quantized message passing.

\vspace{-1mm}
\subsection{Periodic averaging}\label{sec:p_avg}
\vspace{-1mm}

As explained in Section \ref{sec:setup}, to leverage from all the available data samples on the nodes, any training method should incorporate synchronizing the intermediate models obtained at local devices. One approach is to let the participating nodes synchronize their models through the parameter server in each iteration of the training. This, however, implies many rounds of communication between the federated nodes and the parameter server which results in communication contention over the network. Instead, we let the participating nodes conduct a number of local updates and synchronize through the parameter server periodically. To be more specific, once nodes pull an updated model from the server, they update the model locally by running $\tau$ iterations of the SGD method and then send  proper information to the server for updating the aggregate model. Indeed, this periodic averaging scheme reduces the rounds of communication between server and the nodes and consequently the overall communication cost of training the model. In particular, for the case that we plan to run $T$ iterations of SGD at each node, nodes need to communicate with the server $K=T/\tau$ rounds, hence reducing the total communication cost by a factor of $1/\tau$. 

Choosing a larger value of $\tau$ indeed reduces the rounds of communication for a fixed number of iterations $T$. However, if our goal is to obtain a specific accuracy $\eps$, choosing a very large value for $\tau$ is not necessarily optimal as by increasing $\tau$ the noise of the system increases and the local models approach the local optimal solutions instead of the global optimal solution. Hence, we might end up running more iterations $T$ to achieve a specific accuracy $\eps$ comparing to a case that $\tau$ is small. Indeed, a crucial question that we need to address is finding the optimal choice of $\tau$ for minimizing the overall communication cost of the process. 

\vspace{-1mm}
\subsection{Partial node participation}
\vspace{-1mm}

In a federated network, often there is a large number of devices such as smart phones communicating through a base station. On one hand, base stations have limited download bandwidth and hence only a few of devices are able to simultaneously upload their messages to the base station. Due to this limitation the messages sent from the devices will be pipelined at the base station which results in a dramatically slow training. On the other hand, having all of the devices participate through the whole training process induces a large communication overhead on the network which is often costly. Moreover, in practice not all the devices contribute in each round of the training. Indeed, there are multiple factors that determine whether a device can participate in the training \citep{mcmahan2017GoogleFed}: a device should be available in the reachable range of the base station; a device should be idle, plugged in and connected to a free wireless network during the training; etc. 

Our proposed \texttt{FedPAQ} method captures the restrictions mentioned above. In particular, we assume that among the total of $n$ devices, only $r$ nodes 
($r\leq n$) are available in each round of the training. We can also assume that due to the availability criterion described before, such available devices are randomly and uniformly distributed over the network \citep{sahu2018convergence}. In summary, in each period $k=0,1,\cdots,K-1$ of the training algorithm, the parameter server sends its current model $\bx_k$ to all the $r$ nodes in subset $\ccalS_k$, which are distributed uniformly at random among the total $n$ nodes, i.e., $\Pr{\ccalS_k} = 1/{n \choose r}$.

\vspace{-1mm}
\subsection{Quantized message-passing}\label{sec:quan}
\vspace{-1mm}

Another aspect of the communication bottleneck in federated learning is the limited uplink bandwidth at the devices which makes the communication from devices to the parameter server slow and expensive. Hence, it is critical to reduce the size of the uploaded messages from the federated devices \citep{li2019federated}. Our proposal is to employ quantization operators on the transmitted massages. Depending on the accuracy of the quantizer, the network communication overhead is reduced by exchanging the quantized updates. 

In the proposed \texttt{FedPAQ}, each node $i \in \ccalS_k$ obtains the model $\bx^{(i)}_{k,\tau}$ after running $\tau$ local iterations of an optimization method (possibly SGD) on the most recent model $\bx_k$ that it has received form the server. Then each node $i$ applies a quantizer operator $Q(\cdot)$ on the difference between the received model and its updated model, i.e., $\bx^{(i)}_{k,\tau} - \bx_k$, and uploads the quantized vector $Q(\bx^{(i)}_{k,\tau} - \bx_k)$ to the parameter server. Once these quantized vectors are sent to the server, it decodes the quantized signals and combines them to come up with a new model $\bbx_{k+1}$.

Next, we describe a widely-used random quantizer.

\vspace{2mm}
\begin{example}[Low-precision quantizer \citep{alistarh2017qsgd}] \label{ex:lp}
For any variable $\bx \in \mathbb{R}^p$, the low precision quantizer $ Q^{\text{LP}}: \mathbb{R}^p \rightarrow \mathbb{R}^p$ is defined as below
\begin{align}\label{eq:low_per_def}
    % Q^{\text{LP}}(\bx) &= [Q^{\text{LP}}_1 (\bx),\cdots,Q^{\text{LP}}_p (\bx)]^\top,
    % \quad \text{ and } \quad \\
    Q^{\text{LP}}_i (\bx) &= \norm{\bx} \cdot {\rm{sign}}(x_i) \cdot \xi_i(\bx,s), \quad  i \in [p],
\end{align}
where $ \xi_i(\bx,s) $ is a random variable taking on value $\nicefrac{l+1}{ s}$ with probability $\frac{|x_i|}{\norm{\bx}} s - l$ and $\nicefrac{l}{ s}$ otherwise.
% \begin{equation}\label{eq:low_per_def_2}
%     \xi_i(\bx,s) =
% \left\{
% 	\begin{array}{ll}
% 	\frac{l}{ s}  &\quad \mbox{w.p.} \ \ 1- \left(\frac{|x_i|}{\norm{\bx}} s - l\right), \vspace{2mm} \\ 
% 	\frac{l+1}{s} & \quad \mbox{w.p.} \ \ \frac{|x_i|}{\norm{\bx}} s - l. 
% 	\end{array}
% \right.
% \end{equation}
Here, the tuning parameter $s$ corresponds to the number of quantization levels and $l \in [0,s)$ is an integer such that $\nicefrac{|x_i|}{\norm{\bx}} \in [\nicefrac{l}{s}, \nicefrac{l+1}{s})$.
\end{example}

\vspace{-1mm}
\subsection{Algorithm update}
\vspace{-1mm}

\begin{comment}
\begin{algorithm}[t!]
\caption{\texttt{FedPAQ}}\label{alg:update}
%\Require No. of periods $K$, period length $\tau$, stepsize $\eta_{k,t}$
\For{$k=0,1,\cdots,K-1$}{
     server picks $r$ nodes $\ccalS_k$ uniformly at random \\
     server sends $\bx_{k}$ to nodes in $\ccalS_k$ \\
    \For{\text{node }$i \in \ccalS_k$}{
     $\bx^{(i)}_{k,0} \gets \bx_{k}$\\
    \For{$t=0,1,\cdots,\tau-1$}{
     randomly pick a data point $\xi \in \ccalD^{i}$\\
     compute $\tNab f_i (\bx) = \gr \ell (\bx, \xi)$\\
     set $\bx^{(i)}_{k,t+1} \gets \bx^{(i)}_{k, t} - \eta_{k,t} \tNab f_i \left(\bx^{(i)}_{k, t}\right)$
    }\textbf{end for}\\
     send $Q \left(\bx^{(i)}_{k, \tau} - \bx_{k} \right)$ to the server
    }\textbf{end for}\\
     server updates
     $\bx_{k+1} \gets \bx_{k} + \frac{1}{r} \sum_{i \in \ccalS_k} Q \left(\bx^{(i)}_{k, \tau} - \bx_{k} \right)$
}\textbf{end for}
\end{algorithm}
\end{comment}

%%%%%

Now we use the building blocks developed in Sections \ref{sec:p_avg}-\ref{sec:quan} to precisely present \texttt{FedPAQ}. Our proposed method consists of $K$ periods, and during a period, each node performs $\tau$ local updates, which results in total number of  $T=K\tau$  iterations. In each period $k = 0,\cdots,K-1$ of the algorithm, the parameter server picks $r \leq n$ nodes uniformly at random which we denote by $\ccalS_k$. The parameter server then broadcasts its current model $\bx_k$ to all the nodes in $\ccalS_k$ and each node $i\in \ccalS_k$ performs $\tau$ local SGD updates using its local dataset. To be more specific, let $\bx^{(i)}_{k,t}$ denote the model at node $i$ at $t$-th iteration of the $k$-th period. At each local iteration $t=0,\cdots,\tau-1$, node $i$ updates its local model according to the following rule:
\vspace{-1mm}
\begin{equation}\label{eq:sgd}
    \bx^{(i)}_{k,t+1} 
    =
    \bx^{(i)}_{k, t} - \eta_{k,t} \tNab f_i \left(\bx^{(i)}_{k, t}\right),\vspace{-1mm}
\end{equation}
where the stochastic gradient $\tNab f_i$ is computed using a random sample\footnote{The method can be easily made compatible with using a mini-batch during each iteration.} picked from the local dataset $\ccalD^{i}$. Note that all the nodes begin with a common initialization $\bx^{(i)}_{k, 0} = \bx_k$. After $\tau$ local updates, each node computes the overall update in that period, that is  $\bx^{(i)}_{k, \tau} - \bx_k$, and uploads a quantized update $Q (\bx^{(i)}_{k, \tau} - \bx_k)$ to the parameter server.  
\begin{algorithm}[h!]
\caption{\texttt{FedPAQ}}\label{alg:update}
\begin{algorithmic}[1]
%\Require No. of periods $K$, period length $\tau$, stepsize $\eta_{k,t}$
\For{$k=0,1,\cdots,K-1$}
    \State server picks $r$ nodes $\ccalS_k$ uniformly at random 
    \State server sends $\bx_{k}$ to nodes in $\ccalS_k$
    \For{node $i \in \ccalS_k$}
    \State $\bx^{(i)}_{k,0} \gets \bx_{k}$
    \For{$t=0,1,\cdots,\tau-1$}
%    \State randomly pick a data point $\xi \in \ccalD^{i}$
    \State compute stochastic gradient 
    \State $\tNab f_i (\bx) = \gr \ell (\bx, \xi)$ for a $\xi \in \ccalP^i$
    \State set $\bx^{(i)}_{k,t+1} \gets \bx^{(i)}_{k, t} - \eta_{k,t} \tNab f_i (\bx^{(i)}_{k, t})$
    \EndFor
    \State send $Q (\bx^{(i)}_{k, \tau} - \bx_{k} )$ to the server
    \EndFor
    \State server finds
     $\bx_{k+1} \gets \bx_{k} + \frac{1}{r} \sum_{i \in \ccalS_k} Q (\bx^{(i)}_{k, \tau} - \bx_{k} )$
\EndFor
\end{algorithmic}
\end{algorithm}
The parameter server then aggregates the $r$ received quantized local updates and computes the next model according to 
\vspace{-1mm}
\begin{equation}
    \bx_{k+1} 
    =
    \bx_{k} + \frac{1}{r} \sum_{i \in \ccalS_k} Q \left(\bx^{(i)}_{k, \tau} - \bx_{k} \right),\vspace{-1mm}
\end{equation}
and the procedure is repeated for $K$ periods. The proposed method is formally summarized in Algorithm~\ref{alg:update}.

\vspace{-1mm}
\section{Convergence Analysis}\label{sec:convg}
\vspace{-1mm}
In this section, we present our theoretical results on the guarantees of the \texttt{FedPAQ} method. We first consider the strongly convex setting and state the convergence guarantee of \texttt{FedPAQ} for such losses in Theorem~\ref{thm:1}. Then, in Theorem~\ref{thm:2}, we present the overall complexity of our method for finding a first-order stationary point of the aggregate objective function $f$, when the loss function $\ell$ is non-convex (All proofs are provided in  the supplementary material).  
 Before that, we first mention three customary assumptions required for both convex and non-convex settings.

\vspace{1mm}

\begin{assumption}\label{assump-Q}
The random quantizer $Q(\cdot)$ is unbiased and its variance grows with the squared of $l_2$-norm of its argument, i.e.,
%%%%%%
\vspace{-1mm}
\begin{equation}\label{quant_cond}
    \bE \left[Q(\bx)| \bx \right] = \bx, \quad
    \bE \left[ \norm{Q(\bx)-\bx}^2 | \bx \right] \leq q \norm{\bx}^2,\vspace{-1mm}
\end{equation}
%%%%%%
\vspace{-1mm}
for some positive real constant $q$ and any $\bx \in \reals^{p}$.
\end{assumption}

\vspace{.5mm}

\begin{assumption}\label{assump-smooth}
The loss functions $f_i$ are $L$-smooth with respect to $\bx$, i.e., for any $\bx, \hbx \in \reals^{p}$, we have
%%%%%
$ \norm{\nabla  f_i(\bx) - \gr f_i(\hbx)} \leq L \norm{\bx - \hbx}$.
%%%%%
\end{assumption}

\vspace{.5mm}

\begin{assumption}\label{assump-gr}
Stochastic gradients $\tNab f_i (\bx)$ are unbiased and variance bounded, i.e., 
%%%%%
$\mathbb{E}_{\xi} [ \tNab f_i (\bx)]  =  \gr f_i (\bx)$ and $ \mathbb{E}_{\xi} [ \| \tNab f_i (\bx) - \gr f_i (\bx) \|^2 ]\leq \sigma^2.$
%%%%%
\end{assumption}

\vspace{0mm}

The conditions in Assumption~\ref{assump-Q} ensure that output of quantization is an unbiased estimator of the input with a variance that is proportional to the norm-squared of the input. This condition is satisfied with most common quantization schemes including the low-precision quantizer introduced in Example~1. 
Assumption \ref{assump-smooth} implies that the gradients of local functions $\nabla f_i$ and the aggregated objective function $\nabla f$ are also $L$-Lipschitz continuous.
The conditions in Assumption \ref{assump-gr} on the bias and variance of stochastic gradients are also customary. Note that this is a much weaker assumption compared to the one that uniformly bounds the expected norm of the stochastic gradient.

\vspace{1mm}

\noindent\textbf{Challenges in analyzing the \texttt{FedPAQ} method.} 
Here, we highlight the main theoretical challenges in proving our main results. As outlined in the description of the proposed method, in the $k$-th round of \texttt{FedPAQ}, each participating node $i$ updates its local model for $\tau$ iterations via SGD method in~\eqref{eq:sgd}. Let us focus on a case that we use a constant stepsize for the purpose of this discussion. First consider the naive parallel SGD case which corresponds to $\tau=1$. The updated  local model after $\tau=1$ local update is
\begin{align}
    \bx^{(i)}_{k,\tau} 
    =
    \bx^{(i)}_{k, 0} - \eta \tNab f_i \left(\bx^{(i)}_{k, 0}\right).
\end{align}
Note that $\bx^{(i)}_{k, 0} = \bx_k$ is the parameter server's model sent to the nodes. Since we assume the stochastic gradients are unbiased estimators of the gradient, it yields that the local update $\bx^{(i)}_{k, \tau} - \bx_k$ is an unbiased estimator of $- \eta \gr f (\bx_k)$ for every participating node. Hence, the aggregated updates at the server and the updated model $ \bx_{k+1}$ can be simply related to the current model $ \bx_k$ as one step of parallel SGD. However, this is not the case when the period length $\tau$ is larger than $1$. For instance, in the case that $\tau=2$, the local updated model after $\tau=2$ iterations is
\begin{align}
    \bx^{(i)}_{k,\tau} 
    \!=\!
    \bx_k \!-\! \eta \tNab f_i \left(\bx_k\right) \!-\! \eta \tNab f_i \left(\bx_k \!-\! \eta \tNab f_i \left(\bx_k\right)\right).
\end{align}
Clearly,  $\bx^{(i)}_{k, \tau} - \bx_k$ is not an unbiased estimator of $- \eta \gr f (\bx_k)$ or $- \eta \gr f (\bx_k - \eta \gr f (\bx_k))$. This demonstrates that the aggregated model at server cannot be treated as $\tau$ iterations of parallel SGD, since each local update contains a bias. Indeed, this bias gets propagated when $\tau$ gets larger. For our running example  $\tau=2$, the variance of the bias, i.e. $\bE \| \eta \tNab f_i(\bx_k - \eta \tNab f_i (\bx_k)) \| ^2$ is not uniformly bounded either (Assumption \ref{assump-gr}), which makes the analysis even more challenging compared to the works with bounded gradient assumption (e.g. \citep{stich2018local,yu2019parallel}). % as we do not assume such a strong condition unlike \citep{stich2018local,yu2019parallel}.

\vspace{-1mm}
\subsection{Strongly convex setting}
\vspace{-1mm}

Now we proceed to establish the convergence rate of the proposed \texttt{FedPAQ} method for a federated setting with strongly convex and smooth loss function $\ell$. We first formally state the strong convexity assumption.

\vspace{2mm}
\begin{assumption}\label{assump-convex}
The loss functions $f_i$ are $\mu$-strongly convex, i.e., for any $\bx, \hbx \in \reals^{p}$ we have that  
%%%%%
$   \langle \gr f_i (\bx) - \gr f_i (\hbx), \bx - \hbx  \rangle \geq \mu \norm{\bx - \hbx}^2.
$
%%%%%
\end{assumption}

%This assumption implies that the local functions $f_i$ and the aggregated function $f$ are also $\mu$-strongly convex.

\vspace{2mm}

\begin{theorem} [Strongly convex loss] \label{thm:1}
Consider the sequence of iterates $\bx_k$ at the parameter server generated according to the \texttt{FedPAQ} method outlined in Algorithm~\ref{alg:update}. Suppose the conditions in Assumptions \ref{assump-Q}--\ref{assump-convex} are satisfied. Further, let us define the constant $B_1$ as
\vspace{-1mm}
\begin{equation}
    B_1
    =
    2  L^2 \left(\frac{q}{n}  +  \frac{n-r}{r(n-1)} 4(1+q)  \right),\vspace{-1mm}
\end{equation}
where $q$ is the quantization variance parameter defined in \eqref{quant_cond} and $r$ is the number of active nodes at each round of communication. If we set the stepsize in \texttt{FedPAQ} as $\eta_{k,t} = \eta_k = \nicefrac{4 \mu^{-1}}{k \tau + 1}$, then for any $k\geq k_0$ where $k_0$ is the smallest integer satisfying 
%%%%%%
%%%%%%
\vspace{-1mm}
\begin{equation}\label{eq:k0}
    k_0
    \geq
    4 \max \bigg\{ \frac{L}{\mu}, 4 \left( \frac{B_1}{\mu^2} + 1 \right), \frac{1}{\tau}, \frac{4n}{\mu^2 \tau} \bigg\}, \vspace{-1mm}
\end{equation}
%%%%%%
the expected error $\bE [\| \bx_{k} - \bx^*\|]^2$ is bounded above by
\begin{align}
    &\mathbb{E} \norm{ \bx_{k} - \bx^*}^2
    \leq
    \frac{(k_0 \tau + 1)^2}{(k \tau + 1)^2}  \norm{ \bx_{k_0} - \bx^*}^2 \\
    & \quad +
    C_1 \frac{\tau}{k \tau + 1}
    +
    C_2 \frac{(\tau - 1)^2}{k \tau + 1}
    +
    C_3 \frac{\tau - 1}{(k \tau + 1)^2}, \label{eq:thm1-c}
\end{align}
%%%%%%
%%%%%%
where the constants in \eqref{eq:thm1-c} are defined as
%%%%%%
\begin{align}
   \!\! C_1\!
     &= \!
    \frac{16\sigma^2}{\mu^2n} \!\left(\!1\! + \!2q +\!  8  (1\!+\!q)\frac{n(n\!-\!r)}{r(n\!-\!1)}\! \right)\!,\
    C_2\!
     =\! 
    \frac{16eL^2\sigma^2}{\mu^2n}  ,  \\
    \!C_3\!
    & =
    \frac{256 e L^2\sigma^2}{\mu^4 n} \left(n + 2q + 8(1+q)\frac{n(n-r)}{r(n-1)}   
    \right)  .
\end{align}
%%%%%%
\end{theorem}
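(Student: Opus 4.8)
\noindent\emph{Proof idea.} The plan is to derive a one-period recursion for $e_k := \bE\norm{\bx_k - \bx^*}^2$ and then solve it by induction using the prescribed stepsize. Write the aggregate local update of node $i$ as $\mathbf{p}^{(i)}_k := \bx^{(i)}_{k,\tau} - \bx_k = -\sum_{t=0}^{\tau-1}\eta_{k,t}\,\tNab f_i(\bx^{(i)}_{k,t})$, so that the server step reads $\bx_{k+1} = \bx_k + \frac1r\sum_{i\in\ccalS_k} Q(\mathbf{p}^{(i)}_k)$. Expanding $\norm{\bx_{k+1}-\bx^*}^2$ and taking expectations in the order quantization, then node sampling, then stochastic gradients, I would use Assumption~\ref{assump-Q}: unbiasedness of $Q$ removes the quantization bias, $\bE_Q\norm{Q(\mathbf{p}^{(i)}_k)}^2 \le (1+q)\norm{\mathbf{p}^{(i)}_k}^2$ yields the $(1+q)$ envelope, and the pure quantization variance contributes a $q$-term. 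Since $\ccalS_k$ is a uniform size-$r$ subset, $\bE_{\ccalS_k}[\frac1r\sum_{i\in\ccalS_k}\mathbf{p}^{(i)}_k] = \bar{\mathbf{p}}_k := \frac1n\sum_{i=1}^n\mathbf{p}^{(i)}_k$ while the sampling variance produces the without-replacement factor $\frac{n-r}{r(n-1)}$. Collecting these gives a bound of the schematic form (conditioned on $\bx_k$)
\begin{equation*}
\bE\norm{\bx_{k+1}-\bx^*}^2 \le \norm{\bx_k-\bx^*}^2 + 2\langle \bx_k-\bx^*,\, \bE\bar{\mathbf{p}}_k\rangle + (\text{variance terms}),
\end{equation*}
where the variance terms aggregate the quantization, sampling, and stochastic-gradient contributions.

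The crux, as the authors emphasize, is that for $\tau>1$ the quantity $\bar{\mathbf{p}}_k$ is \emph{not} an unbiased multiple of $\gr f(\bx_k)$. I would control this by writing $\gr f_i(\bx^{(i)}_{k,t}) = \gr f_i(\bx_k) + \big(\gr f_i(\bx^{(i)}_{k,t}) - \gr f_i(\bx_k)\big)$; because the data are i.i.d.\ we have $f_i \equiv f$, so $\frac1n\sum_i \gr f_i(\bx_k) = \gr f(\bx_k)$ exactly and $\bE\bar{\mathbf{p}}_k = -\tau\eta_k \gr f(\bx_k) + (\text{drift})$. The clean part gives the contraction $2\langle \bx_k-\bx^*,\, -\tau\eta_k \gr f(\bx_k)\rangle \le -2\mu\tau\eta_k\norm{\bx_k-\bx^*}^2$ via Assumption~\ref{assump-convex} (using $\gr f(\bx^*)=0$), and the drift part is peeled off with Young's inequality. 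To bound the drift I would establish an auxiliary lemma controlling $\bE\norm{\bx^{(i)}_{k,t} - \bx_k}^2$: unroll the SGD recursion~\eqref{eq:sgd}, apply $L$-smoothness (Assumption~\ref{assump-smooth}) to relate $\gr f_i(\bx^{(i)}_{k,s})$ to $\gr f_i(\bx_k)$, and close a discrete Gronwall-type recursion that yields the factor $e$ and the $(\tau-1)$ powers appearing in $C_2$ and $C_3$. Crucially this step uses only the bounded-variance condition of Assumption~\ref{assump-gr} rather than a uniform gradient bound, which is precisely where the extra difficulty lies.

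Assembling terms, the coefficients multiplying $\norm{\bx_k-\bx^*}^2$ -- namely $\norm{\bar{\mathbf{p}}_k}^2$, the quantization- and sampling-variance contributions bounded through $\norm{\gr f(\bx_k)}\le L\norm{\bx_k-\bx^*}$, and the Young residue -- assemble into $\eta_k^2$-order terms governed by the constant $B_1$, while the $\sigma^2$-terms form the additive noise. I would then pick the free Young parameter and invoke the lower bound~\eqref{eq:k0}, whose conditions $k_0 \gtrsim L/\mu,\ B_1/\mu^2,\ 1/\tau,\ n/(\mu^2\tau)$ guarantee that for $k\ge k_0$ these higher-order positive terms never overwhelm the $-2\mu\tau\eta_k$ contraction. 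Because the prescribed $\eta_{k} = \frac{4\mu^{-1}}{k\tau+1}$ makes $2\mu\tau\eta_k = \frac{8\tau}{k\tau+1}$ strictly larger than the $\frac{2\tau}{k\tau+1}$ correction of $\frac{(k\tau+1)^2}{((k+1)\tau+1)^2}$, there is enough slack to absorb the residue and conclude the net one-period bound
\begin{equation*}
e_{k+1} \le \frac{(k\tau+1)^2}{((k+1)\tau+1)^2}\,e_k + \beta_k .
\end{equation*}

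Finally, the homogeneous factors telescope exactly to $\frac{(k_0\tau+1)^2}{(k\tau+1)^2}$, giving the leading term of~\eqref{eq:thm1-c}. Multiplying through by $(k\tau+1)^2$, summing the weighted noise terms $\sum_{j=k_0}^{k-1}((j+1)\tau+1)^2\beta_j$, and dividing back produces the three rates $\frac{\tau}{k\tau+1}$, $\frac{(\tau-1)^2}{k\tau+1}$, and $\frac{\tau-1}{(k\tau+1)^2}$, with $C_1$ carrying the stochastic-gradient/quantization/sampling noise and $C_2,C_3$ carrying the drift (hence the $(\tau-1)$ factors and the $e$ that trace back to the drift lemma). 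I expect the main obstacle to be exactly the drift lemma together with the careful accounting of which terms scale with $\norm{\bx_k-\bx^*}^2$ versus $\sigma^2$, so that $B_1$ lands inside the contraction coefficient and the $\sigma^2$-terms land in the additive noise.
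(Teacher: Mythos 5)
Your overall architecture coincides with the paper's: peel off the three sources of randomness in the order quantization / node sampling / stochastic gradients using unbiasedness (the paper packages this as an exact Pythagorean decomposition $\bE \norm{\bx_{k+1}-\bx^*}^2=\bE \norm{\bx_{k+1}-\widehat{\bx}_{k+1}}^2+\bE \norm{\widehat{\bx}_{k+1}-\xbar_{k,\tau}}^2+\bE \norm{\xbar_{k,\tau}-\bx^*}^2$, its Lemma~\ref{lemma:1}); obtain the $(1+q)$ envelope and the without-replacement factor $\tfrac{n-r}{r(n-1)}$ exactly as you describe; control the $\tau>1$ bias through a discrete-Gronwall drift lemma that produces the factor $e$ and the $(\tau-1)$ powers; absorb the $\eta_k^2\norm{\bx_k-\bx^*}^2$ terms via the burn-in condition on $k_0$; and solve the resulting recursion by induction so that the homogeneous part telescopes to $(k_0\tau+1)^2/(k\tau+1)^2$. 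All of that is faithful to what the paper does.

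The one place you genuinely diverge is the contraction, and as sketched it does not close under the stated hypotheses. You treat the whole period as a single effective gradient step at $\bx_k$: the cross term gives $-2\mu\tau\eta_k\norm{\bx_k-\bx^*}^2$ by Assumption~\ref{assump-convex}, but the accompanying quadratic term $\norm{\bE\bar{\mathbf{p}}_k}^2\approx\tau^2\eta_k^2\norm{\gr f(\bx_k)}^2\le L^2\tau^2\eta_k^2\norm{\bx_k-\bx^*}^2$ must then be absorbed by that negative term, which forces $\tau\eta_k\lesssim\mu/L^2$. This term carries prefactor $1$, so it is \emph{not} "governed by $B_1$" as you assert ($B_1$ only collects the contributions weighted by $q/n$ and $\tfrac{n-r}{r(n-1)}$), and the condition \eqref{eq:k0} only yields $\tau\eta_k\le 1/L$ and $\tau\eta_k\le\mu/(4(B_1+\mu^2))$; when $q=0$ and $r=n$ (so $B_1=0$) and $L/\mu$ is large, neither implies $\tau\eta_k\le\mu/L^2$, so your one-period coefficient need not contract. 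Measuring the drift relative to $\bx_k$ compounds this, since $\bE\norm{\bx^{(i)}_{k,t}-\bx_k}^2$ itself contains a deterministic $t^2\eta_k^2\norm{\gr f(\bx_k)}^2$ piece that feeds further such terms into the Young residue. The paper sidesteps exactly this by introducing the auxiliary exact-gradient-descent sequence $\beta_{k,t+1}=\beta_{k,t}-\eta_k\gr f(\beta_{k,t})$ with $\beta_{k,0}=\bx_k$: the deterministic part of the period contracts per iteration as $(1-\mu\eta_k)^{\tau}$, so only the per-step stepsize $\eta_k$ (not $\tau\eta_k$) must be $\le\mu/L^2$, and the drift is measured as $\bx^{(i)}_{k,t}-\beta_{k,t}$, which is pure noise of order $\eta_k^2\sigma^2$ with no $\norm{\gr f(\bx_k)}^2$ dependence. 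Without this device (or the sharper cocoercivity inequality for the aggregated step, which you do not invoke), your route would require strengthening the burn-in to roughly $k_0\gtrsim L^2/\mu^2$ and would not recover the theorem with the threshold and constants as stated.
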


%\begin{proof}
%See Section \ref{sec:thm1-proof}.
%\end{proof}

\begin{remark}
Under the same conditions as in Theorem~\ref{thm:1} and for a total number of iterations $T = K \tau \geq k_0 \tau$ we have the following convergence rate
%%%%%%
\vspace{-2mm}
\begin{align}
    &\bE \| \bx_{K} - \bx^*\|^2
    \leq
    \ccalO \bigg( \frac{\tau}{T } \bigg)
    +
    \ccalO \bigg( \frac{\tau^2}{T^2} \bigg) \\
    & \quad  +
    \ccalO \bigg( \frac{(\tau - 1)^2}{T } \bigg)
    +
    \ccalO \bigg( \frac{\tau - 1}{T^2}\bigg).
\end{align}
%%%%%%
As expected, the fastest convergence rate is attained when the contributing nodes synchronize with the parameter server in each iteration, i.e. when $\tau =1$. Theorem \ref{thm:1} however characterizes how large the period length $\tau$ can be picked. In particular, any pick of $\tau = o (\sqrt{T})$ ensures the convergence of the \texttt{FedPAQ} to the global optimal for strongly convex losses.
\end{remark}

\begin{remark}
By setting $\tau=1$, $q=0$ and $r=n$, Theorem \ref{thm:1} recovers the convergence rate of vanilla parallel SGD, i.e., $\ccalO(1/T)$ for strongly-convex losses. Our result is however more general since we remove the uniformly bounded assumption on the norm of stochastic gradient. For $\tau \geq 1$, Theorem \ref{thm:1} does not recover the result in \citep{stich2018local} due to our weaker condition in Assumption \ref{assump-gr}. Nevertheless, the same rate $\ccalO(1/T)$ is guaranteed by \texttt{FedPAQ} for constant values of $\tau$.
\end{remark}

\vspace{-1mm}
\subsection{Non-convex setting}
\vspace{-1mm}

We now present the convergence result of  \texttt{FedPAQ}  for smooth non-convex loss functions.

\begin{theorem} [Non-convex Losses] \label{thm:2}

Consider the sequence of iterates $\bx_k$ at the parameter server generated according to the \texttt{FedPAQ} method outlined in Algorithm~\ref{alg:update}. Suppose the conditions in Assumptions \ref{assump-Q}--\ref{assump-gr} are satisfied. Further, let us define the constant $B_2$ as
%%%
\vspace{-1mm}
\begin{equation}
    B_2 \coloneqq
    \frac{q}{n}
    +
 \frac{4(n-r)}{r(n-1)} (1+q),\vspace{-1mm}
\end{equation}
%%%%%%
where $q$ is the quantization variance parameter defined in \eqref{quant_cond} and $r$ is the number of active nodes at each round. If 
the total number of iterations $T$ and the period length $\tau$ satisfy the following conditions,  
%%%%%%
\vspace{-1mm}
\begin{equation}\label{eq:thm2-cond}
    T\geq 2, \qquad
    \tau
    \leq
    \frac{\sqrt{ B_2^2 + 0.8 } -B_2}{8} \sqrt{T}, \vspace{-1mm}
\end{equation}
%%%%%%
and  we set the stepsize as $\eta_{k,t} = \nicefrac{1}{L\sqrt{T}}$, then the following first-order stationary condition holds
%%%%%%
\begin{align}
    & \frac{1}{T} \sum_{k=0}^{K-1} \sum_{t=0}^{\tau-1} \bE \norm{\gr f (\xbar_{k,t})}^2 \\
    & \quad \leq
    \frac{2L(f (\bx_{0}) - f^*)}{\sqrt{T}}
  +
    N_1 \frac{1}{\sqrt{T}} 
    + 
    N_2 \frac{\tau-1}{T}, \label{eq:thm2}
\end{align}
%%%%%%
where the constants in \eqref{eq:thm2} are defined as 
%%%%%%
\begin{align*}
    N_1
\coloneqq
    (1+q) \frac{\sigma^2}{n} \left(
    1
    +
     \frac{n(n-r)}{r(n-1)}
    \right), \quad \!\!
    N_2
     \coloneqq
\frac{\sigma^2}{n} (n+1).
\end{align*}
%%%%%%

\end{theorem}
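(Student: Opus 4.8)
The plan is to run a smoothness-based descent analysis, but on a \emph{virtual} sequence that tracks the average of all $n$ local models (including the inactive ones) rather than the server iterate $\bx_k$ directly. Concretely, I would define $\xbar_{k,t}=\frac1n\sum_{i=1}^n\bx^{(i)}_{k,t}$, with the convention that every node runs the local recursion \eqref{eq:sgd} from the common start $\xbar_{k,0}=\bx_k$. The key structural fact is that, taking expectation over both the uniform sampling $\ccalS_k$ and the unbiased quantizer $Q$ (Assumption~\ref{assump-Q}), the server update satisfies $\bE[\bx_{k+1}\mid\{\bx^{(i)}_{k,\tau}\}]=\xbar_{k,\tau}$, so $\bx_{k+1}$ is an unbiased estimate of the end-of-period virtual average and $\bx_{k+1}=\xbar_{k+1,0}$. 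This lets me treat the whole run as $T=K\tau$ SGD steps on the virtual sequence, with a zero-mean perturbation injected only at the $K$ synchronization boundaries.

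First I would apply the $L$-smoothness descent lemma (Assumption~\ref{assump-smooth}) to two kinds of consecutive iterates: the within-period steps $\xbar_{k,t}\to\xbar_{k,t+1}=\xbar_{k,t}-\eta_{k,t}\frac1n\sum_i\tNab f_i(\bx^{(i)}_{k,t})$, and the boundary step $\xbar_{k,\tau}\to\bx_{k+1}$. Taking conditional expectations, the cross terms simplify via Assumption~\ref{assump-gr} (unbiased stochastic gradients) and the unbiasedness of $Q$ and of sampling; in the within-period step the inner product becomes $\langle\gr f(\xbar_{k,t}),\frac1n\sum_i\gr f_i(\bx^{(i)}_{k,t})\rangle$, which I would split using $\gr f(\xbar_{k,t})=\frac1n\sum_i\gr f_i(\xbar_{k,t})$ to expose $\norm{\gr f(\xbar_{k,t})}^2$ plus a smoothness-controlled remainder. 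The quadratic terms contribute the gradient noise $\sigma^2/n$ (averaging $n$ i.i.d.\ noises), and—only at each boundary—the quantization variance (factor $q$) and the sampling variance (finite-population factor $\tfrac{n-r}{r(n-1)}$), which assemble exactly into the constant $B_2$.

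The crux, and the main obstacle, is controlling the two places where nontrivial $\tau$-dependence enters \emph{without} a uniform gradient bound (Assumption~\ref{assump-gr} bounds only the variance). The first is the local drift $\frac1n\sum_i\bE\norm{\bx^{(i)}_{k,t}-\xbar_{k,t}}^2$, which appears when I replace $\gr f_i(\bx^{(i)}_{k,t})$ by $\gr f_i(\xbar_{k,t})$ via $L$-smoothness. Because all nodes restart each period from the common $\bx_k$ and the data are i.i.d., the leading contribution is the accumulated per-node gradient noise, giving $\frac1n\sum_i\bE\norm{\bx^{(i)}_{k,t}-\xbar_{k,t}}^2\lesssim\eta^2\sigma^2 t$ after closing a short recursion in which the gradient-difference part is itself bounded by $L^2$ times the drift and hence absorbed for small $\eta$; summed over a period this produces the $N_2(\tau-1)/T$ term. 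The second, more delicate place is the boundary: the total displacement $\bx^{(i)}_{k,\tau}-\bx_k=-\eta\sum_t\tNab f_i(\bx^{(i)}_{k,t})$ has squared norm bounded, via Cauchy--Schwarz and the variance bound, by a multiple of $\eta^2\tau\sum_t\norm{\gr f(\xbar_{k,t})}^2$ plus noise, and the quantization/sampling variance scales this displacement by $B_2$, thereby feeding the gradient-norm sum back into the descent inequality with a coefficient depending on both $\tau$ and $B_2$.

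Finally I would telescope the combined descent inequality over all $K$ periods and all $\tau$ within-period steps, bound the left endpoint by $\bE f(\bx_K)\ge f^*$ so the telescoped gap is at most $f(\bx_0)-f^*$, and collect every $\norm{\gr f(\xbar_{k,t})}^2$ term. After moving the drift- and boundary-induced feedback to the left, the surviving coefficient of the gradient sum is $\eta$ times a factor of the form $1-16B_2(\tau/\sqrt T)-64(\tau/\sqrt T)^2$ once $\eta=1/(L\sqrt T)$ is substituted; requiring this factor to stay nonnegative is exactly the quadratic inequality $64u^2+16B_2u\le0.8$ in $u=\tau/\sqrt T$, whose nonnegative solution is the stated bound $\tau\le\frac{\sqrt{B_2^2+0.8}-B_2}{8}\sqrt T$ of \eqref{eq:thm2-cond}. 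Dividing the telescoped inequality by $T$ and by this positive coefficient, and substituting $\eta=1/(L\sqrt T)$, then collapses the descent gap into $2L(f(\bx_0)-f^*)/\sqrt T$, the boundary/quantization/sampling constants into $N_1/\sqrt T$, and the residual drift noise into $N_2(\tau-1)/T$, yielding the claimed $\ccalO(1/\sqrt T)$ stationarity bound \eqref{eq:thm2}.
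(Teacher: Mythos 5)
Your proposal is correct and follows essentially the same route as the paper: a smoothness descent analysis on the virtual average $\xbar_{k,t}$ within each period, an unbiased zero-mean perturbation at each synchronization boundary whose variance contributes the quantization factor $q/n$ and the finite-population sampling factor $\tfrac{n-r}{r(n-1)}$ assembling into $B_2$, drift and boundary-displacement terms that feed $\sum_t\norm{\gr f(\xbar_{k,t})}^2$ back into the descent inequality, and a telescoping step whose surviving coefficient yields the quadratic condition on $\tau/\sqrt{T}$. The only cosmetic difference is that the paper splits the boundary perturbation into two stages (sampling error $\bx_{k+1}-\widehat{\bx}_{k+1}$ and quantization error $\widehat{\bx}_{k+1}-\xbar_{k,\tau}$, Lemmas~\ref{lemma:noncnvx-3} and \ref{lemma:noncnvx-4}) rather than treating it as a single zero-mean injection, which changes nothing substantive.
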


%\begin{proof}
%See Section \ref{sec:thm2-proof}.
%\end{proof}

\begin{remark}
The result in Theorem \ref{thm:2} implies the following order-wise rate 
%%%%%%
\begin{align*}
    \frac{1}{T} \sum_{k=0}^{K-1} \sum_{t=0}^{\tau-1} \bE \norm{\gr f (\xbar_{k,t})}^2
    \leq
    \ccalO \left( \frac{1}{\sqrt{T}} \right) 
    + 
    \ccalO \left(\frac{\tau\!-\!1}{T} \right).
\end{align*}
%%%%%%
Clearly, the fastest convergence rate is achieved for the smallest possible period length, i.e., $\tau=1$. This however implies that the edge nodes communicate with the parameter server in each iteration, i.e. $T$ rounds of communications which is costly. On the other hand, the conditions \eqref{eq:thm2-cond} in Theorem \ref{thm:2} allow the period length $\tau$ to grow up to $\ccalO(\sqrt{T})$ which results in an overall convergence rate of $\ccalO(1/\sqrt{T})$ in reaching an stationary point. This result shows that with only $\ccalO(\sqrt{T})$ rounds of communication \texttt{FedPAQ} can still ensure the convergence rate of $\ccalO(1/\sqrt{T})$ for non-convex losses.
\end{remark}

\begin{remark}
Theorem \ref{thm:2} recovers the convergence rate of the vanilla parallel SGD \citep{yu2019parallel} for non-convex losses as a special case of $\tau=1$, $q=0$ and $r=n$. Nevertheless, we remove the uniformly bounded assumption on the norm of the stochastic gradient in our theoretical analysis. We also recover the result in \citep{wang2018cooperative} when there is no quatization $q=0$ and we have a full device participation $r=n$.
\end{remark}

It is worth mentioning that for Theorems \ref{thm:1} and \ref{thm:2}, one can use a batch of size $m$ for each local SGD update and the same results hold by changing $\nicefrac{\sigma^2}{n}$ to $\nicefrac{\sigma^2}{mn}$.

\vspace{-1mm}
\section{Numerical Results and Discussions}\label{sec:numerical}
\vspace{-1mm}

The proposed \texttt{FedPAQ} method reduces the communication load by employing three modules: periodic averaging, partial node participation, and quantization. This communication reduction however comes with a cost in reducing the convergence accuracy and hence requiring more iterations of the training, which we characterized in Theorems \ref{thm:1} and \ref{thm:2}. In this section, we empirically study this communication-computation trade-off and evaluate \texttt{FedPAQ} in comparison to other benchmarks. To evaluate the total cost of a method, we first need to specifically model such cost. We consider the total training time as the cost objective which consists of communication and computation time \citep{berahas2018balancing,reisizadeh2019robust}. Consider $T$ iterations of training with \texttt{FedPAQ} that consists of $K=T/\tau$ rounds of communication. In each round, $r$ workers compute $\tau$ iterations of SGD with batchsize $B$ and send a quantized vector of size $p$ to the server. 

\begin{figure*}[t!]
\centering
    \includegraphics[width=0.245\linewidth]{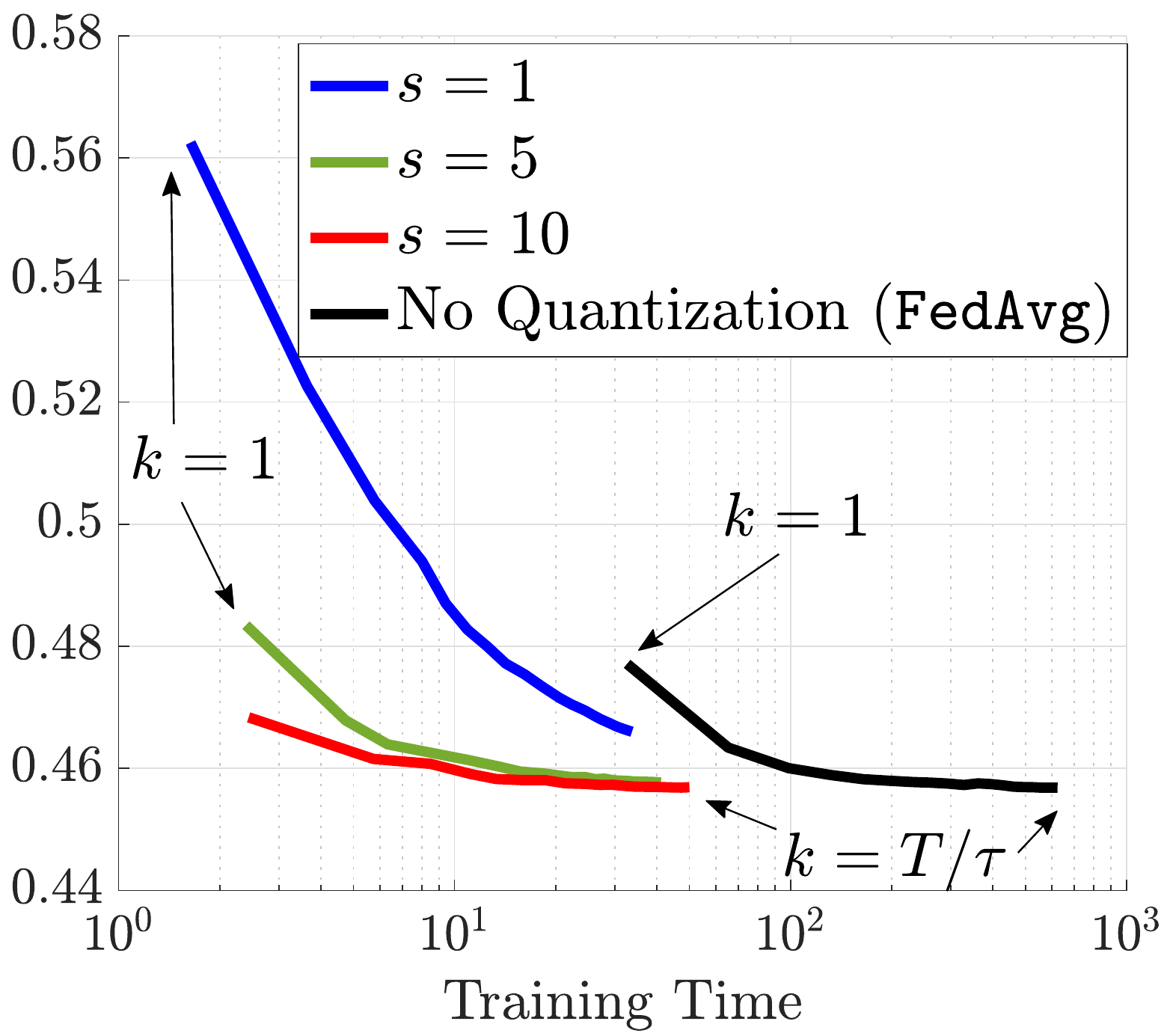}
    \includegraphics[width=0.245\linewidth]{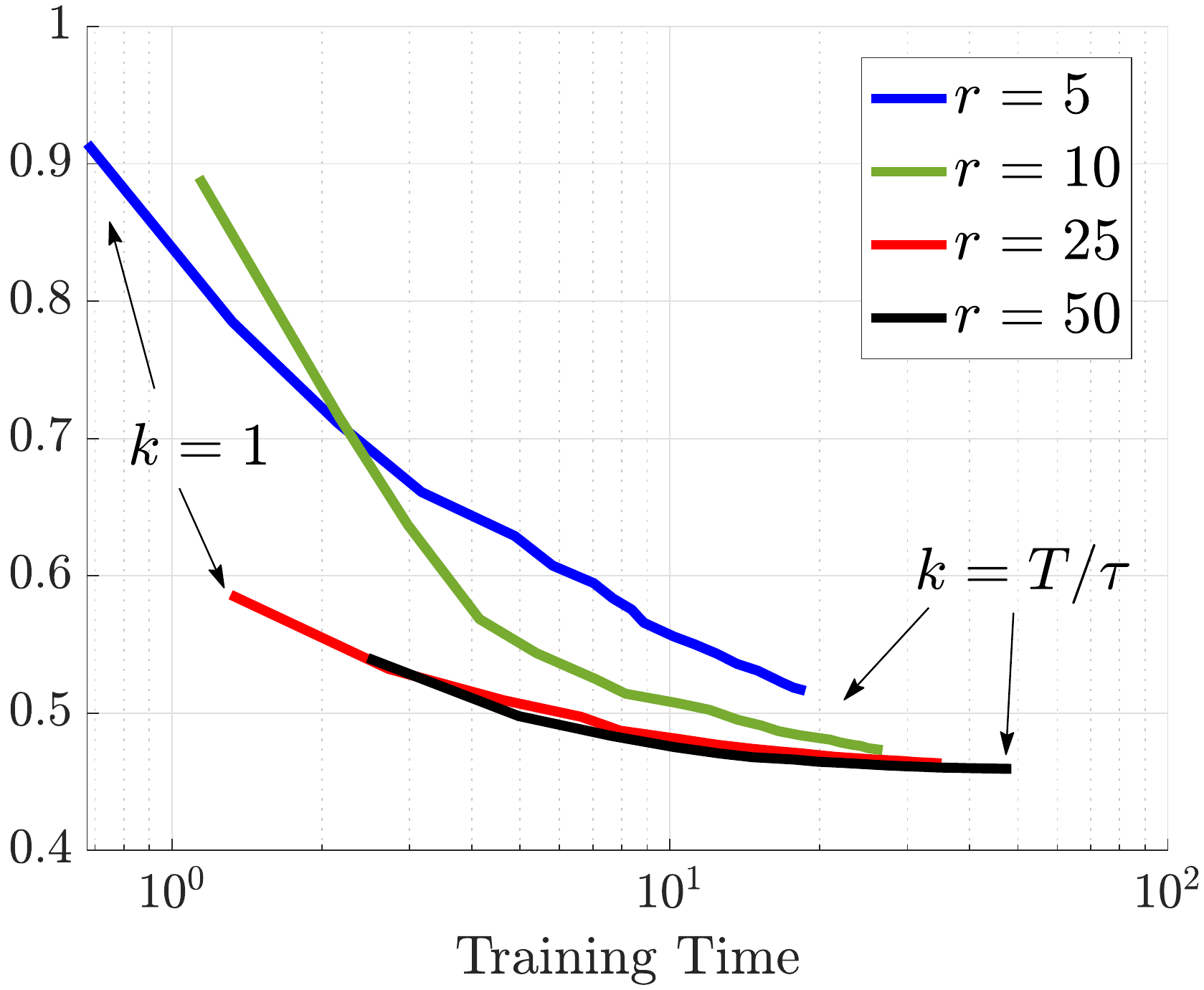}
    \includegraphics[width=0.245\linewidth]{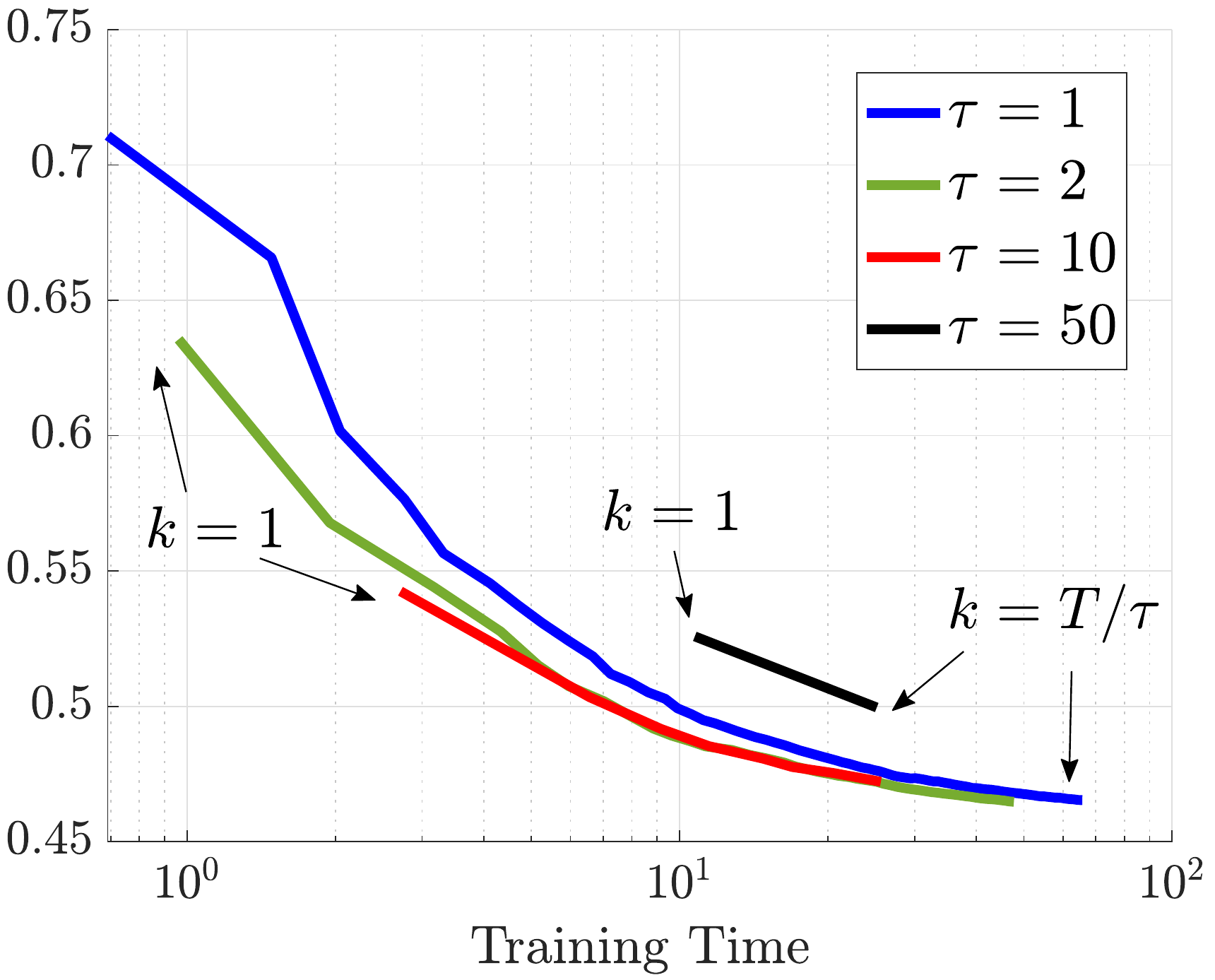}
    \includegraphics[width=0.245\linewidth]{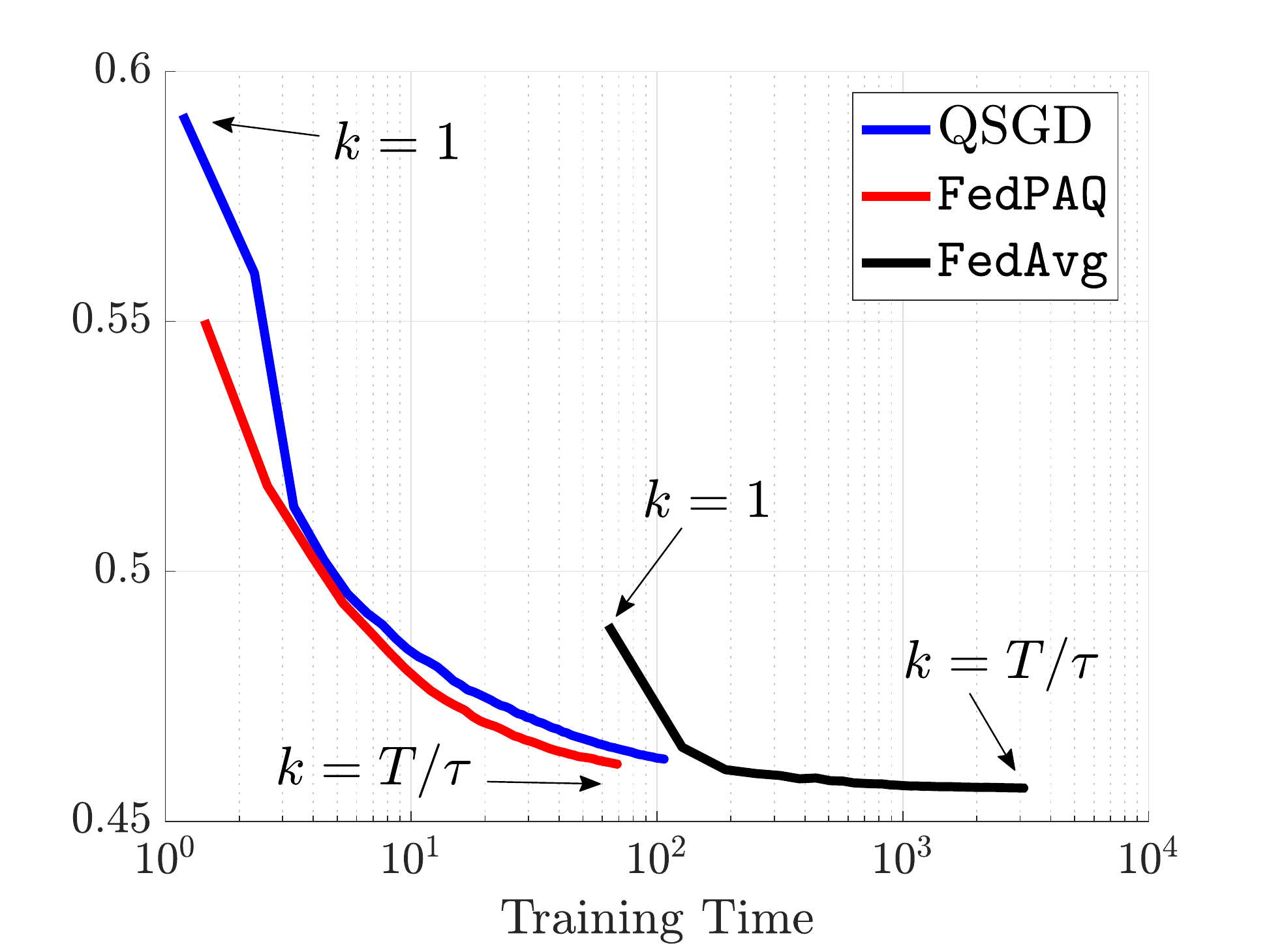}
    %\caption{Logistic Regression - MNIST}%\vspace{-1mm}
    \includegraphics[width=0.245\linewidth]{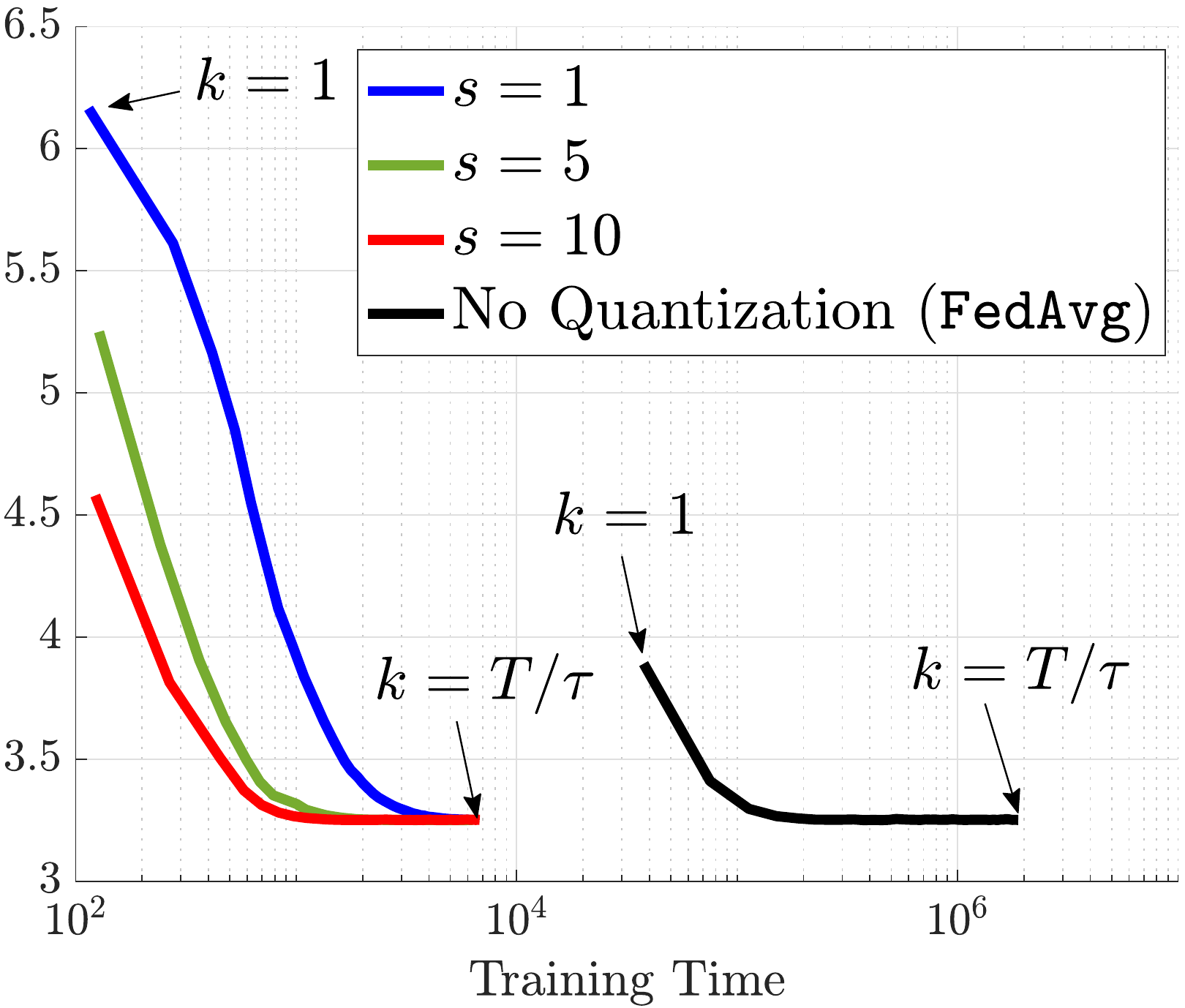}
    \includegraphics[width=0.245\linewidth]{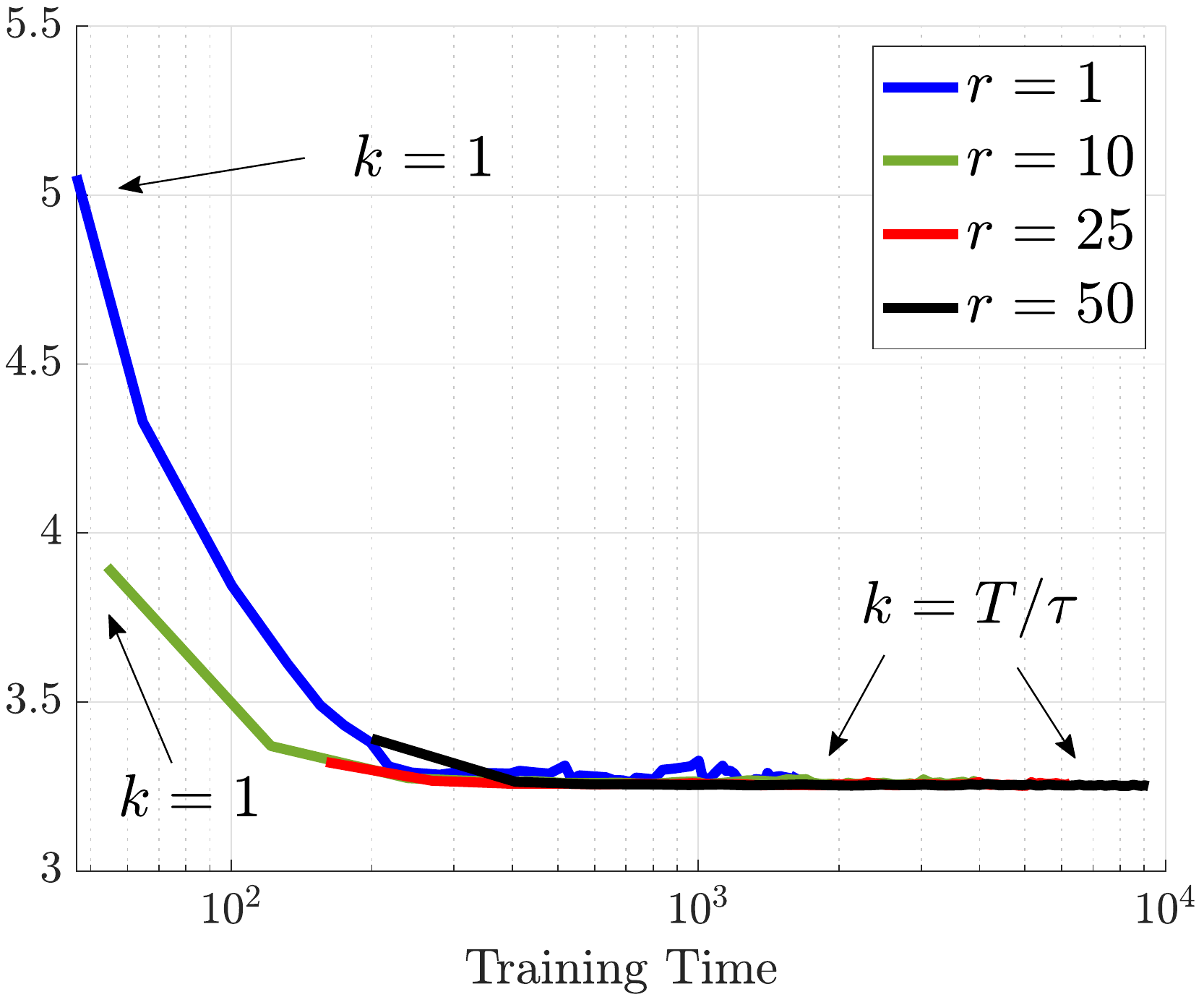}
    \includegraphics[width=0.245\linewidth]{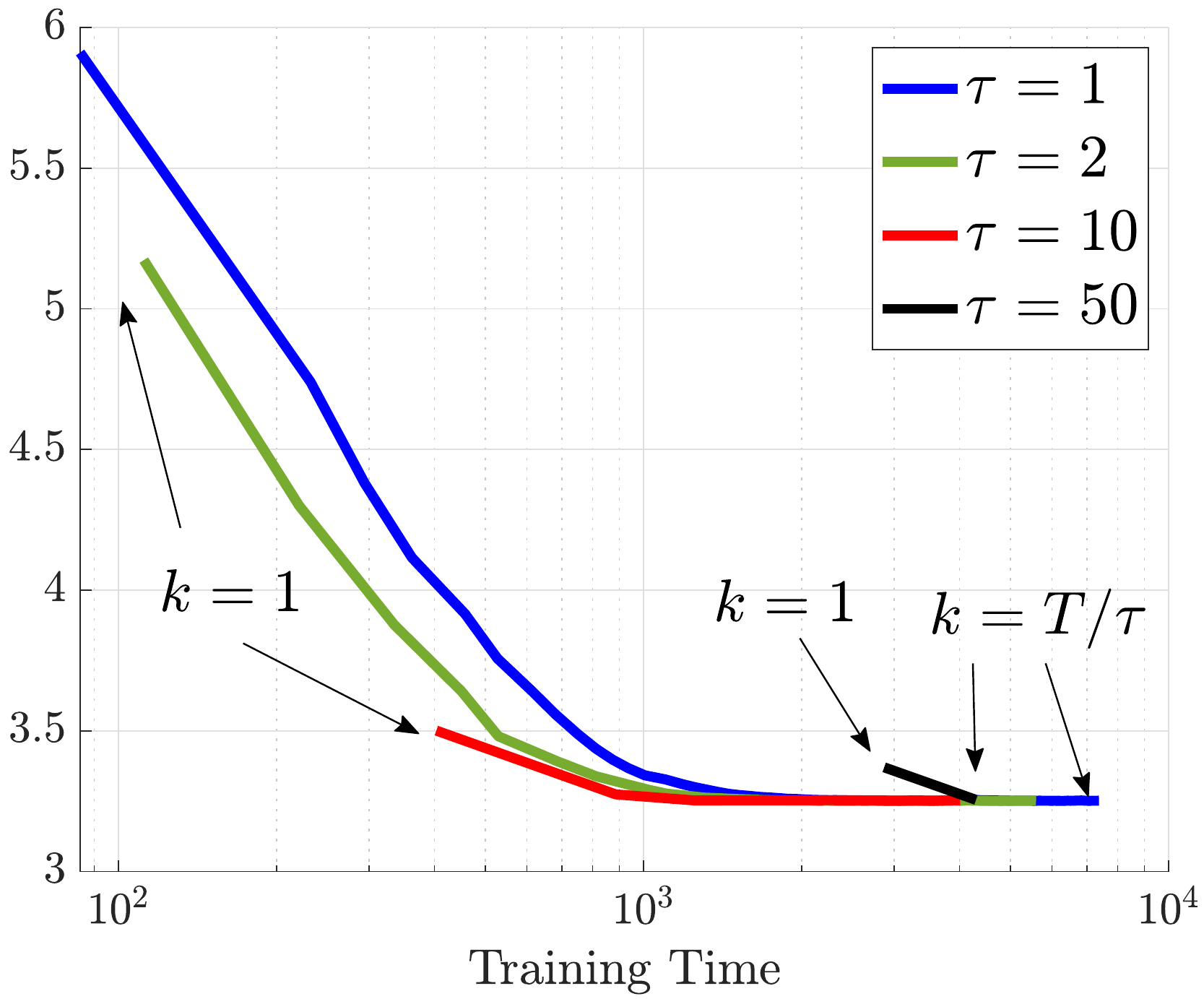}
    \includegraphics[width=0.245\linewidth]{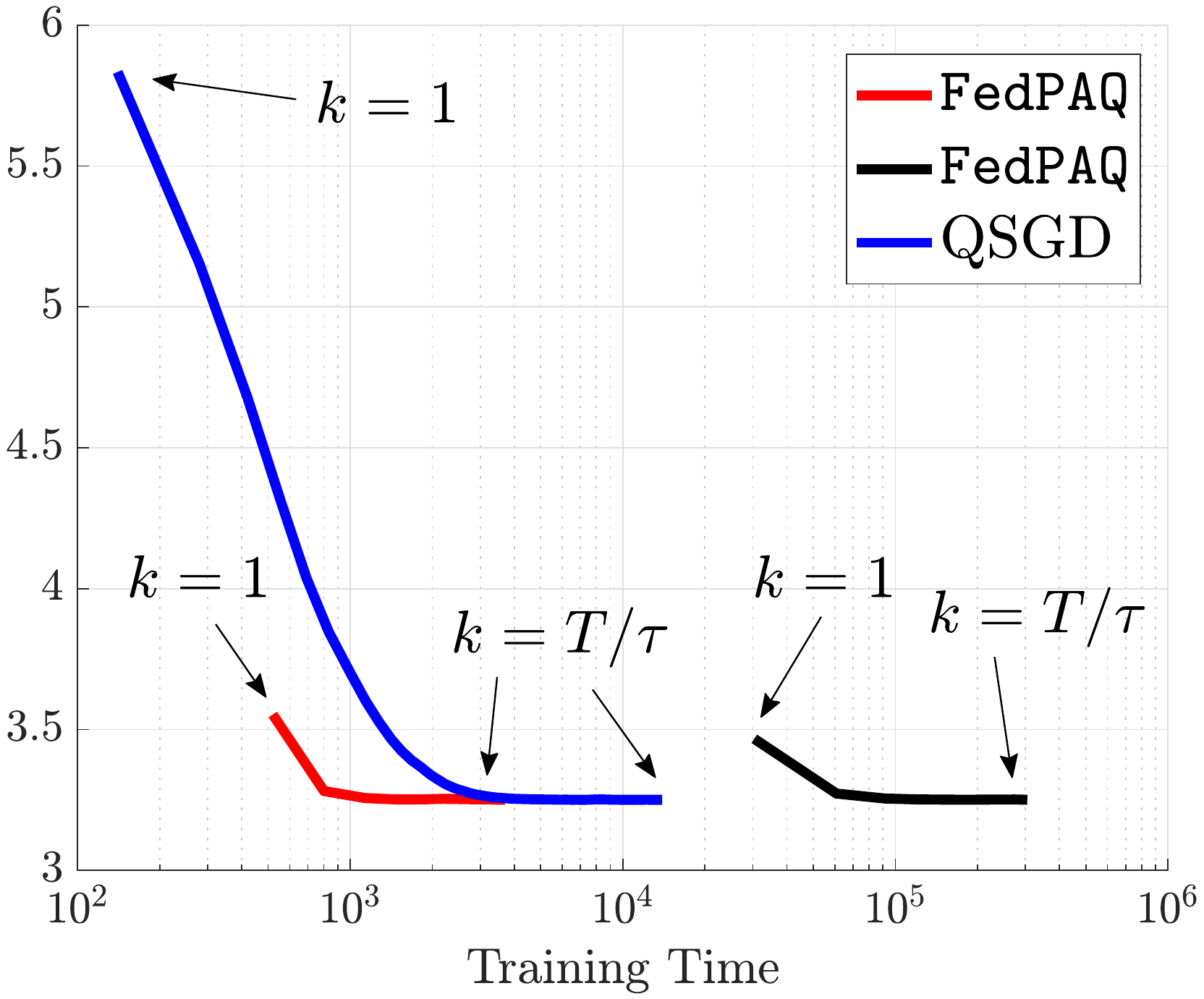}
    \vspace{-5mm}
    \caption{Training Loss vs. Training Time: Logistic Regression on MNIST (top). Neural Network on CIFAR-10 (bottom).}\vspace{-3mm}
    \label{fig:MNIST-and-cifar-92Kpar}
\end{figure*}

\noindent \textbf{Communication time.} We fix a bandwidth $\texttt{BW}$ and define the communication time in each round as the total number of uploaded bits divided by $\texttt{BW}$. Total number of bits in each round is $r \cdot |Q(p,s)|$, where $|Q(p,s)|$ denotes the number of bits required to encode a quantized vector of dimension $p$ according to a specific quantizer with $s$ levels. In our simulations, we use the low-precision quantizer described in Example \ref{ex:lp} and  assume it takes $pF$ bits to represent an unquantized vector of length $p$, where $F$ is typically $32$ bits.

\noindent \textbf{Computation time.} We consider the well-known shifted-exponential model for gradient computation time \citep{lee2017speeding}. In particular, we assume that for any node, computing the gradients in a period with $\tau$ iterations and using batchsize $B$ takes a deterministic shift $\tau \cdot B \cdot \texttt{shift}$ plus a random exponential time with mean value $\tau \cdot B \cdot \texttt{scale}^{-1}$, where $\texttt{shift}$ and $\texttt{scale}$ are respectively shift and scale parameters of the shifted-exponential distribution. Total computation time of each round is then the largest local computation time among the $r$ contributing nodes.
We also define a communication-computation ratio
\begin{align}
    \frac{C_{\texttt{comm}}}{C_{\texttt{comp}}}
    =
    \frac{pF/\texttt{BW}}{\texttt{shift}+1/\texttt{scale}} \nonumber
\end{align}
as the communication time for a length-$p$-vector over the average computation time for one gradient vector. This ratio captures the relative cost of communication and computation, and since communication is a major bottleneck, we have ${C_{\texttt{comm}}}/{C_{\texttt{comp}}} \gg 1$. In all of our experiments, we use batchsize $B=10$ and finely tune the stepsize's coefficient. 

\vspace{-2mm}
\subsection{Logistic Regression on MNIST}
\vspace{-2mm}

In Figure \ref{fig:MNIST-and-cifar-92Kpar}, the top four plots demonstrate the training time for a regularized logistic regression problem over MNIST dataset (`0' and `8' digits) for $T=100$ iterations. The network has $n=50$ nodes each loaded with $200$ samples. We set $C_{\texttt{comm}}/C_{\texttt{comp}} = 100/1$ to capture the communication bottleneck. Among the three parameters quantization levels $s$, number of active nodes in each round $r$, and period length $\tau$, we fix two and vary the third one. First plot demonstrates the relative training loss for different quantization levels $s \in \{1,5,10\}$ and the case with no quantization which corresponds to the \texttt{FedAvg} method \citep{mcmahan2016communication}. The other two parameters are fixed to $(\tau,r) = (5,25)$. Each curve shows the training time versus the achieved training loss for the aggregated model at the server for each round $k=1,\cdots,T/\tau$. In the second plot, $(s,\tau)=(1,5)$ are fixed. The third plot demonstrates the effect of period length $\tau$ in the communication-computation tradeoff. As demonstrated, after $T/\tau$ rounds, smaller choices for $\tau$ (e.g. $\tau=1,2$) result in slower convergence while the larger ones (e.g. $\tau=50$) run faster though providing less accurate models. Here $\tau=10$ is the optimal choice. The last plot compares the training time of \texttt{FedPAQ} with two other benchmarks \texttt{FedAvg} and QSGD. For both \texttt{FedPAQ} and  \texttt{FedAvg}, we set $\tau=2$ while \texttt{FedPAQ} and QSGD use quantization with $s=1$ level. All three methods use $r=n=50$ nodes in each round.

\vspace{-2mm}
\subsection{Neural Network training over CIFAR-10} \label{sec:numerical-cifar10}
\vspace{-2mm}

We conduct another set of numerical experiments to evaluate the performance of \texttt{FedPAQ} on non-convex and smooth objectives. Here we train a neural network with four hidden layers consisting of $n=50$ nodes and more thatn $92$K parameters, where we use $10$K samples from CIFAR-10 dataset with $10$ labels. Since models are much larger than the previous setup, we increase the communication-computation ratio to $C_{\texttt{comm}}/C_{\texttt{comp}} = 1000/1$ to better capture the communication bottleneck for large models. The bottom four plots in Figure~\ref{fig:MNIST-and-cifar-92Kpar} demonstrate the training loss over time for $T=100$ iterations. In the first plot, $(\tau,r)=(2,25)$ are fixed and we vary the quantization levels. The second plot shows the effect of $r$ while $(s,\tau)=(1,2)$. The communication-computation tradeoff in terms of period length $\tau$ is demonstrated in the third plot, where picking $\tau=10$ turns out to attain the fastest convergence. Lastly, we compare \texttt{FedPAQ} with other benchmarks in the forth plot. Here, we set $(s,r,\tau)=(1,20,10)$ in \texttt{FedPAQ}, $(r,\tau)=(20,10)$ in \texttt{FedAvg} and $(s,r,\tau)=(1,50,1)$ for QSGD.% where \texttt{FedPAQ} results in the smallest training time.

\vspace{-3mm}
\section{Conclusion}\label{sec:conclusion}
\vspace{-3mm}
In this paper, we addressed some of  the communication and scalability challenges of federated learning and proposed \texttt{FedPAQ}, a communication-efficient federated learning method with provable performance guarantees. \texttt{FedPAQ} is based on three modules: (1) periodic averaging in which each edge node performs local iterative updates; (2) partial node participation which captures the random availability of the edge nodes; and (3) quantization in which each model is quantized before being uploaded to the server. We provided rigorous analysis for our proposed method for two general classes of strongly-convex and non-convex losses. We further provided numerical results evaluating the performance of \texttt{FedPAQ}, and discussing the trade-off between communication and computation.  %Our analysis lets a few customary yet mild assumptions to hold, which also yields theoretical challenges.

%\newpage

%{\small{
%\newpage
\bibliography{ref}

\begin{thebibliography}{44}
\providecommand{\natexlab}[1]{#1}
\providecommand{\url}[1]{\texttt{#1}}
\expandafter\ifx\csname urlstyle\endcsname\relax
  \providecommand{\doi}[1]{doi: #1}\else
  \providecommand{\doi}{doi: \begingroup \urlstyle{rm}\Url}\fi

\bibitem[Agarwal et~al.(2018)Agarwal, Suresh, Yu, Kumar, and
  McMahan]{agarwal2018cpsgd}
Naman Agarwal, Ananda~Theertha Suresh, Felix Xinnan~X Yu, Sanjiv Kumar, and
  Brendan McMahan.
\newblock cpsgd: Communication-efficient and differentially-private distributed
  sgd.
\newblock In \emph{Advances in Neural Information Processing Systems}, pages
  7564--7575, 2018.

\bibitem[Alistarh et~al.(2017)Alistarh, Grubic, Li, Tomioka, and
  Vojnovic]{alistarh2017qsgd}
Dan Alistarh, Demjan Grubic, Jerry Li, Ryota Tomioka, and Milan Vojnovic.
\newblock Qsgd: Communication-efficient sgd via gradient quantization and
  encoding.
\newblock In \emph{Advances in Neural Information Processing Systems}, pages
  1709--1720, 2017.

\bibitem[Berahas et~al.(2018)Berahas, Bollapragada, Keskar, and
  Wei]{berahas2018balancing}
Albert Berahas, Raghu Bollapragada, Nitish~Shirish Keskar, and Ermin Wei.
\newblock Balancing communication and computation in distributed optimization.
\newblock \emph{IEEE Transactions on Automatic Control}, 2018.

\bibitem[Bernstein et~al.(2018)Bernstein, Wang, Azizzadenesheli, and
  Anandkumar]{bernstein2018signsgd}
Jeremy Bernstein, Yu-Xiang Wang, Kamyar Azizzadenesheli, and Anima Anandkumar.
\newblock signsgd: Compressed optimisation for non-convex problems.
\newblock \emph{arXiv preprint arXiv:1802.04434}, 2018.

\bibitem[Bonawitz et~al.(2019)Bonawitz, Eichner, Grieskamp, Huba, Ingerman,
  Ivanov, Kiddon, Konecny, Mazzocchi, McMahan, et~al.]{bonawitz2019towards}
Keith Bonawitz, Hubert Eichner, Wolfgang Grieskamp, Dzmitry Huba, Alex
  Ingerman, Vladimir Ivanov, Chloe Kiddon, Jakub Konecny, Stefano Mazzocchi,
  H~Brendan McMahan, et~al.
\newblock Towards federated learning at scale: System design.
\newblock \emph{arXiv preprint arXiv:1902.01046}, 2019.

\bibitem[Bottou and Bousquet(2008)]{bottou2008tradeoffs}
L{\'e}on Bottou and Olivier Bousquet.
\newblock The tradeoffs of large scale learning.
\newblock In \emph{Advances in neural information processing systems}, pages
  161--168, 2008.

\bibitem[Chen et~al.(2017)Chen, Su, and Xu]{chen2017distributed}
Yudong Chen, Lili Su, and Jiaming Xu.
\newblock Distributed statistical machine learning in adversarial settings:
  Byzantine gradient descent.
\newblock \emph{Proceedings of the ACM on Measurement and Analysis of Computing
  Systems}, 1\penalty0 (2):\penalty0 44, 2017.

\bibitem[Doan et~al.(2018)Doan, Maguluri, and Romberg]{doan2018accelerating}
Thinh~T Doan, Siva~Theja Maguluri, and Justin Romberg.
\newblock Accelerating the convergence rates of distributed subgradient methods
  with adaptive quantization.
\newblock \emph{arXiv preprint arXiv:1810.13245}, 2018.

\bibitem[Duchi et~al.(2014)Duchi, Jordan, and Wainwright]{duchi2014privacy}
John~C Duchi, Michael~I Jordan, and Martin~J Wainwright.
\newblock Privacy aware learning.
\newblock \emph{Journal of the ACM (JACM)}, 61\penalty0 (6):\penalty0 38, 2014.

\bibitem[Dutta et~al.(2016)Dutta, Cadambe, and Grover]{dutta2016short}
Sanghamitra Dutta, Viveck Cadambe, and Pulkit Grover.
\newblock Short-dot: Computing large linear transforms distributedly using
  coded short dot products.
\newblock In \emph{Advances In Neural Information Processing Systems}, pages
  2092--2100, 2016.

\bibitem[Ghosh et~al.(2019)Ghosh, Hong, Yin, and Ramchandran]{ghosh2019robust}
Avishek Ghosh, Justin Hong, Dong Yin, and Kannan Ramchandran.
\newblock Robust federated learning in a heterogeneous environment.
\newblock \emph{arXiv preprint arXiv:1906.06629}, 2019.

\bibitem[Guha et~al.(2019)Guha, Talwlkar, and Smith]{guha2019one}
Neel Guha, Ameet Talwlkar, and Virginia Smith.
\newblock One-shot federated learning.
\newblock \emph{arXiv preprint arXiv:1902.11175}, 2019.

\bibitem[Huang et~al.(2018)Huang, Yin, Fu, Zhang, Deng, and
  Liu]{huang2018loadaboost}
Li~Huang, Yifeng Yin, Zeng Fu, Shifa Zhang, Hao Deng, and Dianbo Liu.
\newblock Loadaboost: Loss-based adaboost federated machine learning on medical
  data.
\newblock \emph{arXiv preprint arXiv:1811.12629}, 2018.

\bibitem[Koloskova et~al.(2019)Koloskova, Stich, and
  Jaggi]{koloskova2019decentralized}
Anastasia Koloskova, Sebastian~U Stich, and Martin Jaggi.
\newblock Decentralized stochastic optimization and gossip algorithms with
  compressed communication.
\newblock \emph{arXiv preprint arXiv:1902.00340}, 2019.

\bibitem[Kone{\v{c}}n{\`y} et~al.(2016)Kone{\v{c}}n{\`y}, McMahan, Yu,
  Richt{\'a}rik, Suresh, and Bacon]{konevcny2016federated}
Jakub Kone{\v{c}}n{\`y}, H~Brendan McMahan, Felix~X Yu, Peter Richt{\'a}rik,
  Ananda~Theertha Suresh, and Dave Bacon.
\newblock Federated learning: Strategies for improving communication
  efficiency.
\newblock \emph{arXiv preprint arXiv:1610.05492}, 2016.

\bibitem[Lalitha et~al.(2019)Lalitha, Kilinc, Javidi, and
  Koushanfar]{lalitha2019peer}
Anusha Lalitha, Osman~Cihan Kilinc, Tara Javidi, and Farinaz Koushanfar.
\newblock Peer-to-peer federated learning on graphs.
\newblock \emph{arXiv preprint arXiv:1901.11173}, 2019.

\bibitem[Lee et~al.(2017)Lee, Lam, Pedarsani, Papailiopoulos, and
  Ramchandran]{lee2017speeding}
Kangwook Lee, Maximilian Lam, Ramtin Pedarsani, Dimitris Papailiopoulos, and
  Kannan Ramchandran.
\newblock Speeding up distributed machine learning using codes.
\newblock \emph{IEEE Transactions on Information Theory}, 64\penalty0
  (3):\penalty0 1514--1529, 2017.

\bibitem[Lee et~al.(2018)Lee, Lam, Pedarsani, Papailiopoulos, and
  Ramchandran]{lee2018speeding}
Kangwook Lee, Maximilian Lam, Ramtin Pedarsani, Dimitris Papailiopoulos, and
  Kannan Ramchandran.
\newblock Speeding up distributed machine learning using codes.
\newblock \emph{IEEE Transactions on Information Theory}, 64\penalty0
  (3):\penalty0 1514--1529, 2018.

\bibitem[Li et~al.(2019{\natexlab{a}})Li, Sahu, Talwalkar, and
  Smith]{li2019federated}
Tian Li, Anit~Kumar Sahu, Ameet Talwalkar, and Virginia Smith.
\newblock Federated learning: Challenges, methods, and future directions.
\newblock \emph{arXiv preprint arXiv:1908.07873}, 2019{\natexlab{a}}.

\bibitem[Li et~al.(2019{\natexlab{b}})Li, Huang, Yang, Wang, and
  Zhang]{li2019convergence}
Xiang Li, Kaixuan Huang, Wenhao Yang, Shusen Wang, and Zhihua Zhang.
\newblock On the convergence of fedavg on non-iid data.
\newblock \emph{arXiv preprint arXiv:1907.02189}, 2019{\natexlab{b}}.

\bibitem[Lin et~al.(2018)Lin, Stich, Patel, and Jaggi]{lin2018don}
Tao Lin, Sebastian~U Stich, Kumar~Kshitij Patel, and Martin Jaggi.
\newblock Don't use large mini-batches, use local sgd.
\newblock \emph{arXiv preprint arXiv:1808.07217}, 2018.

\bibitem[McMahan and Ramage(2017)]{mcmahan2017GoogleFed}
Brendan McMahan and Daniel Ramage.
\newblock Federated learning: Collaborative machine learning without
  centralized training data.
\newblock
  \url{https://ai.googleblog.com/2017/04/federated-learning-collaborative.html},
  2017.
\newblock Accessed: 2019-09-13.

\bibitem[McMahan et~al.(2016)McMahan, Moore, Ramage, Hampson,
  et~al.]{mcmahan2016communication}
H~Brendan McMahan, Eider Moore, Daniel Ramage, Seth Hampson, et~al.
\newblock Communication-efficient learning of deep networks from decentralized
  data.
\newblock \emph{arXiv preprint arXiv:1602.05629}, 2016.

\bibitem[McMahan et~al.(2017)McMahan, Ramage, Talwar, and
  Zhang]{mcmahan2017learning}
H~Brendan McMahan, Daniel Ramage, Kunal Talwar, and Li~Zhang.
\newblock Learning differentially private recurrent language models.
\newblock \emph{arXiv preprint arXiv:1710.06963}, 2017.

\bibitem[Mei et~al.(2018)Mei, Bai, Montanari, et~al.]{mei2018landscape}
Song Mei, Yu~Bai, Andrea Montanari, et~al.
\newblock The landscape of empirical risk for nonconvex losses.
\newblock \emph{The Annals of Statistics}, 46\penalty0 (6A):\penalty0
  2747--2774, 2018.

\bibitem[Murty and Kabadi(1987)]{murty1987some}
Katta~G Murty and Santosh~N Kabadi.
\newblock Some np-complete problems in quadratic and nonlinear programming.
\newblock \emph{Mathematical programming}, 39\penalty0 (2):\penalty0 117--129,
  1987.

\bibitem[Nichol et~al.(2018)Nichol, Achiam, and Schulman]{nichol2018first}
Alex Nichol, Joshua Achiam, and John Schulman.
\newblock On first-order meta-learning algorithms.
\newblock \emph{arXiv preprint arXiv:1803.02999}, 2018.

\bibitem[Reisizadeh et~al.(2019{\natexlab{a}})Reisizadeh, Mokhtari, Hassani,
  and Pedarsani]{reisizadeh2019exact}
Amirhossein Reisizadeh, Aryan Mokhtari, Hamed Hassani, and Ramtin Pedarsani.
\newblock An exact quantized decentralized gradient descent algorithm.
\newblock \emph{IEEE Transactions on Signal Processing}, 67\penalty0
  (19):\penalty0 4934--4947, 2019{\natexlab{a}}.

\bibitem[Reisizadeh et~al.(2019{\natexlab{b}})Reisizadeh, Prakash, Pedarsani,
  and Avestimehr]{reisizadeh2019codedreduce}
Amirhossein Reisizadeh, Saurav Prakash, Ramtin Pedarsani, and Amir~Salman
  Avestimehr.
\newblock Codedreduce: A fast and robust framework for gradient aggregation in
  distributed learning.
\newblock \emph{arXiv preprint arXiv:1902.01981}, 2019{\natexlab{b}}.

\bibitem[Reisizadeh et~al.(2019{\natexlab{c}})Reisizadeh, Taheri, Mokhtari,
  Hassani, and Pedarsani]{reisizadeh2019robust}
Amirhossein Reisizadeh, Hossein Taheri, Aryan Mokhtari, Hamed Hassani, and
  Ramtin Pedarsani.
\newblock Robust and communication-efficient collaborative learning.
\newblock \emph{arXiv preprint arXiv:1907.10595}, 2019{\natexlab{c}}.

\bibitem[Sahu et~al.(2018)Sahu, Li, Sanjabi, Zaheer, Talwalkar, and
  Smith]{sahu2018convergence}
Anit~Kumar Sahu, Tian Li, Maziar Sanjabi, Manzil Zaheer, Ameet Talwalkar, and
  Virginia Smith.
\newblock On the convergence of federated optimization in heterogeneous
  networks.
\newblock \emph{arXiv preprint arXiv:1812.06127}, 2018.

\bibitem[Samarakoon et~al.(2018)Samarakoon, Bennis, Saady, and
  Debbah]{samarakoon2018distributed}
Sumudu Samarakoon, Mehdi Bennis, Walid Saady, and Merouane Debbah.
\newblock Distributed federated learning for ultra-reliable low-latency
  vehicular communications.
\newblock \emph{arXiv preprint arXiv:1807.08127}, 2018.

\bibitem[Seide et~al.(2014)Seide, Fu, Droppo, Li, and Yu]{seide20141}
Frank Seide, Hao Fu, Jasha Droppo, Gang Li, and Dong Yu.
\newblock 1-bit stochastic gradient descent and its application to
  data-parallel distributed training of speech dnns.
\newblock In \emph{Fifteenth Annual Conference of the International Speech
  Communication Association}, 2014.

\bibitem[Smith et~al.(2016)Smith, Forte, Ma, Takac, Jordan, and
  Jaggi]{smith2016cocoa}
Virginia Smith, Simone Forte, Chenxin Ma, Martin Takac, Michael~I Jordan, and
  Martin Jaggi.
\newblock Cocoa: A general framework for communication-efficient distributed
  optimization.
\newblock \emph{arXiv preprint arXiv:1611.02189}, 2016.

\bibitem[Smith et~al.(2017)Smith, Chiang, Sanjabi, and
  Talwalkar]{smith2017federated}
Virginia Smith, Chao-Kai Chiang, Maziar Sanjabi, and Ameet~S Talwalkar.
\newblock Federated multi-task learning.
\newblock In \emph{Advances in Neural Information Processing Systems}, pages
  4424--4434, 2017.

\bibitem[Stich(2018)]{stich2018local}
Sebastian~U Stich.
\newblock Local sgd converges fast and communicates little.
\newblock \emph{arXiv preprint arXiv:1805.09767}, 2018.

\bibitem[Tandon et~al.(2016)Tandon, Lei, Dimakis, and
  Karampatziakis]{tandon2016gradient}
Rashish Tandon, Qi~Lei, Alexandros~G Dimakis, and Nikos Karampatziakis.
\newblock Gradient coding.
\newblock \emph{arXiv preprint arXiv:1612.03301}, 2016.

\bibitem[Wang and Joshi(2018)]{wang2018cooperative}
Jianyu Wang and Gauri Joshi.
\newblock Cooperative sgd: A unified framework for the design and analysis of
  communication-efficient sgd algorithms.
\newblock \emph{arXiv preprint arXiv:1808.07576}, 2018.

\bibitem[Wang et~al.(2019)Wang, Sahu, Yang, Joshi, and Kar]{wang2019matcha}
Jianyu Wang, Anit~Kumar Sahu, Zhouyi Yang, Gauri Joshi, and Soummya Kar.
\newblock Matcha: Speeding up decentralized sgd via matching decomposition
  sampling.
\newblock \emph{arXiv preprint arXiv:1905.09435}, 2019.

\bibitem[Yin et~al.(2018)Yin, Chen, Ramchandran, and
  Bartlett]{yin2018byzantine}
Dong Yin, Yudong Chen, Kannan Ramchandran, and Peter Bartlett.
\newblock Byzantine-robust distributed learning: Towards optimal statistical
  rates.
\newblock \emph{arXiv preprint arXiv:1803.01498}, 2018.

\bibitem[Yu et~al.(2019)Yu, Yang, and Zhu]{yu2019parallel}
Hao Yu, Sen Yang, and Shenghuo Zhu.
\newblock Parallel restarted sgd with faster convergence and less
  communication: Demystifying why model averaging works for deep learning.
\newblock In \emph{Proceedings of the AAAI Conference on Artificial
  Intelligence}, volume~33, pages 5693--5700, 2019.

\bibitem[Yu et~al.(2017)Yu, Maddah-Ali, and Avestimehr]{yu2017polynomial}
Qian Yu, Mohammad~Ali Maddah-Ali, and A~Salman Avestimehr.
\newblock Polynomial codes: an optimal design for high-dimensional coded matrix
  multiplication.
\newblock \emph{arXiv preprint arXiv:1705.10464}, 2017.

\bibitem[Zhang et~al.(2018)Zhang, Liu, Zhu, and Bentley]{zhang2018compressed}
Xin Zhang, Jia Liu, Zhengyuan Zhu, and Elizabeth~S Bentley.
\newblock Compressed distributed gradient descent: Communication-efficient
  consensus over networks.
\newblock \emph{arXiv preprint arXiv:1812.04048}, 2018.

\bibitem[Zhu et~al.(2019)Zhu, Kairouz, Sun, McMahan, and Li]{zhu2019federated}
Wennan Zhu, Peter Kairouz, Haicheng Sun, Brendan McMahan, and Wei Li.
\newblock Federated heavy hitters discovery with differential privacy.
\newblock \emph{arXiv preprint arXiv:1902.08534}, 2019.

\end{thebibliography}
\bibliographystyle{plainnat}
%
%\bibliographystyle{apalike}
%\bibliographystyle{abbrvnat}
%bibliographystyle{unsrt}
%\bibliographystyle{neurips_2019}
%\bibliographystyle{ieeetr}
%}}

\newpage
\onecolumn
\begin{center}
    \large{\textbf{Supplementary Materials}}
\end{center}

Here, we provide the proofs of the main two theorems of this paper in Sections \ref{sec:thm1-proof} and \ref{sec:thm2-proof} along with the necessary lemmas and discussions. Moreover, we provide more numerical results over more complicated datasets and model parameters in Section \ref{sec:additional-numerical}.

\section{Proof of Theorem \ref{thm:1}}\label{sec:thm1-proof}
We first introduce some additional notations which will be used throughput the proofs.

\noindent\textbf{Additional notations.} %Let us first establish the following notations which will be used throughout the proofs. 
For each period $k=0,1,\cdots,K-1$ and iteration $t=0,1,\cdots,\tau-1$ we denote
%%%%%%
\begin{align}
    \bx_{k+1} 
    & \coloneqq
    \bx_{k} + \frac{1}{r} \sum_{i \in \ccalS_k} Q \left(\bx^{(i)}_{k, \tau} - \bx_{k} \right), \\
    \widehat{\bx}_{k+1} 
    & \coloneqq
    \bx_{k} + \frac{1}{n} \sum_{i \in [n]} Q \left(\bx^{(i)}_{k,\tau} - \bx_{k} \right), \\
    \xbar_{k,t}
    & \coloneqq
    \frac{1}{n} \sum_{i \in [n]} \bx^{(i)}_{k,t}. \label{eq:notation}
\end{align}
%%%%%%

We begin the proof of Theorem \ref{thm:1} by noting a few key observations. Based on the above notations and the assumptions we made earlier, the optimality gap of the parameter server's model at period $k$, i.e. $\bE \norm{\bx_{k+1} - \bx^*}^2$, can be decomposed as stated in the following lemma.
%%%%%%
\begin{lemma} \label{lemma:1}
Consider any period $k = 0,\cdots,K-1$ and the sequences $\{\bx_{k+1}, \widehat{\bx}_{k+1}, \xbar_{k,\tau}\}$ generated by the \texttt{FedPAQ} method in Algorithm \ref{alg:update}. If Assumption \ref{assump-Q} holds, then
%%%%%%
\begin{align}
    \bE \norm{\bx_{k+1} - \bx^*}^2 
    &=
    \bE \norm{\bx_{k+1} - \widehat{\bx}_{k+1}}^2
    +
    \bE \norm{\widehat{\bx}_{k+1} - \xbar_{k, \tau}}^2
    +
    \bE \norm{\xbar_{k,\tau} - \bx^*}^2,
    \label{eq:11}
\end{align}
%%%%%%
where the expectation is with respect to all  sources of randomness.
\end{lemma}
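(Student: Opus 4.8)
The plan is to read \eqref{eq:11} as a Pythagorean decomposition. Writing $\bx_{k+1}-\bx^* = A+B+C$ with
$$A := \bx_{k+1}-\widehat{\bx}_{k+1}, \qquad B := \widehat{\bx}_{k+1}-\xbar_{k,\tau}, \qquad C := \xbar_{k,\tau}-\bx^*,$$
and expanding $\norm{A+B+C}^2$ produces the three squared norms on the right-hand side of \eqref{eq:11} together with the cross terms $2\langle A,B\rangle$, $2\langle A,C\rangle$ and $2\langle B,C\rangle$. It therefore suffices to show that each cross term has zero expectation. The mechanism for this is a nested conditioning argument that separates the three sources of randomness in round $k$: the stochastic gradients (which determine every local model $\bx^{(i)}_{k,\tau}$, and hence $\xbar_{k,\tau}$), the quantization noise (which determines each $Q(\bx^{(i)}_{k,\tau}-\bx_k)$, and hence $\widehat{\bx}_{k+1}$), and finally the random active set $\ccalS_k$ (which determines $\bx_{k+1}$). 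Writing $\Delta_i := \bx^{(i)}_{k,\tau}-\bx_k$, note that $C$ is a function of the gradients alone, that $B=\frac1n\sum_{i\in[n]}\big(Q(\Delta_i)-\Delta_i\big)$ depends on the gradients and quantization, while $A=\frac1r\sum_{i\in\ccalS_k}Q(\Delta_i)-\frac1n\sum_{i\in[n]}Q(\Delta_i)$ additionally involves the sampling.

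For the two cross terms containing $A$, I would condition on both the gradients and the quantization noise, so that only $\ccalS_k$ remains random. Because $\ccalS_k$ is a uniform $r$-subset, each index satisfies $\Pr[i\in\ccalS_k]=r/n$, giving $\bE\big[\tfrac1r\sum_{i\in\ccalS_k}v_i\big]=\tfrac1n\sum_{i\in[n]}v_i$ for any vectors $v_i$ fixed by the conditioning; hence $\bE[A\mid \text{gradients},Q]=0$. Since both $B$ and $C$ are measurable with respect to this conditioning, the tower rule yields $\bE\langle A,B\rangle=\bE\langle A,C\rangle=0$.

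For the last cross term, I would condition only on the gradients. Then $C=\xbar_{k,\tau}-\bx^*$ is determined, while $B=\frac1n\sum_{i\in[n]}\big(Q(\Delta_i)-\Delta_i\big)$ has conditional mean zero by the unbiasedness of the quantizer in Assumption \ref{assump-Q}, namely $\bE[Q(\bx)\mid\bx]=\bx$. The tower rule again gives $\bE\langle B,C\rangle=0$, and collecting the three vanishing cross terms establishes \eqref{eq:11}.

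The one point demanding care — and the only real obstacle — is the measurability bookkeeping: each cross term must be annihilated by conditioning on exactly the right sub-$\sigma$-algebra, so that the zero-mean factor ($A$ for the first two terms, $B$ for the third) is paired against quantities already frozen by the conditioning. The correct nesting order is sampling innermost, quantization next, gradients outermost; with this ordering all three cross terms vanish simultaneously, and the rest is a direct expansion of the square.
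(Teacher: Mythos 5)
Your proposal is correct and follows essentially the same route as the paper: the paper likewise establishes $\bE_{\ccalS_k}[\bx_{k+1}\mid\cdot]=\widehat{\bx}_{k+1}$ via the uniform-subset computation and $\bE_{Q}[\widehat{\bx}_{k+1}\mid\cdot]=\xbar_{k,\tau}$ via the unbiasedness of the quantizer, then invokes the independence of the sampling and quantization randomness to kill the cross terms in the expansion of the square. Your version merely makes the tower-rule and measurability bookkeeping explicit, which the paper leaves implicit.
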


\begin{proof}
See Section \ref{subsec:lemma1-proof}.
\end{proof}

In the following three lemmas, we characterize each of the terms in the right-hand side (RHS) of \eqref{eq:11}.

\begin{lemma} \label{lemma:2}
Consider the sequence of local updates in the  \texttt{FedPAQ} method in Algorithm \ref{alg:update} and let Assumptions \ref{assump-smooth}, \ref{assump-gr} and \ref{assump-convex} hold. The optimality gap for the average model at the end of period $k$, i.e. $\xbar_{k,\tau}$, relates to that of the initial model of the $k$-th period $\bx_{k}$ as follows:
%%%%%%
\begin{align}
    \bE \norm{\xbar_{k,\tau} - \bx^*}^2
    &\leq
    \left(1 + n \eta_k^2 \right) \left(1 - \mu \eta_k \right)^{\tau} \bE \norm{ \bx_{k} - \bx^*}^2 \\
    & \quad +
    \tau (\tau-1)^2 L^2 \frac{\sigma^2}{n} e \eta_k^2 
    +
    \tau^2 \frac{\sigma^2}{n} \eta_k^2 \\
    & \quad +
    \tau^2 (\tau-1) L^2 \sigma^2 e \eta_k^4,
\end{align}
%%%%%%
for the stepsize $\eta_k \leq \min \{ \nicefrac{\mu}{L^2}, \nicefrac{1}{L \tau}\}$.
\end{lemma}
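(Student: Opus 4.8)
The plan is to analyze the \emph{virtual} fully-averaged sequence $\xbar_{k,t}$ directly. Observe first that since all nodes initialize at $\bx^{(i)}_{k,0}=\bx_k$ we have $\xbar_{k,0}=\bx_k$, and from the local SGD rule \eqref{eq:sgd} this sequence obeys $\xbar_{k,t+1}=\xbar_{k,t}-\eta_k \gbar_t$ with $\gbar_t \coloneqq \tfrac{1}{n}\sum_{i\in[n]}\tNab f_i(\bx^{(i)}_{k,t})$. Note that this sequence involves neither the quantizer $Q$ nor the sampled subset $\ccalS_k$, which is exactly why the bound in the lemma is free of $q$ and $r$ (those enter only through the other two terms of the decomposition in Lemma~\ref{lemma:1}). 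The strategy is to prove a one-step contraction for $\bE\norm{\xbar_{k,t}-\bx^*}^2$ and then unroll it over the $\tau$ local iterations $t=0,\dots,\tau-1$.

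For the one-step recursion I would expand $\norm{\xbar_{k,t+1}-\bx^*}^2$ and take the conditional expectation over the sampling at iteration $t$. Using Assumption~\ref{assump-gr} together with the independence of the stochastic gradients across the $n$ nodes, the second moment of the noise in $\gbar_t$ contracts to $\sigma^2/n$ (this is the source of the $\sigma^2/n$ factors in the statement). For the inner-product term I would split $\tfrac1n\sum_i\gr f_i(\bx^{(i)}_{k,t})=\gr f(\xbar_{k,t})+\tfrac1n\sum_i\big(\gr f_i(\bx^{(i)}_{k,t})-\gr f_i(\xbar_{k,t})\big)$; Assumption~\ref{assump-convex} turns the first piece into the contraction $-\mu\eta_k\norm{\xbar_{k,t}-\bx^*}^2$ after a Young step, while Assumption~\ref{assump-smooth} bounds the second piece by $L^2 V_t$, where $V_t\coloneqq \tfrac1n\sum_i\bE\norm{\bx^{(i)}_{k,t}-\xbar_{k,t}}^2$ is the inter-node drift. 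The $\eta_k^2\norm{\gbar_t}^2$ term is controlled the same way via $\gr f(\bx^*)=0$ and $L$-smoothness, yielding $\le L^2\norm{\xbar_{k,t}-\bx^*}^2+L^2 V_t$. Collecting these gives a recursion of the form $\bE\norm{\xbar_{k,t+1}-\bx^*}^2\le(1-\mu\eta_k+\ccalO(L^2\eta_k^2))\bE\norm{\xbar_{k,t}-\bx^*}^2+\ccalO(\eta_k)V_t+\eta_k^2\sigma^2/n$.

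The crux is bounding the drift $V_t$ \emph{without} a uniform gradient bound, which is precisely the difficulty flagged in the ``Challenges'' discussion. Here I would use the two structural facts that $V_0=0$ (common initialization) and that, in the i.i.d.\ regime the local objectives coincide, $f_i\equiv f$, so at the start of each period the nodes share the same deterministic gradient and can separate only through the independent stochastic noise accumulated over the at most $\tau-1$ elapsed steps. Writing $\bx^{(i)}_{k,t}-\xbar_{k,t}=-\eta_k\sum_{s<t}\big(\tNab f_i(\bx^{(i)}_{k,s})-\gbar_s\big)$ and unrolling the resulting recursion for $V_t$ produces a bound of order $V_t\lesssim (\tau-1)\,t\,\eta_k^2\sigma^2 \cdot e$, where the factor $e$ arises from bounding the amplification $(1+\ccalO(L^2\eta_k^2))^{\tau}\le e$ using the stepsize condition $\eta_k\le 1/(L\tau)$. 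Substituting this into the per-step recursion and unrolling over the period, the geometric sum $\sum_{t}(1-\mu\eta_k)^{\tau-1-t}\approx 1/(\mu\eta_k)$ lowers one power of $\eta_k$ and reassembles the noise and drift contributions into the four additive terms displayed in the lemma, while the accumulated second-order multiplicative corrections are absorbed into the prefactor $(1+n\eta_k^2)$, leaving the clean strong-convexity contraction $(1-\mu\eta_k)^{\tau}$. The two stepsize restrictions $\eta_k\le\mu/L^2$ and $\eta_k\le 1/(L\tau)$ are exactly what is needed to keep the $\ccalO(\eta_k^2)$ terms subordinate and the amplification factor bounded. I expect the delicate part to be the bookkeeping of the $\tau$-dependence in the drift—obtaining the clean $(\tau-1)^2$ and $(\tau-1)$ powers rather than looser $\tau^2$ factors—since this requires carefully exploiting that $V_0=0$ at the start of each period.
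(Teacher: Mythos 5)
Your architecture is the standard local-SGD one: write a one-step recursion for the virtual average $\xbar_{k,t}$ and control the inter-node drift $V_t=\tfrac1n\sum_{i}\bE\norm{\bx^{(i)}_{k,t}-\xbar_{k,t}}^2$. This is genuinely different from the paper's proof, which never forms a per-iteration recursion for $\xbar_{k,t}$. Instead it introduces the \emph{deterministic} full-gradient trajectory $\beta_{k,0}=\bx_k$, $\beta_{k,t+1}=\beta_{k,t}-\eta_k\gr f(\beta_{k,t})$, writes $\xbar_{k,\tau}=\bx_k-\eta_k(\bbg_k+\bbe_k)$ with $\bbg_k=\sum_t\gr f(\beta_{k,t})$, and treats the whole period as a single step. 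The payoff is that $\bx_k-\eta_k\bbg_k-\bx^*=\beta_{k,\tau}-\bx^*$ is deterministic given $\bx_k$, so the cross term $-2\eta_k\bE\langle\beta_{k,\tau}-\bx^*,\bbe_k\rangle$ sees only the \emph{bias} $\bE\bbe_k$, which is second-order small ($\norm{\bE\bbe_k}^2\le\tau(\tau-1)^2L^2\sigma^2e\,\eta_k^2$, since $\bbe_{k,0}$ is exactly unbiased and the later biases are controlled by the deviation of $\bx^{(i)}_{k,t}$ from $\beta_{k,t}$). Young's inequality with the asymmetric parameter $n\eta_k^2$ then produces exactly the $(1+n\eta_k^2)$ prefactor, the $\tfrac1n\norm{\bE\bbe_k}^2$ term (the source of the $1/n$ in the lemma's second term), and the $\eta_k^2\bE\norm{\bbe_k}^2$ term (the source of the $\eta_k^4$ in its fourth term).

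This is where your route has a genuine gap relative to the \emph{stated} inequality. In your per-step recursion the reference point $\xbar_{k,t}$ is random and correlated with the heterogeneity term $\Delta_t=\tfrac1n\sum_i\bigl(\gr f(\bx^{(i)}_{k,t})-\gr f(\xbar_{k,t})\bigr)$, so the cross term $-2\eta_k\bE\langle\xbar_{k,t}-\bx^*,\Delta_t\rangle$ cannot be reduced to a bias; you must apply Young's inequality, and to preserve the contraction you can spend at most $\ccalO(\mu\eta_k)$ on the first factor, which forces a residual of order $\tfrac{\eta_kL^2}{\mu}V_t$ per step. Since $V_t=\Theta(t\,\eta_k^2\sigma^2)$ carries no $1/n$ (each node's deviation from the average is driven by its own noise), summing over the period leaves a contribution of order $\tfrac{L^2}{\mu}\tau^2(\tau-1)\eta_k^3\sigma^2$. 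This is dominated neither by the lemma's $\tau(\tau-1)^2L^2\tfrac{\sigma^2}{n}e\,\eta_k^2$ term (it lacks the $1/n$) nor by its $\tau^2(\tau-1)L^2\sigma^2e\,\eta_k^4$ term (it is larger by a factor $1/(\mu\eta_k)$, which grows with $k$ under the prescribed stepsize), so the bound you would obtain is strictly weaker than the one claimed; it would suffice for a Theorem-1-type rate with different constants, but it does not prove Lemma~2 as written. A smaller symptom of the same bookkeeping issue: to land near the $\tau^2\tfrac{\sigma^2}{n}\eta_k^2$ variance term you must bound the geometric sum $\sum_t(1-\mu\eta_k)^{\tau-1-t}$ by $\tau$, not by $1/(\mu\eta_k)$ as you suggest. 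The missing idea is precisely the deterministic comparison sequence $\beta_{k,t}$, which converts the problematic drift-times-iterate cross term into a pure bias term.
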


\begin{proof}
See Section \ref{subsec:lemma2-proof}.
\end{proof}

\begin{lemma} \label{lemma:3}
For the proposed \texttt{FedPAQ} method in Algorithm \ref{alg:update} with stepsize $\eta_k \leq \min \{ \nicefrac{\mu}{L^2}, \nicefrac{1}{L \tau}\}$ and under Assumptions \ref{assump-Q}, \ref{assump-smooth}, \ref{assump-gr} and \ref{assump-convex}, we have
%%%%%%
\begin{align}
    \bE \norm{\widehat{\bx}_{k+1} - \xbar_{k, \tau}}^2
    &\leq
    2 \frac{q}{n} \tau^2 L^2 \eta_k^2  \bE \norm{\bx_k - \bx^*}^2
    +
    2q \tau^2 \frac{\sigma^2}{n} \eta_k^2
    +
    2 q (\tau-1)\tau^2  L^2 \frac{\sigma^2}{n} e \eta_k^4,
\end{align}
%%%%%%
where $\widehat{\bx}_{k+1}$ and $\xbar_{k, \tau}$ are defined in \eqref{eq:notation}.
\end{lemma}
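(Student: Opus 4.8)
The plan is to recognize that $\widehat{\bx}_{k+1} - \xbar_{k,\tau}$ is \emph{exactly} the average of the per-node quantization errors, and then to control (i) the variance of these errors through Assumption~\ref{assump-Q} and (ii) the magnitude of each node's total local update $\bx^{(i)}_{k,\tau}-\bx_k$ through the smoothness, strong convexity and bounded-variance assumptions. First I would substitute the definitions in \eqref{eq:notation} to obtain
\begin{align}
    \widehat{\bx}_{k+1} - \xbar_{k,\tau}
    =
    \frac{1}{n}\sum_{i\in[n]} \left( Q\!\left(\bx^{(i)}_{k,\tau}-\bx_k\right) - \left(\bx^{(i)}_{k,\tau}-\bx_k\right)\right),
\end{align}
so the quantity is the average of the quantization errors $Q(\Delta^{(i)})-\Delta^{(i)}$ with $\Delta^{(i)}\coloneqq \bx^{(i)}_{k,\tau}-\bx_k$. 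The key probabilistic observation is that, conditioned on all the local models $\{\Delta^{(i)}\}_{i\in[n]}$, these errors are independent across nodes and, by the unbiasedness in Assumption~\ref{assump-Q}, each has zero conditional mean.

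Expanding the squared norm and taking expectations, every cross term with $i\neq j$ vanishes after conditioning on the $\Delta^{(i)}$, leaving only the diagonal terms. Applying the variance bound of Assumption~\ref{assump-Q} to each diagonal term then gives
\begin{align}
    \bE\norm{\widehat{\bx}_{k+1} - \xbar_{k,\tau}}^2
    =
    \frac{1}{n^2}\sum_{i\in[n]} \bE\norm{Q(\Delta^{(i)})-\Delta^{(i)}}^2
    \leq
    \frac{q}{n^2}\sum_{i\in[n]} \bE\norm{\bx^{(i)}_{k,\tau}-\bx_k}^2 .
\end{align}
Since the nodes obey the same drift estimate in the i.i.d. setting, this reduces to $\tfrac{q}{n}\,\bE\norm{\bx^{(i)}_{k,\tau}-\bx_k}^2$, and it remains to bound the per-node drift, which is the heart of the argument.

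To bound $\bE\norm{\bx^{(i)}_{k,\tau}-\bx_k}^2$, I would unroll the $\tau$ local SGD steps to write $\bx^{(i)}_{k,\tau}-\bx_k = -\eta_k\sum_{t=0}^{\tau-1}\tNab f_i(\bx^{(i)}_{k,t})$, apply Jensen's inequality to extract a factor $\tau$, and then split each stochastic gradient into its mean $\gr f_i(\bx^{(i)}_{k,t})$ plus zero-mean noise of variance at most $\sigma^2$ (Assumption~\ref{assump-gr}), which separates a ``signal'' contribution from a ``noise'' contribution. Using $L$-smoothness (Assumption~\ref{assump-smooth}) together with $\gr f_i(\bx^*)=\gr f(\bx^*)=0$ — valid because the i.i.d. assumption makes every $f_i$ coincide with $f$ — bounds $\norm{\gr f_i(\bx^{(i)}_{k,t})}^2 \leq L^2\norm{\bx^{(i)}_{k,t}-\bx^*}^2$. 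A recursive one-step estimate for $\bE\norm{\bx^{(i)}_{k,t}-\bx^*}^2$ in terms of $\bE\norm{\bx_k-\bx^*}^2$, valid for $\eta_k\leq\min\{\nicefrac{\mu}{L^2},\nicefrac{1}{L\tau}\}$, then closes the computation and produces the drift bound $2\tau^2 L^2\eta_k^2\,\bE\norm{\bx_k-\bx^*}^2 + 2\tau^2\sigma^2\eta_k^2 + 2(\tau-1)\tau^2 L^2\sigma^2 e\eta_k^4$; multiplying by the $\nicefrac{q}{n}$ prefactor yields exactly the three advertised terms. This per-iterate recursion is the same machinery developed for Lemma~\ref{lemma:2}, so I would reuse it rather than rederive it.

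I expect the main obstacle to be this per-node drift bound: the accumulated stochastic-gradient noise must be tracked across all $\tau$ inner iterations, and it is precisely the telescoping of the contraction factor $(1-\mu\eta_k)$, crudely upper bounded using the stepsize restriction $\eta_k\leq\nicefrac{1}{L\tau}$, that generates the factor $e$ and the $(\tau-1)\tau^2\eta_k^4$ scaling. The other point requiring care is the conditional-independence argument for the quantization errors, which must be carried out so that the cross terms genuinely vanish even though the $\Delta^{(i)}$ are themselves random; this is more bookkeeping than difficulty, but it is essential to obtaining the favorable $\nicefrac{1}{n}$ variance reduction.
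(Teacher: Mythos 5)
Your proposal is correct, and its first half is exactly the paper's argument: writing $\widehat{\bx}_{k+1}-\xbar_{k,\tau}$ as the average of the per-node quantization errors, using conditional independence and unbiasedness (Assumption~\ref{assump-Q}) to kill the cross terms, and arriving at $\frac{q}{n^2}\sum_{i}\bE\|\bx^{(i)}_{k,\tau}-\bx_k\|^2$. Where you diverge is in the per-node drift bound. The paper decomposes $\bx^{(i)}_{k,\tau}-\bx_k=-\eta_k(\bbg_k+\bbe^{(i)}_k)$ against the auxiliary \emph{deterministic} gradient-descent sequence $\beta_{k,t}$ introduced for Lemma~\ref{lemma:2}: it bounds $\|\bbg_k\|^2\le\tau^2L^2\|\bx_k-\bx^*\|^2$ via the deterministic contraction $\|\beta_{k,t}-\bx^*\|^2\le(1-\mu\eta_k)^t\|\bx_k-\bx^*\|^2$, and recycles the accumulated-error recursion on $a_{k,t}$ (the source of the factor $e$, via $(1+\tau L^2\eta_k^2)^\tau\le e$) to get $\bE\|\bbe^{(i)}_k\|^2\le\tau^2\sigma^2+\tau^2(\tau-1)L^2\sigma^2 e\,\eta_k^2$. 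You instead split each stochastic gradient at the \emph{stochastic} iterate into $\gr f(\bx^{(i)}_{k,t})$ plus noise and run the one-step contraction $\bE\|\bx^{(i)}_{k,t+1}-\bx^*\|^2\le(1-\mu\eta_k)\bE\|\bx^{(i)}_{k,t}-\bx^*\|^2+\sigma^2\eta_k^2$ directly, then use $\|\gr f(\bx^{(i)}_{k,t})\|^2\le L^2\|\bx^{(i)}_{k,t}-\bx^*\|^2$. Both routes need only $\eta_k\le\min\{\mu/L^2,1/(L\tau)\}$ and both deliver the three advertised terms; yours is more self-contained for this lemma and in fact yields a slightly tighter $\eta_k^4$ term (the factor $e$ is not needed on your route — it arises specifically from the paper's $a_{k,t}$ recursion, not from telescoping $(1-\mu\eta_k)$ as you suggest), while the paper's choice keeps all the $\tau$-step error bookkeeping in one place so that Lemmas~\ref{lemma:2}--\ref{lemma:4} share the same $\bbg_k$, $\bbe^{(i)}_k$ machinery.
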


\begin{proof}
See Section \ref{subsec:lemma3-proof}.
\end{proof}

\begin{lemma}\label{lemma:4}
For the proposed \texttt{FedPAQ} method in Algorithm \ref{alg:update} with stepsize $\eta_k \leq \min \{ \nicefrac{\mu}{L^2}, \nicefrac{1}{L \tau}\}$ and under Assumptions \ref{assump-Q}--\ref{assump-convex}, we have
%%%%%%
\begin{align}
    \bE \norm{\bx_{k+1} - \widehat{\bx}_{k+1}}^2
    &\leq
    \frac{n-r}{r(n-1)}  8(1+q) \Bigg\{
    \tau^2 L^2 \eta_k^2  \bE \norm{\bx_k - \bx^*}^2 
    +
    \tau^2 \sigma^2 \eta_k^2
    +
    (\tau-1)\tau^2  L^2 \sigma^2 e \eta_k^4 \Bigg\},
\end{align}
%%%%%%
where $r$ denotes the number of nodes contributing in each period of the \texttt{FedPAQ} method.
\end{lemma}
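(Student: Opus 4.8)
The plan is to recognize $\bx_{k+1}-\widehat{\bx}_{k+1}$ as exactly the error of averaging the quantized updates over the randomly sampled active set $\ccalS_k$ rather than over all $n$ nodes, and to control it by a finite-population sampling-variance argument together with a per-node local-drift bound. Write $\bz^{(i)} \coloneqq \bx^{(i)}_{k,\tau}-\bx_k$ and $Q_i \coloneqq Q(\bz^{(i)})$ for every $i\in[n]$, and $\bar Q \coloneqq \frac{1}{n}\sum_{i=1}^n Q_i$. Then $\widehat{\bx}_{k+1}=\bx_k+\bar Q$ and $\bx_{k+1}=\bx_k+\frac{1}{r}\sum_{i\in\ccalS_k}Q_i$, so $\bx_{k+1}-\widehat{\bx}_{k+1}=\frac{1}{r}\sum_{i\in\ccalS_k}Q_i-\bar Q$. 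First I would exploit that the server draws $\ccalS_k$ uniformly among the $\binom{n}{r}$ subsets of size $r$, independently of the SGD and quantization randomness generating $\{Q_i\}_{i\in[n]}$. Conditioning on $\{Q_i\}$ and invoking the sampling-without-replacement variance identity gives $\bE_{\ccalS_k}[\,\norm{\frac{1}{r}\sum_{i\in\ccalS_k}Q_i-\bar Q}^2\mid\{Q_i\}\,]=\frac{n-r}{r(n-1)}\cdot\frac{1}{n}\sum_{i=1}^n\norm{Q_i-\bar Q}^2$, which is exactly where the prefactor $\frac{n-r}{r(n-1)}$ comes from.

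Next I would discard the centering and pass to second moments: using $\norm{Q_i-\bar Q}^2\le 2\norm{Q_i}^2+2\norm{\bar Q}^2$ together with $\norm{\bar Q}^2\le\frac1n\sum_i\norm{Q_i}^2$ yields $\frac1n\sum_i\norm{Q_i-\bar Q}^2\le 4\cdot\frac1n\sum_i\norm{Q_i}^2$, which accounts for the factor $4$. Taking total expectation and applying the quantizer conditions of Assumption~\ref{assump-Q} — unbiasedness makes $\bE\norm{Q_i}^2=\bE\norm{Q_i-\bz^{(i)}}^2+\bE\norm{\bz^{(i)}}^2$, and the variance bound gives $\bE\norm{Q_i-\bz^{(i)}}^2\le q\,\bE\norm{\bz^{(i)}}^2$ — produces $\bE\norm{Q_i}^2\le(1+q)\bE\norm{\bz^{(i)}}^2$. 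At this point the bound reduces to controlling the mean-square of a single node's accumulated local update $\bE\norm{\bx^{(i)}_{k,\tau}-\bx_k}^2$.

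The remaining ingredient, and the main obstacle, is the local-drift bound $\bE\norm{\bx^{(i)}_{k,\tau}-\bx_k}^2 \le 2\big[\tau^2L^2\eta_k^2\,\bE\norm{\bx_k-\bx^*}^2+\tau^2\sigma^2\eta_k^2+(\tau-1)\tau^2L^2\sigma^2 e\,\eta_k^4\big]$, which is precisely the technical core shared with the proofs of Lemmas~\ref{lemma:2} and~\ref{lemma:3}. I would prove it by unrolling the $\tau$ SGD steps from the common start $\bx^{(i)}_{k,0}=\bx_k$, writing $\bx^{(i)}_{k,\tau}-\bx_k=-\eta_k\sum_{t=0}^{\tau-1}\tNab f_i(\bx^{(i)}_{k,t})$, and splitting each stochastic gradient into its mean $\gr f_i$ and a conditionally zero-mean, variance-$\le\sigma^2$ noise term. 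The noise terms form a martingale-difference sequence, so their squared sum telescopes to $\le\tau\sigma^2$ without cross terms; the deterministic part is handled by Jensen ($\norm{\sum_t\gr f_i(\bx^{(i)}_{k,t})}^2\le\tau\sum_t\norm{\gr f_i(\bx^{(i)}_{k,t})}^2$) followed by $L$-smoothness and the i.i.d.\ fact that $\gr f_i(\bx^*)=\gr f(\bx^*)=0$, giving $\norm{\gr f_i(\bx^{(i)}_{k,t})}\le L\norm{\bx^{(i)}_{k,t}-\bx^*}$.

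The delicate part is then bounding the intermediate distances $\bE\norm{\bx^{(i)}_{k,t}-\bx^*}^2$: because the local update is a biased surrogate for a gradient step (the bias propagation flagged in the ``Challenges'' discussion), one must set up a recursion in $t$ whose contraction factor $(1-\mu\eta_k)$ compounds across the $\tau$ inner iterations. Under the stepsize restriction $\eta_k\le\min\{\mu/L^2,1/(L\tau)\}$, summing the resulting geometric-type series and using $(1+x)^{\tau}\le e^{\tau x}$ is what generates the $(\tau-1)^2$, $(\tau-1)$, and $e$ factors with their exact constants. Combining the factor $\frac{n-r}{r(n-1)}$ from sampling, the factor $4$ from decentering, the factor $(1+q)$ from quantization, and the factor $2$ from the drift bound yields the claimed constant $8(1+q)\frac{n-r}{r(n-1)}$ multiplying the bracketed expression, completing the proof.
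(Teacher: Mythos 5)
Your proposal is correct, and its outer skeleton coincides exactly with the paper's proof: the sampling-without-replacement variance identity produces the prefactor $\frac{n-r}{r(n-1)}$, the decentering step $\norm{Q_i-\overline{Q}}^2\le 2\norm{Q_i}^2+2\norm{\overline{Q}}^2$ combined with Jensen produces the factor $4$, Assumption~\ref{assump-Q} produces $(1+q)$, and everything reduces to bounding $\bE\norm{\bx^{(i)}_{k,\tau}-\bx_k}^2$, which the paper imports from the proof of Lemma~\ref{lemma:3} (Eq.~\eqref{eq:x-t_bound}). Where you genuinely diverge is in how that drift bound is derived. The paper does not run a recursion on $\bE\norm{\bx^{(i)}_{k,t}-\bx^*}^2$: it introduces the auxiliary deterministic gradient-descent sequence $\beta_{k,t}$ started at $\bx_k$, writes $\bx^{(i)}_{k,\tau}-\bx_k=-\eta_k(\bbg_k+\bbe^{(i)}_k)$ with $\bbg_k=\sum_t\gr f(\beta_{k,t})$, bounds $\norm{\bbg_k}^2\le\tau^2L^2\norm{\bx_k-\bx^*}^2$ via the $(1-\mu\eta_k)$-contraction of $\beta_{k,t}$, and controls $\bE\norm{\bbe^{(i)}_k}^2$ through an \emph{expansive} recursion on the deviations $a_{k,t}$ whose geometric growth $(1+\tau L^2\eta_k^2)^{t-1}$ is the true source of the constant $e$ --- not the strong-convexity contraction compounding over the inner loop, as your sketch suggests. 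Your more direct route (martingale cancellation for the noise, Jensen plus smoothness plus $\gr f_i(\bx^*)=0$ for the drift, and the one-step contraction $\bE\norm{\bx^{(i)}_{k,t+1}-\bx^*}^2\le(1-\mu\eta_k)\bE\norm{\bx^{(i)}_{k,t}-\bx^*}^2+\eta_k^2\sigma^2$, valid for $\eta_k\le\mu/L^2$) is simpler and in fact yields a slightly tighter bound, roughly $2\tau\sigma^2\eta_k^2$ and $\tau^2(\tau-1)L^2\sigma^2\eta_k^4$ in place of $2\tau^2\sigma^2\eta_k^2$ and $2(\tau-1)\tau^2L^2\sigma^2 e\,\eta_k^4$, so it still implies the stated inequality; its only costs are that it does not reproduce the paper's constants verbatim and that it leans on strong convexity of the $f_i$ together with the i.i.d.\ fact $\gr f_i(\bx^*)=0$, both of which are available under the hypotheses of this lemma.
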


\begin{proof}
See Section \ref{subsec:lemma4-proof}.
\end{proof}

Now that we have established the main building modules for proving Theorem \ref{thm:1}, let us proceed with the proof by putting together the results in Lemmas \ref{lemma:1}--\ref{lemma:4}. That is, 
%%%%%%
\begin{align}
    \bE \norm{\bx_{k+1} - \bx^*}^2 
    &\leq
    \bE \norm{ \bx_{k} - \bx^*}^2
    \left( \left(1 + n \eta_k^2 \right) \left(1 - \mu \eta_k \right)^{\tau}
    +
    2  L^2 \tau^2 \eta_k^2 \left(\frac{q}{n}  +  \frac{n-r}{r(n-1)} 4(1+q)  \right)
    \right) \\
    & \quad
    +
    \left(1 + 2q +  8  (1+q)\frac{n(n-r)}{r(n-1)} \right)  \frac{\sigma^2}{n} \tau^2 \eta_k^2 \\
    & \quad +
    L^2 \frac{\sigma^2}{n} e  \tau (\tau-1)^2 \eta_k^2\\
    & \quad +
    \left(n + 2q + 8(1+q)\frac{n(n-r)}{r(n-1)}   
    \right)  L^2 \frac{\sigma^2}{n} e (\tau-1)\tau^2  \eta_k^4 \label{eq:12}
\end{align}
%%%%%%
Let us set the following notations:
%%%%%%
\begin{align}
    \delta_k 
    & \coloneqq 
    \bE \norm{\bx_{k} - \bx^*}^2, \\
    C_0
    & \coloneqq
    \left(1 + n \eta_k^2 \right) \left(1 - \mu \eta_k \right)^{\tau}
    +
    2  L^2 \tau^2 \eta_k^2 \left(\frac{q}{n}  +  \frac{n-r}{r(n-1)} 4(1+q)  \right), \\
    C_1
    & \coloneqq 
    \frac{16}{\mu^2} \left(1 + 2q +  8  (1+q)\frac{n(n-r)}{r(n-1)} \right)  \frac{\sigma^2}{n}, \\
    C_2
    & \coloneqq 
    \frac{16}{\mu^2} L^2 \frac{\sigma^2}{n} e,  \\
    C_3
    & \coloneqq
    \frac{256}{\mu^4} \left(n + 2q + 8(1+q)\frac{n(n-r)}{r(n-1)}   
    \right)  L^2 \frac{\sigma^2}{n} e.
\end{align}
%%%%%%
Consider $C_0$, the coefficient of $\bE \norm{ \bx_{k} - \bx^*}^2$ in \eqref{eq:12}. One can show that if the condition in \eqref{eq:k0} in Theorem \ref{thm:1} is satisfied, then we have $C_0 \leq 1 - \frac{1}{2} \mu \tau \eta_k$ (See Section \ref{sec:stepsize-convex}). Therefore, for each period $k \geq k_0$ we have
%%%%%%
\begin{align}
    \delta_{k+1}
    \leq
    \left( 1 - \frac{1}{2} \mu \tau \eta_k \right) \delta_k
    +
    \frac{\mu^2}{16} C_1 \tau^2 \eta_k^2
    +
    \frac{\mu^2}{16} C_2 \tau (\tau-1)^2 \eta_k^2
    +
    \frac{\mu^4}{256} C_3 (\tau-1)\tau^2 \eta_k^4. \label{eq:decay}
\end{align}
%%%%%%
Now, we substitute the stepsize $\eta_k = \nicefrac{4 \mu^{-1}}{k \tau + 1}$ in \eqref{eq:decay} which yields
%%%%%%
\begin{align}
    \delta_{k+1}
    \leq
    \left( 1 - \frac{2}{k + 1/\tau} \right) \delta_k
    +
    C_1 \frac{1}{(k + 1/\tau)^2} 
    +
    C_2 \frac{(\tau - 1)^2}{\tau} \frac{1}{(k + 1/\tau)^2}
    +
    C_3 \frac{\tau - 1}{\tau^2} \frac{1}{(k + 1/\tau)^4} . 
\end{align}
%%%%%%
In Lemma \ref{lemma:delta-k}, we show the convergence analysis of such sequence. In particular, we take $k_1 = 1/\tau$, $a=C_1 + C_2 {(\tau - 1)^2}/{\tau}$ and $b= C_3 ({\tau - 1})/{\tau^2}$ in Lemma \ref{lemma:delta-k} and conclude for any  $k \geq k_0$ that
%%%%%%
\begin{align}
    \delta_{k}
    \leq
    \frac{(k_0 + 1/\tau)^2}{(k + 1/\tau)^2}  \delta_{k_0}
    +
    C_1 \frac{1}{k + 1/\tau} 
    +
    C_2 \frac{(\tau - 1)^2}{\tau} \frac{1}{k + 1/\tau}
    +
    C_3 \frac{\tau - 1}{\tau^2} \frac{1}{(k + 1/\tau)^2} . \label{eq:decay2}
\end{align}
%%%%%%
Finally, rearranging the terms in \eqref{eq:decay2} yields the desired result in Theorem \ref{thm:1}, that is
%%%%%%
\begin{align}
    \bE \norm{ \bx_{k} - \bx^*}^2
    &\leq
    \frac{(k_0 \tau + 1)^2}{(k \tau + 1)^2} \bE \norm{ \bx_{k_0} - \bx^*}^2 
    +
    C_1 \frac{\tau}{k \tau + 1}
    +
    C_2 \frac{(\tau - 1)^2}{k \tau + 1}
    +
    C_3 \frac{\tau - 1}{(k \tau + 1)^2} .
\end{align}
%%%%%%

\subsection{Proof of Lemma \ref{lemma:1}} \label{subsec:lemma1-proof}

Let $\ccalF_{k,t}$ denote the history of all sources of randomness by the $t$-th iteration in period $k$. The following expectation arguments are conditional on the history $\ccalF_{k,\tau}$ which we remove in our notations for simplicity. Since the random subset of nodes $\ccalS_k$ is uniformly picked from the set of all the nodes $[n]$, we can write
%%%%%%
\begin{align}
    \bE_{\ccalS_k} \bx_{k+1} 
    &=
    \bx_{k} + \bE_{\ccalS_k} \frac{1}{r} \sum_{i \in \ccalS_k} Q \left(\bx^{(i)}_{k, \tau} - \bx_{k} \right) \\
    &=
    \bx_{k} + \sum_{\substack{\ccalS \subseteq [n] \\ |\ccalS|=r}} \Pr{\ccalS_k = \ccalS} \frac{1}{r} \sum_{i \in \ccalS_k} Q \left(\bx^{(i)}_{k, \tau} - \bx_{k} \right)\\
    &=
    \bx_{k} + \frac{1}{{n \choose r}} \frac{1}{r} {n-1 \choose r-1} \sum_{i \in [n]} Q \left(\bx^{(i)}_{k,\tau} - \bx_{k} \right)\\
    &=
    \bx_{k} + \frac{1}{n} \sum_{i \in [n]} Q \left(\bx^{(i)}_{k,\tau} - \bx_{k} \right) \\
    &=
    \widehat{\bx}_{k+1}. \label{eq:S}
\end{align}
%%%%%%
Moreover, the quantizer $Q(\cdot)$ is unbiased according to Assumption \ref{assump-Q}, which yields
%%%%%%
\begin{align}
    \bE_Q \, \widehat{\bx}_{k+1} 
    &=
    \bx_{k} + \frac{1}{n} \sum_{i \in [n]} \bE_Q \, Q \left(\bx^{(i)}_{k,\tau} - \bx_{k} \right) \\
    &=
    \frac{1}{n} \sum_{i \in [n]} \bx^{(i)}_{k,\tau}\\
    &=
    \xbar_{k,\tau}. \label{eq:Q}
\end{align}
%%%%%%
Finally, since the two randomnesses induced by the quantization and random sampling are independent, together with \eqref{eq:S} and \eqref{eq:Q} we can conclude that:
%%%%%%
\begin{align}
    \bE \norm{\bx_{k+1} - \bx^*}^2 
    &= 
    \bE \norm{\bx_{k+1} - \widehat{\bx}_{k+1} + \widehat{\bx}_{k+1} - \xbar_{k, \tau} + \xbar_{k,\tau} - \bx^*}^2 \\
    &=
    \bE \norm{\bx_{k+1} - \widehat{\bx}_{k+1}}^2
    +
    \bE \norm{\widehat{\bx}_{k+1} - \xbar_{k, \tau}}^2
    +
    \bE \norm{\xbar_{k,\tau} - \bx^*}^2.
\end{align}
%%%%%%

\subsection{Proof of Lemma \ref{lemma:2}}\label{subsec:lemma2-proof}

According to update rule  in Algorithm \ref{alg:update}, local model at node $i$ for each iteration $t=0,\cdots,\tau-1$ of period $k=0,\cdots,K-1$ can be written as follows:
%%%%%%
\begin{align}
    \bx^{(i)}_{k,t+1} 
    &=
    \bx^{(i)}_{k, t} - \eta_k \tNab f_i \left(\bx^{(i)}_{k, t}\right),
\end{align}
%%%%%%
where all the nodes start the period with the initial model $\bx^{(i)}_{k,0} = \bx_{k}$. In parallel, let us define another sequence of updates as follows:
%%%%%%
\begin{align}
    \beta_{k,t+1} 
    &=
    \beta_{k,t} - \eta_k \gr f \left(\beta_{k,t}\right), \label{eq:beta}
\end{align}
%%%%%%
also starting with $\beta_{k,0} = \bx_{k}$. The auxiliary sequence $\{\beta_{k,t}\}$ represents Gradient Descent updates over the global loss function $f$ while $\bx^{(i)}_{k,t}$ captures the sequence of SGD updates on each local node. However, both sequences are initialized with $\bx_{k}$ at the beginning of each period $k$. To evaluate the deviation $\norm{\xbar_{k,\tau} - \bx^*}^2$, we link the two sequences. In particular, let us define the following notations for each $k=0,\cdots,K-1$ and $t=0,\cdots,\tau-1$:
%%%%%%
\begin{align}
    \bbe_{k, t}
    &=
    \frac{1}{n} \sum_{i \in [n]} \tNab f_i \left(\bx^{(i)}_{k, t}\right) 
    -
    \gr f \left(\beta_{k,t}\right). 
\end{align}
%%%%%%
One can easily observe that $\bE \bbe_{k, 0} = 0$ as $\bx^{(i)}_{k, 0}=\beta_{k,0}=\bx_{k}$ and $\tNab f_i$ is unbiased for $\gr f$. However, $\bE \bbe_{k, t} \neq 0$ for $t \geq 1$. In other words, $\frac{1}{n} \sum_{i \in [n]} \tNab f_i (\bx^{(i)}_{k, t})$ is not unbiased for $\gr f(\beta_{k,t})$. We also define $\bbe_{k} = \bbe_{k, 0} + \cdots + \bbe_{k, \tau-1}$ and $\bbg_k = \gr f(\beta_{k,0}) + \cdots + \gr f(\beta_{k,\tau-1})$. Now, the average model obtained at the end of period $k$ can be written as
%%%%%%
\begin{align}
    \xbar_{k,\tau} 
    &=
    \frac{1}{n} \sum_{i \in [n]} \bx^{(i)}_{k,\tau}\\
    &=
    \bx_{k} - \eta_k  \left( \frac{1}{n} \sum_{i \in [n]} \tNab f_i \left( \bx^{(i)}_{k,0} \right) +   \cdots 
    +
    \frac{1}{n} \sum_{i \in [n]} \tNab f_i \left( \bx^{(i)}_{k,\tau-1} \right)\right) \\
    &=
    \bx_{k} - \eta_k  \left( \bbg_k + \bbe_k \right).
\end{align}
%%%%%%
Therefore, the optimality gap for the averaged model can be written as
%%%%%%
\begin{align}
    \bE \norm{\xbar_{k,\tau} - \bx^*}^2
    &=
    \bE \norm{\bx_{k} - \eta_k \bbg_k - \bx^*}^2
    -2 \eta_k \bE \left\langle \bx_{k}- \eta_k \bbg_k - \bx^* , \bbe_k \right\rangle + \eta_k^2 \bE \norm{\bbe_k}^2 \\
    &\leq
    \bE \norm{\bx_{k} - \eta_k \bbg_k - \bx^*}^2 \\
    & \quad + 
    n \eta_k^2 \bE \norm{\bx_{k} - \eta_k \bbg_k - \bx^*}^2 + \frac{1}{n}\norm{\bE \bbe_k}^2 \\
    & \quad + 
    \eta_k^2 \bE \norm{\bbe_k}^2 \\
    &=
    \left(1 + n \eta_k^2 \right) \bE \norm{\bx_{k} - \eta_k \bbg_k - \bx^*}^2 + \frac{1}{n}\norm{\bE \bbe_k}^2 
    + 
    \eta_k^2 \bE \norm{\bbe_k}^2, \label{eq:5}
\end{align}
%%%%%%
where we used the inequality $-2 \langle \bba,\bbb \rangle \leq \alpha \norm{\bba}^2 + \alpha^{-1}\norm{\bbb}^2$ for any two vectors $\bba,\bbb$ and scalar $\alpha > 0$. In the following, we bound each of the three terms in the RHS of \eqref{eq:5}. First, consider the term $\norm{\bx_{k} - \eta_k \bbg_k - \bx^*}^2$ and recall the auxiliary sequence $\{\beta_{k,t}\}$ defined in \eqref{eq:beta}. For every $t$ and $k$ we have
%%%%%%
\begin{align}
    \norm{ \beta_{k,t + 1} - \bx^*}^2 
    &=
    \norm{ \beta_{k,t} - \eta_k \gr f(\beta_{k,t}) -  \bx^*}^2 \\
    &=
    \norm{ \beta_{k,t} - \bx^*}^2 - 2 \eta_k \left \langle \beta_{k,t} - \bx^*, \gr f(\beta_{k,t}) \right \rangle + \eta_k^2 \norm{\gr f(\beta_{k,t})}^2 \\
    & \leq
    \left(1 - 2 \mu \eta_k + L^2 \eta_k^2 \right) \norm{ \beta_{k,t} - \bx^*}^2 \\
    & \leq
    (1 - \mu \eta_k) \norm{ \beta_{k,t} - \bx^*}^2. \label{eq:beta}
\end{align}
%%%%%%
In the above derivations, we used the facts that $f$ is $\mu$-strongly convex and its gradient is $L$-Lipschitz (Assumptions \ref{assump-smooth} and \ref{assump-convex}). The stepsize is also picked such that $\eta_k \leq \nicefrac{\mu}{L^2}$. Now, conditioned on the history $\ccalF_{k,0}$ and using \eqref{eq:beta} we have
%%%%%%
\begin{align}
    \norm{\bx_{k} - \eta_k \bbg_k - \bx^*}^2 
    &=
    \norm{ \beta_{k,\tau} - \bx^*}^2 \\
    &\leq
    \left(1 - \mu \eta_k \right)^{\tau} \norm{ \beta_{k,0} - \bx^*}^2 \\
    &=
    \left(1 - \mu \eta_k \right)^{\tau} \norm{ \bx_{k} - \bx^*}^2.
\end{align}
%%%%%%
Secondly, consider the term $\norm{\bE \bbe_k}^2$ in \eqref{eq:5}. By definition, we have $\bE \bbe_{k} = \bE \bbe_{k,1} + \cdots + \bE \bbe_{k,\tau-1}$ and hence $\norm{\bE \bbe_{k}}^2 \leq (\tau-1)\norm{\bE \bbe_{k,1}}^2  + \cdots + (\tau-1)\norm{\bE \bbe_{k,\tau-1}}^2$. The first term $\norm{\bE \bbe_{k,1}}^2$ can be bounded using Assumptions \ref{assump-smooth} and \ref{assump-gr} as follows:
%%%%%%
\begin{align}
    \norm{\bE \bbe_{k,1}}^2 
    &=
    \norm{\frac{1}{n} \sum_{i \in [n]} \bE \tNab f_i \left(\bx^{(i)}_{k,1}\right) 
    -
    \gr f \left(\beta_{k,1}\right)}^2 \\
    &=
    \norm{\frac{1}{n} \sum_{i \in [n]} \bE \gr f \left(\bx^{(i)}_{k,1}\right) 
    -
    \gr f \left(\beta_{k,1}\right)}^2 \\
    &\leq
    \frac{1}{n} \sum_{i \in [n]} \bE \norm{ \gr f \left(\bx^{(i)}_{k,1}\right) 
    -
    \gr f \left(\beta_{k,1}\right)}^2 \\
    &\leq
    \frac{1}{n} L^2 \sum_{i \in [n]} \bE \norm{\bx^{(i)}_{k,1} - \beta_{k,1}}^2 \\
    &=
    \frac{1}{n} L^2 \sum_{i \in [n]} \bE \norm{ \left( \bx^{(i)}_{k, 0} - \eta_k \tNab f_i \left(\bx^{(i)}_{k, 0}\right)  \right) -  \left( \beta_{k, 0} - \eta_k \gr f \left(\beta_{k, 0}\right)  \right)}^2 \\
    &=
    \frac{1}{n} L^2 \eta_k^2 \sum_{i \in [n]} \bE \norm{ \tNab f_i\left(\bx_k\right) - \gr f \left(\bx_k\right)}^2 \\
    &\leq
    L^2 \sigma^2 \eta_k^2.
\end{align}
%%%%%%
In general, for each $t=1\cdots,\tau-1$ we can write
%%%%%%
\begin{align}
    \norm{\bE \bbe_{k,t}}^2 
    &=
    \norm{\frac{1}{n} \sum_{i \in [n]} \bE \tNab f_i \left(\bx^{(i)}_{k,t}\right) 
    -
    \gr f \left(\beta_{k,t}\right)}^2 \\
    &=
    \norm{\frac{1}{n} \sum_{i \in [n]} \bE \gr f \left(\bx^{(i)}_{k,t}\right) 
    -
    \gr f \left(\beta_{k,t}\right)}^2 \\
    &\leq
    \frac{1}{n} \sum_{i \in [n]} \bE \norm{ \gr f \left(\bx^{(i)}_{k,t}\right) 
    -
    \gr f \left(\beta_{k,t}\right)}^2 \\
    &\leq
    \frac{1}{n} L^2 \sum_{i \in [n]} \bE \norm{\bx^{(i)}_{k,t} - \beta_{k,t}}^2. 
\end{align}
%%%%%%
Let us denote $a_{k,t} \coloneqq \frac{1}{n}\sum_{i \in [n]} \bE \norm{\bx^{(i)}_{k,t} - \beta_{k,t}}^2$. In the following, we will derive a recursive bound on $a_t$. That is,
%%%%%%
\begin{align}
    a_{k,t} 
    &=
    \frac{1}{n}\sum_{i \in [n]} \bE \norm{\bx^{(i)}_{k,t} - \beta_{k,t}}^2 \\
    &=
    \frac{1}{n}\sum_{i \in [n]} \bE \Bigg\|\left( \bx^{(i)}_{k, 0} - \eta_k \tNab f_i \left(\bx^{(i)}_{k, 0}\right) - \cdots - \eta_k \tNab f_i \left(\bx^{(i)}_{k, t-1}\right) \right) \\
    &\quad \quad \quad \quad \quad \quad \quad-
    \left( \beta_{k, 0} - \eta_k \gr f \left(\beta_{k, 0}\right) - \cdots -  \eta_k \gr f \left(\beta_{k, t-1}\right) \right)\Bigg\|^2 \\
    &=
    \frac{1}{n} \eta_k^2 \sum_{i \in [n]} \bE \Bigg\|\tNab f_i \left(\bx^{(i)}_{k, 0}\right) - \gr f \left(\beta_{k, 0}\right) + \cdots + \tNab f_i \left(\bx^{(i)}_{k, t-1}\right) - \gr f \left(\beta_{k, t-1}\right)\Bigg\|^2 \\
    &\leq
    \eta_k^2 \sigma^2
    +
    \frac{1}{n} \eta_k^2 \sum_{i \in [n]} \bE \Bigg\|\tNab f_i \left(\bx^{(i)}_{k, 1}\right) - \gr f \left(\beta_{k, 1}\right) + \cdots + \tNab f_i \left(\bx^{(i)}_{k, t-1}\right) - \gr f \left(\beta_{k, t-1}\right)\Bigg\|^2 \\
    &\leq
    \eta_k^2 \sigma^2
    +
    \frac{1}{n} \eta_k^2 \sum_{i \in [n]} \bE \Bigg\|\tNab f_i \left(\bx^{(i)}_{k, 1}\right) - \gr f \left(\bx^{(i)}_{k, 1}\right)
    +
    \gr f \left(\bx^{(i)}_{k, 1}\right) - \gr f \left(\beta_{k, 1}\right) \\
    &\quad\quad\quad\quad\quad\quad +
    \cdots +
    \tNab f_i \left(\bx^{(i)}_{k, t-1}\right) - \gr f \left(\bx^{(i)}_{k, t-1}\right)
    +
    \gr f \left(\bx^{(i)}_{k, t-1}\right) - \gr f \left(\beta_{k, t-1}\right)\Bigg\|^2 \\
    &\leq
    t \eta_k^2 \sigma^2
    +
    \frac{1}{n} \eta_k^2 \sum_{i \in [n]} \bE \Bigg\|\gr f \left(\bx^{(i)}_{k, 1}\right) - \gr f \left(\beta_{k, 1}\right) 
    + \cdots
    \gr f \left(\bx^{(i)}_{k, t-1}\right) - \gr f \left(\beta_{k, t-1}\right)\Bigg\|^2 \\
    &\leq
    t \eta_k^2 \sigma^2 
    +
    (t-1) L^2 \eta_k^2  \frac{1}{n} \sum_{i \in [n]} \bE \Bigg\|\bx^{(i)}_{k, 1} - \beta_{k, 1}\Bigg\|^2 
    + \cdots
    (t-1) L^2 \eta_k^2 \frac{1}{n} \sum_{i \in [n]} \bE \Bigg\|
    \bx^{(i)}_{k, t-1} - \beta_{k, t-1}\Bigg\|^2 \\
    &=
    t \eta_k^2 \sigma^2 
    +
    (t-1) L^2 \eta_k^2 \left( a_{k,1} + \cdots + a_{k,t-1} \right) \\
    &\leq
    \tau \eta_k^2 \sigma^2 
    +
    \tau L^2 \eta_k^2 \left( a_{k,1} + \cdots + a_{k,t-1} \right).
\end{align}
%%%%%%
Therefore, for the sequence $\{a_{k,1},\cdots,a_{k,\tau-1}\}$ we have shown that 
%%%%%%
\begin{align}
    a_{k,t} 
    &\leq
    \tau \eta_k^2 \sigma^2 
    +
    \tau L^2 \eta_k^2 \left( a_{k,1} + \cdots + a_{k,t-1} \right),
\end{align}
%%%%%%
where $a_{k,1} \leq \sigma^2 \eta_k^2$. We can show by induction, that such sequence satisfies the following inequality:
%%%%%%
\begin{align}
    a_{k,t}
    &\leq
    \tau \eta_k^2 \sigma^2 \left( 1 + \tau L^2 \eta_k^2 \right)^{t-1}. \label{eq:recursive}
\end{align}
%%%%%%
See Section \ref{subsec:proof-recursive} for the detailed proof. Therefore, we have
%%%%%%
\begin{align}
    \norm{\bE \bbe_{k}}^2 
    &\leq 
    (\tau-1)\norm{\bE \bbe_{k,1}}^2  + \cdots + (\tau-1)\norm{\bE \bbe_{k,\tau-1}}^2 \\
    &\leq
    (\tau-1)L^2 \left( a_1 + \cdots + a_{\tau-1} \right)\\
    &\leq
    \tau (\tau-1)^2 L^2 \sigma^2 \eta_k^2 \left( 1 + \tau L^2 \eta_k^2 \right)^{\tau}.
\end{align}
%%%%%%
Now, we use the inequality $1+x \leq e^x$ and conclude that
%%%%%%
\begin{align}
    \norm{\bE \bbe_{k}}^2
    \leq 
    \tau (\tau-1)^2 L^2 \sigma^2 \eta_k^2 e^{\tau^2 L^2 \eta_k^2}.
\end{align}
%%%%%%
Therefore, if $\tau^2 L^2 \eta_k^2 \leq 1$, we have
%%%%%%
\begin{align}
    \norm{\bE \bbe_{k}}^2 
    &\leq 
    \tau (\tau-1)^2 L^2 \sigma^2 e \eta_k^2.
\end{align}
%%%%%%
Finally, we bound the third term in \eqref{eq:5}, that is $\bE \norm{\bbe_k}^2$. Using the definition, we know that $\bE \norm{\bbe_k}^2 \leq \tau \bE \norm{\bbe_{k,0}}^2 + \cdots + \tau \bE \norm{\bbe_{k,\tau-1}}^2$. Firstly, note that
%%%%%%
\begin{align}
    \bE \norm{\bbe_{k,0}}^2 
    &=
    \bE \norm{\frac{1}{n} \sum_{i \in [n]}\tNab f_i \left(\bx^{(i)}_{k,0}\right) 
    -
    \gr f \left(\beta_{k,0}\right)}^2 \\
    &=
    \bE \norm{\frac{1}{n} \sum_{i \in [n]}\tNab f_i \left(\bx_{k}\right) 
    -
    \gr f \left(\bx_{k}\right)}^2 \\
    &\leq
    \frac{\sigma^2}{n}.
\end{align}
%%%%%%
For each $t=1,\cdots,\tau-1$ we have
%%%%%%
\begin{align}
    \bE \norm{\bbe_{k,t}}^2 
    &=
    \bE \norm{\frac{1}{n} \sum_{i \in [n]}\tNab f_i \left(\bx^{(i)}_{k,t}\right) 
    -
    \gr f \left(\beta_{k,t}\right)}^2 \\
    &=
    \bE \norm{\frac{1}{n} \sum_{i \in [n]}\tNab f_i \left(\bx^{(i)}_{k,t}\right) 
    -
    \gr f \left(\bx^{(i)}_{k,t}\right)
    +
    \frac{1}{n} \sum_{i \in [n]}\gr f \left(\bx^{(i)}_{k,t}\right) 
    -
    \gr f \left(\beta_{k,t}\right)}^2 \\
    &\leq
    \frac{\sigma^2}{n}
    +
    L^2 \frac{1}{n}\sum_{i \in [n]} \bE \norm{\bx^{(i)}_{k,t} - \beta_{k,t}}^2 \\
    &=
    \frac{\sigma^2}{n}
    +
    L^2 a_{k,t}.
\end{align}
%%%%%%
Summing over $t=0,1,\cdots,\tau-1$ results in the following
%%%%%%
\begin{align}
    \bE \norm{\bbe_k}^2 
    &\leq
    \tau \bE \norm{\bbe_{k,0}}^2 + \cdots + \tau \bE \norm{\bbe_{k,\tau-1}}^2 \\
    &\leq
    \tau^2 \frac{\sigma^2}{n} 
    +
    \tau L^2 \left( a_1 + \cdots + a_{\tau-1} \right)\\
    &\leq
    \tau^2 \frac{\sigma^2}{n}
    +
    \tau^2 (\tau-1) L^2 \sigma^2 \eta_k^2 \left( 1 + \tau L^2 \eta_k^2 \right)^{\tau} \\
    &\leq
    \tau^2 \frac{\sigma^2}{n}
    +
    \tau^2 (\tau-1) L^2 \sigma^2 e \eta_k^2. \label{eq:e_k}
\end{align}
%%%%%%
Now, we can put everything together and conclude Lemma \ref{lemma:2}, as follows
%%%%%%
\begin{align}
    \bE \norm{\xbar_{k,\tau} - \bx^*}^2
    &=
    \left(1 + n \eta_k^2 \right) \bE \norm{\bx_{k} - \eta_k \bbg_k - \bx^*}^2 + \frac{1}{m}\norm{\bE \bbe_k}^2 
    + 
    \eta_k^2 \bE \norm{\bbe_k}^2 \\
    &\leq
    \left(1 + n \eta_k^2 \right) \left(1 - \mu \eta_k \right)^{\tau} \bE \norm{ \bx_{k} - \bx^*}^2 \\
    & \quad +
    \tau (\tau-1)^2 L^2 \frac{\sigma^2}{n} e \eta_k^2 
    +
    \tau^2 \frac{\sigma^2}{n} \eta_k^2 \\
    & \quad +
    \tau^2 (\tau-1) L^2 \sigma^2 e \eta_k^4.
\end{align}
%%%%%%

\subsection{Proof of Lemma \ref{lemma:3}}\label{subsec:lemma3-proof}

According to the notations defined on \eqref{eq:notation}, we can write 

%%%%%%
\begin{align}
    \bE \norm{\widehat{\bx}_{k+1} - \xbar_{k, \tau}}^2
    &=
    \bE \norm{ \bx_{k} + \frac{1}{n} \sum_{i \in [n]} Q \left(\bx^{(i)}_{k,\tau} - \bx_{k} \right) - \frac{1}{n} \sum_{i \in [n]} \bx^{(i)}_{k,\tau}}^2 \\
    &=
    \bE \norm{\frac{1}{n} \sum_{i \in [n]} Q \left(\bx^{(i)}_{k,\tau} - \bx_{k} \right) - \left(\bx^{(i)}_{k,\tau} - \bx_{k} \right)}^2 \\
    &=
    \frac{1}{n^2} \sum_{i \in [n]}\bE \norm{Q \left(\bx^{(i)}_{k,\tau} - \bx_{k} \right) - \left(\bx^{(i)}_{k,\tau} - \bx_{k} \right)}^2\\
    &\leq
    q \frac{1}{n^2} \sum_{i \in [n]} \bE \norm{\bx^{(i)}_{k,\tau} - \bx_{k}}^2, \label{eq:8}
\end{align}
%%%%%%
where, we used Assumption \ref{assump-Q}. In particular, the last equality above follows from the fact that the random quatizer is unbiased and the quantizations are carried out independently in each iteration and each worker. Moreover, the last inequality in \eqref{eq:8} simply relates the variance of the quantization to its argument. 
Next, we bound $\bE \norm{\bx^{(i)}_{k,\tau} - \bx_{k}}^2$ for each worker $i \in [n]$. From the update rule in Algorithm \ref{alg:update} we have
%%%%%%
\begin{align}
    \bx^{(i)}_{k,\tau} 
    &=
    \bx_{k} - \eta_k  \left(  \tNab f_i \left( \bx^{(i)}_{k,0} \right) +   \cdots 
    +
    \tNab f_i \left( \bx^{(i)}_{k,\tau-1} \right)\right) \\
    &=
    \bx_{k} - \eta_k  \left( \bbg_k + \bbe^{(i)}_k \right),
\end{align}
%%%%%%
where we denote
%%%%%%
\begin{align}
    \bbe^{(i)}_k
    \coloneqq 
    \tNab f_i \left( \bx^{(i)}_{k,0} \right) 
    -
    \gr f \left( \beta_{k,0} \right)
    +
    \cdots 
    +
    \tNab f_i \left( \bx^{(i)}_{k,\tau-1} \right)
    -
    \gr f \left( \beta_{k,\tau-1} \right),
\end{align}
%%%%%%
and $\bbg_k = \gr f(\beta_{k,0}) + \cdots + \gr f(\beta_{k,\tau-1})$ as defined before. Using these notations we have
%%%%%%
\begin{align}
    \bE \norm{\bx^{(i)}_{k,\tau} - \bx_{k}}^2
    & =
    \eta_k^2 \bE \norm{\bbg_k + \bbe^{(i)}_k}^2\\
    &\leq
    2 \eta_k^2 \norm{\bbg_k}^2
    +
    2 \eta_k^2 \bE \norm{\bbe^{(i)}_k}^2. \label{eq:x-t}
\end{align}
%%%%%%
Let us first bound the first term in \eqref{eq:x-t}, i.e. $\norm{\bbg_k}^2$. That is,
%%%%%%
\begin{align}
    \norm{\bbg_k}^2
    &\leq
    \tau \norm{\gr f(\beta_{k,0})}^2
    +
    \cdots+
    \tau \norm{\gr f(\beta_{k,\tau-1})}^2\\
    &\overset{(a)}{\leq}
    \tau L^2 \left( \norm{\bx_k - \bx^*}^2 + \cdots + \left( 1 - \mu \eta_k \right)^{\tau-1}\norm{\bx_k - \bx^*}^2\right) \\
    &\leq
    \tau^2 L^2 \norm{\bx_k - \bx^*}^2, \label{eq:gk}
\end{align}
%%%%%%
where we used the smoothness of the loss function $f$ (Assumption \ref{assump-smooth}) and the result in \eqref{eq:beta} to derive inequality $(a)$. To bound the second term in \eqref{eq:x-t}, i.e. $\bE \norm{\bbe^{(i)}_k}^2$, we can employ our result in \eqref{eq:e_k} for the special case $n=1$. It yields that for $\eta_k \leq \nicefrac{1}{L \tau}$,
%%%%%%
\begin{align}
    \bE \norm{\bbe^{(i)}_k}^2
    &\leq
    \tau^2 \sigma^2
    +
    \tau^2 (\tau-1) L^2 \sigma^2 e \eta_k^2. \label{eq:ei}
\end{align}
%%%%%%
Plugging \eqref{eq:gk} and \eqref{eq:ei} in \eqref{eq:x-t} implies that
%%%%%%
\begin{align}
    \bE \norm{\bx^{(i)}_{k,\tau} - \bx_{k}}^2
    &\leq
    2  \tau^2 L^2 \eta_k^2  \bE \norm{\bx_k - \bx^*}^2
    +
    2 \tau^2 \sigma^2 \eta_k^2
    +
    2 (\tau-1)\tau^2  L^2 \sigma^2 e \eta_k^4, \label{eq:x-t_bound}
\end{align}
%%%%%%
which together with \eqref{eq:8} concludes Lemma \ref{lemma:3}:
%%%%%%
\begin{align}
    \bE \norm{\widehat{\bx}_{k+1} - \xbar_{k, \tau}}^2
    &\leq
    2 \frac{q}{n} \tau^2 L^2 \eta_k^2  \bE \norm{\bx_k - \bx^*}^2
    +
    2q \tau^2 \frac{\sigma^2}{n} \eta_k^2
    +
    2 q (\tau-1)\tau^2  L^2 \frac{\sigma^2}{n} e \eta_k^4.
\end{align}
%%%%%%

\subsection{Proof of Lemma \ref{lemma:4}}\label{subsec:lemma4-proof}

For each node $i \in [n]$ denote $\bbz^{(i)}_{k,\tau} = Q (\bx^{(i)}_{k,\tau} - \bx_{k})$ and $\zbar_{k, \tau} = \frac{1}{n} \sum_{i \in [n]} \bbz^{(i)}_{k, \tau}$. Then,
%%%%%%
\begin{align}
    \bE_{ \ccalS_{k}} \norm{\bx_{k+1} - \widehat{\bx}_{k+1}}^2
    &=
    \bE_{ \ccalS_{k}} \norm{\frac{1}{r} \sum_{i \in \ccalS_{k}} \bbz^{(i)}_{k, \tau} - \zbar_{k, \tau}}^2 \\
    &=
    \frac{1}{r^2} \bE_{ \ccalS_{k}} \norm{\sum_{i \in [n]} \mathbbm{1}\{i \in \ccalS_{k} \} \left( \bbz^{(i)}_{k, \tau} - \zbar_{k, \tau} \right)}^2 \\
    &=
    \frac{1}{r^2} \Bigg\{ \sum_{i \in [n]} \Pr{i \in \ccalS_{k}} \norm{ \bbz^{(i)}_{k, \tau} - \zbar_{k, \tau} }^2 \\
    &\quad +
    \sum_{i \neq j} \Pr{i,j \in \ccalS_{k}} \left\langle \bbz^{(i)}_{k, \tau} - \zbar_{k, \tau} , \bbz^{(j)}_{k, \tau} - \zbar_{k, \tau} \right\rangle
    \Bigg\} \\
    &=
    \frac{1}{nr} \sum_{i \in [n]} \norm{ \bbz^{(i)}_{k, \tau} - \zbar_{k, \tau} }^2 \\
    &\quad +
    \frac{r-1}{r n (n-1)}\sum_{i \neq j} \left\langle \bbz^{(i)}_{k, \tau} - \zbar_{k, \tau} , \bbz^{(j)}_{k, \tau} - \zbar_{k, \tau} \right\rangle \\
    &=
    \frac{1}{r(n-1)} \left( 1 - \frac{r}{n} \right) \sum_{i \in [n]} \norm{ \bbz^{(i)}_{k, \tau} - \zbar_{k, \tau} }^2, \label{eq:x-xhat}
\end{align}
%%%%%%
where we used the fact that $\norm{ \bbz^{(i)}_{k, \tau} - \zbar_{k, \tau} }^2 + \sum_{i \neq j} \left\langle \bbz^{(i)}_{k, \tau} - \zbar_{k, \tau} , \bbz^{(j)}_{k, \tau} - \zbar_{k, \tau} \right\rangle=0$. Further taking expectation with respect to the quantizer yields
%%%%%%
\begin{align}
    \sum_{i \in [n]} \bE_{Q} \norm{ \bbz^{(i)}_{k, \tau} - \zbar_{k, \tau} }^2 
    &\leq
    2 \sum_{i \in [n]} \bE_{Q} \norm{ \bbz^{(i)}_{k, \tau}}^2 
    +
    2 n \bE_{Q} \norm{ \zbar_{k, \tau} }^2 \\
    &\leq
    4 \sum_{i \in [n]} \bE_{Q} \norm{ \bbz^{(i)}_{k, \tau}}^2 \\
    &=
    4 \sum_{i \in [n]} \bE_{Q} \norm{Q \left(\bx^{(i)}_{k,\tau} - \bx_{k} \right)}^2 \\
    &\leq
    4(1+q) \sum_{i \in [n]} \norm{\bx^{(i)}_{k,\tau} - \bx_{k}}^2. \label{eq:z}
\end{align}
%%%%%%
In the above derivations, we used the fact that under Assumption \ref{assump-Q} and for any $\bx$ we have $\bE \norm{Q(\bx)}^2 \leq (1+q) \norm{\bx}^2$. Therefore, \eqref{eq:z} together with the equality derived in \eqref{eq:x-xhat} yields that
%%%%%%
\begin{align}
    \bE \norm{\bx_{k+1} - \widehat{\bx}_{k+1}}^2
    &\leq
    \frac{1}{r(n-1)} \left( 1 - \frac{r}{n} \right) 4(1+q) \sum_{i \in [n]} \bE \norm{\bx^{(i)}_{k,\tau} - \bx_{k}}^2. \label{eq:x-xhat2}
\end{align}
%%%%%%
Finally, we substitute the bound in \eqref{eq:x-t_bound} into \eqref{eq:x-xhat2} and conclude Lemma \ref{lemma:4} as follows:
%%%%%%
\begin{align}
    \bE \norm{\bx_{k+1} - \widehat{\bx}_{k+1}}^2
    &\leq
    \frac{n-r}{r(n-1)} 8(1+q) \left\{
     \tau^2 L^2 \eta^2  \bE \norm{\bx_k - \bx^*}^2  +
     \tau^2 \sigma^2 \eta^2
    +
     (\tau-1)\tau^2  L^2 \sigma^2 e \eta^4 \right\}.
\end{align}
%%%%%%

\subsection{Proof of Equation \eqref{eq:recursive}} \label{subsec:proof-recursive}
%\textcolor{red}{Ramtin: I suggest we either have a lemma for this equation/property or include the proof inside the text instead of having separate subsection.}
Let us fix the period $k$ and for simplicity of the notations in this proof, let us take $a_t = a_{k,t}$ and $\eta = \eta_k$. We showed that
%%%
$a_t \leq
    \tau \eta^2 \sigma^2 
    +
    \tau L^2 \eta^2 \left( a_1 + \cdots + a_{t-1} \right)$
%%%
for every $t=2,\cdots,\tau-1$ and also $a_1 \leq \eta^2 \sigma^2 $. For $t=1$, \eqref{eq:recursive} holds. Assume that \eqref{eq:recursive} holds also for $\{a_1,\cdots, a_{t-1}\}$. Now, for $a_t$ we have

%%%%%%
\begin{align}
    a_t 
    &\leq
    \tau \eta^2 \sigma^2 
    +
    \tau L^2 \eta^2 \left( a_1 + \cdots + a_{t-1} \right) \\
    &\leq
    \tau \eta^2 \sigma^2
    +
    \tau L^2 \eta^2 \sum_{i=0}^{t-2} \tau \eta^2 \sigma^2  \left( 1 + \tau L^2 \eta^2 \right)^i \\
    &=
    \tau \eta^2 \sigma^2
    +
    \tau \eta^2 \sigma^2 \cdot \tau L^2 \eta^2 \cdot \frac{\left( 1 + \tau L^2 \eta^2 \right)^{t-1} - 1}{\tau L^2 \eta^2}\\
    &=
    \tau \eta^2 \sigma^2  \left( 1 + \tau L^2 \eta^2 \right)^{t-1},
\end{align}
%%%%%%
as desired. Therefore, \eqref{eq:recursive} holds for every $t=1,\cdots,\tau-1$.

%\noindent\makebox[\linewidth]{\rule{\paperwidth}{0.4pt}}

\subsection{Discussion on stepsize $\eta_k$}\label{sec:stepsize-convex}

Here we show that for any $k \geq k_0$ we have $C_0 \leq 1 - \frac{1}{2} \mu \tau \eta_k$, where $k_0$ satisfies the condition in Theorem \ref{thm:1}, that is 
%%%%%%
\begin{align}
    k_0
    \geq
    4 \max \left\{ \frac{L}{\mu}, 4 \left( \frac{B_1}{\mu^2} + 1 \right), \frac{1}{\tau}, \frac{4n}{\mu^2 \tau} \right\}.
\end{align}
%%%%%%
First note that this condition on $k_0$ implies the following conditions on the stepsize $\eta_k = \nicefrac{4 \mu^{-1}}{k\tau+1}$ for $k \geq k_0$:
%%%%%%
\begin{align}
    \eta_k \tau 
    \leq 
    \min \left\{ \frac{1}{L}, \frac{\mu}{4 \left( \mu^2 + B_1 \right)} \right\}, 
    \quad \text{ and } \quad
    \eta_k 
    \leq 
    \min \left\{ \frac{\mu}{L^2}, \frac{\mu}{4n} \right\},
\end{align}
%%%%%%
Now consider the term $(1 - \mu \eta_k )^{\tau}$ in $C_0$. We have
%%%%%%
\begin{align}
    \left(1 - \mu \eta_k \right)^{\tau}
    &=
    \left(1 - \frac{\mu \tau \eta_k}{\tau}  \right)^{\tau} \\
    &\leq
    e^{-\mu \tau \eta_k} \\
    &\leq
    1 - \mu \tau \eta_k + \mu^2 \tau^2 \eta_k^2,
\end{align}
%%%%%%
where the first inequality follows from the assumption $\eta_k \leq \nicefrac{1}{\mu}$ and the second inequality uses the fact that $e^x \leq 1 + x + x^2$ for $x \leq 0$. Therefore, 
%%%%%%
\begin{align}
    C_0
    &\leq
    \left(1 + n \eta_k^2 \right)
    \left( 1 - \mu \tau \eta_k + \mu^2 \tau^2 \eta_k^2 \right)
    +
    B_1 \tau^2 \eta_k^2 \\
    &=
    1 - \mu \tau \eta_k + \tau^2 \eta_k^2 (B_1 + \mu^2) + n\eta_k^2 \left( 1 - \mu \tau \eta_k + \mu^2 \tau^2 \eta_k^2 \right). 
\end{align}
%%%%%%
Note that from the assumption $\eta_k \leq 
\nicefrac{1}{L \tau}$ we have $0 \leq \mu \tau \eta_k \leq \nicefrac{\mu}{L} \leq 1$. This implies that $1 - \mu \tau \eta_k + \mu^2 \tau^2 \eta_k^2 \leq 1$. Hence, 
%%%%%%
\begin{align}
    C_0 
    \leq
    1 - \mu \tau \eta_k + \tau^2 \eta_k^2 (B_1 + \mu^2) + n\eta_k^2. \label{eq:C0}
\end{align}
%%%%%%
Now from the condition $\eta_k \tau \leq \nicefrac{\mu}{4(B_1 + \mu^2)}$ we have 
%%%%%%
\begin{align}
    \tau^2 \eta_k^2 (B_1 + \mu^2)
    \leq
    \frac{1}{4}\mu \tau \eta_k, \label{eq:C0-1}
\end{align}
%%%%%%
and from $\eta_k \leq \nicefrac{\mu}{4n}$ we have 
%%%%%%
\begin{align}
    n\eta_k^2
    \leq
    \frac{1}{4}\mu \tau \eta_k, \label{eq:C0-2}
\end{align}
%%%%%%
sine $\tau \geq 1$. Plugging \eqref{eq:C0-1} and \eqref{eq:C0-2} in \eqref{eq:C0} yields that for any $k \geq k_0$ we have $C_0 \leq 1 - \frac{1}{2}\mu \tau \eta_k$.

\subsection{Skipped lemmas and proofs}

\begin{lemma} \label{lemma:delta-k}
Let a non-negative sequence $\delta_k$ satisfy the following
%%%%%%
\begin{align}
    \delta_{k+1}
    \leq
    \left( 1 - \frac{2}{k+k_1} \right) \delta_k
    +
    \frac{a}{(k + k_1)^2}
    +
    \frac{b}{(k + k_1)^4},
\end{align}
%%%%%%
for every $k \geq k_0$, where $a,b,c,k_1$ are positive reals and $k_0$ is a positive integer. Then for every $k \geq k_0$ we have
%%%%%%
\begin{align}
    \delta_{k}
    \leq
    \frac{(k_0 + k_1)^2}{(k + k_1)^2}\delta_{k_0}
    +
    \frac{a}{k + k_1}
    +
    \frac{b}{(k + k_1)^2}. \label{eq:delta-k}
\end{align}
%%%%%%
\end{lemma}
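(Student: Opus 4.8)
The plan is to prove \eqref{eq:delta-k} by induction on $k \geq k_0$, treating its right-hand side as the candidate bound $R_k := \frac{(k_0+k_1)^2}{(k+k_1)^2}\delta_{k_0} + \frac{a}{k+k_1} + \frac{b}{(k+k_1)^2}$ (the constant $c$ in the hypothesis plays no role and can be ignored). For the base case $k = k_0$ one has $R_{k_0} = \delta_{k_0} + \frac{a}{k_0+k_1} + \frac{b}{(k_0+k_1)^2} \geq \delta_{k_0}$ since $a,b>0$, so the claim holds at $k_0$ with room to spare.

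For the inductive step I would assume $\delta_k \leq R_k$ and substitute it into the hypothesized recursion. Writing $m := k+k_1$ and $A := (k_0+k_1)^2\delta_{k_0}$, the coefficient $1-2/m$ is nonnegative in the regime of interest (in the application one has $k_0 \geq 4L/\mu \geq 4$, hence $m \geq k_0+k_1 > 2$), which lets me multiply the inductive hypothesis through without reversing the inequality, giving
\begin{align}
\delta_{k+1}
&\leq
\left(1 - \frac{2}{m}\right)\left(\frac{A}{m^2} + \frac{a}{m} + \frac{b}{m^2}\right) + \frac{a}{m^2} + \frac{b}{m^4}.
\end{align}
Expanding and collecting powers of $m$, the three groups of terms simplify cleanly to $A\,\frac{m-2}{m^3}$, $a\,\frac{m-1}{m^2}$, and $b\,\frac{(m-1)^2}{m^4}$, respectively.

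It then remains to show this expression is at most $R_{k+1} = \frac{A}{(m+1)^2} + \frac{a}{m+1} + \frac{b}{(m+1)^2}$, and the pleasant feature is that the comparison decouples term by term. Each comparison reduces to an elementary polynomial inequality: $(m-2)(m+1)^2 \leq m^3$ for the $\delta_{k_0}$-term (i.e.\ $-3m-2\leq 0$), $(m-1)(m+1)\leq m^2$ for the $a$-term (i.e.\ $-1\leq 0$), and $(m^2-1)^2 \leq m^4$ for the $b$-term (i.e.\ $1-2m^2\leq 0$). All three hold for every $m \geq 1$, and since $m = k+k_1 \geq k_0+k_1 \geq 1$ throughout, the inductive step closes and \eqref{eq:delta-k} follows.

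The only obstacle I anticipate is bookkeeping around the sign of $1-2/m$: the clean propagation above requires $m\geq 2$. If instead $m<2$ one would discard the nonpositive term $(1-2/m)\delta_k \leq 0$ and argue directly from $\delta_{k+1}\leq \frac{a}{m^2}+\frac{b}{m^4}$, which still yields $R_{k+1}$ as long as $m \geq (1+\sqrt{5})/2$; this covers any boundary case not handled by the $m\geq 2$ argument. Because the three term-wise inequalities hold separately with no cross-term cancellation required, no delicate balancing of constants is needed, and the induction goes through uniformly.
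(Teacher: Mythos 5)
Your proof is correct and follows essentially the same route as the paper's: induction on $k$, collecting the recursion into the three groups $A\frac{m-2}{m^3}$, $a\frac{m-1}{m^2}$, $b\frac{(m-1)^2}{m^4}$, and closing via the same three elementary inequalities $(m-2)(m+1)^2\le m^3$, $(m-1)(m+1)\le m^2$, $(m^2-1)^2\le m^4$. Your extra care about the sign of $1-2/m$ (needed to propagate the inductive hypothesis without flipping the inequality) is a legitimate point the paper glosses over, though it is harmless in the application since $k_0\ge 4$ there.
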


\begin{proof}
We prove by induction on $k \geq k_0$. The claim in \eqref{eq:delta-k} is trivial for $k = k_0$. Let \eqref{eq:delta-k} hold for $s \geq k_0$, that is
%%%%%%
\begin{align}
    \delta_{s}
    \leq
    \frac{(k_0 + k_1)^2}{(s + k_1)^2}\delta_{k_0}
    +
    \frac{a}{s + k_1}
    +
    \frac{b}{(s + k_1)^2}.
\end{align}
%%%%%%
We can then write
%%%%%%
\begin{align}
    \delta_{s+1}
    &\leq
    \left( 1 - \frac{2}{s+k_1} \right) \delta_s
    +
    \frac{a}{s + k_1}
    +
    \frac{b}{(s + k_1)^2}\\
    &\leq
    \left( 1 - \frac{2}{s+k_1} \right) 
    \left( \frac{(k_0 + k_1)^2}{(s + k_1)^2}\delta_{k_0}
    +
    \frac{a}{s + k_1}
    +
    \frac{b}{(s + k_1)^2} \right)
    +
    \frac{a}{(s + k_1)^2}
    +
    \frac{b}{(s + k_1)^4} \\
    & =
    \frac{s+k_1-2}{(s+k_1)^3} (k_0 + k_1)^2 \delta_{k_0}
    +
    \frac{s + k_1 - 1}{(s + k_1)^2} a
    +
    \frac{(s + k_1 - 1)^2}{(s + k_1)^4} b. \label{eq:s+1}
\end{align}
%%%%%%
Now, take $s' = s + k_1$. We have for $s' \geq 1$ that
%%%%%%
\begin{align}
    \frac{s'-2}{s'^3} 
    \leq
    \frac{1}{(s'+1)^2},
    \quad  \quad
    \frac{s' - 1}{s'^2} 
    \leq
    \frac{1}{s'+1}, 
    \quad \quad
    \frac{(s' - 1)^2}{s'^4} 
    \leq
    \frac{1}{(s'+1)^2}. \label{eq:s'}
\end{align}
%%%%%%
Plugging \eqref{eq:s'} in \eqref{eq:s+1} yields that the claim in \eqref{eq:delta-k} holds for $s+1$ and hence for any $k \geq k_0$.
\end{proof}

%\textcolor{red}{This needs some work. I actually don't think we need such subsection ... Can we include a discussion in a remark environment and bring the math inside the main proof?}

\section{Proof of Theorem \ref{thm:2}}\label{sec:thm2-proof}
We begin the proof of Theorem \ref{thm:2} by noting the following property for any smooth loss function.
%%%%%%
\begin{lemma}\label{lemma:noncnvx-1}

Consider the sequences of updates $\{\bx_{k+1}, \widehat{\bx}_{k+1}, \xbar_{k,\tau}\}$ generated by \texttt{FedPAQ} method in Algorithm \ref{alg:update}. If Assumptions \ref{assump-Q} and \ref{assump-smooth} hold, then
%%%%%%
\begin{align}
    \bE f(\bx_{k+1}) 
    &\leq
    \bE f(\xbar_{k,\tau})
    +
    \frac{L}{2} \bE \norm{\widehat{\bx}_{k+1} - \xbar_{k, \tau}}^2
    + 
    \frac{L}{2} \bE \norm{\widehat{\bx}_{k+1} 
    -
    \bx_{k+1}}^2, \label{eq:lemma5}
\end{align}
%%%%%%
for any period $k=0,\cdots,K-1$.
\end{lemma}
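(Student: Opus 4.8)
The plan is to pass from $\bx_{k+1}$ to the reference point $\xbar_{k,\tau}$ in two applications of the smoothness inequality, routing through the intermediate iterate $\widehat{\bx}_{k+1}$, and to annihilate the first-order (inner-product) term at each step by exploiting the two unbiasedness identities already recorded in the proof of Lemma \ref{lemma:1}: namely $\bE_{\ccalS_k} \bx_{k+1} = \widehat{\bx}_{k+1}$ from \eqref{eq:S} and $\bE_{Q} \widehat{\bx}_{k+1} = \xbar_{k,\tau}$ from \eqref{eq:Q}. Assumption \ref{assump-smooth} gives the quadratic upper bound $f(\bu) \leq f(\bz) + \langle \gr f(\bz), \bu - \bz \rangle + \frac{L}{2}\norm{\bu - \bz}^2$ for all $\bu,\bz$, so no convexity is needed and only Assumptions \ref{assump-Q} and \ref{assump-smooth} enter.

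First I would apply this bound with $\bu = \bx_{k+1}$ and $\bz = \widehat{\bx}_{k+1}$, giving $f(\bx_{k+1}) \leq f(\widehat{\bx}_{k+1}) + \langle \gr f(\widehat{\bx}_{k+1}), \bx_{k+1} - \widehat{\bx}_{k+1} \rangle + \frac{L}{2}\norm{\bx_{k+1} - \widehat{\bx}_{k+1}}^2$. Conditioning on everything except the node-sampling set $\ccalS_k$ (so that $\widehat{\bx}_{k+1}$ and all quantized local updates are fixed) and taking $\bE_{\ccalS_k}$, the identity $\bE_{\ccalS_k} \bx_{k+1} = \widehat{\bx}_{k+1}$ forces the inner-product term to vanish, leaving $\bE_{\ccalS_k} f(\bx_{k+1}) \leq f(\widehat{\bx}_{k+1}) + \frac{L}{2}\bE_{\ccalS_k}\norm{\bx_{k+1} - \widehat{\bx}_{k+1}}^2$.

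Next I would apply the same quadratic bound with $\bu = \widehat{\bx}_{k+1}$ and $\bz = \xbar_{k,\tau}$, then take the expectation $\bE_{Q}$ over the quantizer conditioned on the local updates (so that $\xbar_{k,\tau}$ is fixed); since $\bE_{Q}\widehat{\bx}_{k+1} = \xbar_{k,\tau}$, the corresponding inner-product term again drops, yielding $\bE_{Q} f(\widehat{\bx}_{k+1}) \leq f(\xbar_{k,\tau}) + \frac{L}{2}\bE_{Q}\norm{\widehat{\bx}_{k+1} - \xbar_{k,\tau}}^2$. Combining the two bounds through the tower property -- taking $\bE_{Q}$ of the first inequality, substituting the second, and finally averaging over the local SGD randomness -- produces the claim \eqref{eq:lemma5} after using $\norm{\widehat{\bx}_{k+1} - \bx_{k+1}} = \norm{\bx_{k+1} - \widehat{\bx}_{k+1}}$.

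The step I expect to require the most care is the bookkeeping of the nested conditional expectations: the sampling randomness in $\ccalS_k$ and the quantization randomness in $Q$ are independent but sit at different layers of the filtration, so the sampling expectation must be taken first (inner) and the quantizer expectation second (outer), each invoking its own unbiasedness identity at precisely the right stage. Getting this order right is exactly what guarantees that both first-order cross terms cancel; the remainder is the routine descent-lemma manipulation.
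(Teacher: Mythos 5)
Your proposal is correct and follows essentially the same route as the paper's proof: two applications of the $L$-smoothness quadratic upper bound through the intermediate point $\widehat{\bx}_{k+1}$, with the first-order terms eliminated by the unbiasedness identities $\bE_{\ccalS_k}\bx_{k+1}=\widehat{\bx}_{k+1}$ and $\bE_{Q}\widehat{\bx}_{k+1}=\xbar_{k,\tau}$ from the proof of Lemma \ref{lemma:1}. Your explicit attention to the ordering of the conditional expectations is a welcome clarification of a point the paper leaves implicit.
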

%%%%%%
\begin{proof}
See Section \ref{subsec:lemma-noncnvx-1-proof}.
\end{proof}

In the following three lemmas, we bound each of the three terms in the RHS of \eqref{eq:lemma5}. 

%%%%%%
\begin{lemma}\label{lemma:noncnvx-2}

Let Assumptions \ref{assump-smooth} and \ref{assump-gr} hold and consider the sequence of updates in \texttt{FedPAQ} method with stepsize $\eta$. Then, for every period $k=0,\cdots,K-1$ we have
%%%%%%
\begin{align}
    \bE f(\xbar_{k,\tau})
    &\leq
    \bE f (\bx_{k}) 
    -
    \frac{1}{2} \eta \sum_{t=0}^{\tau-1} \bE \norm{\gr f (\xbar_{k, t})}^2 \\
    &\quad -
    \eta \left( \frac{1}{2n} - \frac{1}{2n} L \eta - \frac{1}{n} L^2 \tau(\tau-1) \eta^2 \right) \sum_{t=0}^{\tau-1} \sum_{i \in [n]} \bE \norm{\gr f \left( \bbx^{(i)}_{k,t} \right)}^2 \\
    &\quad +
    \eta^2 \frac{L}{2} \frac{\sigma^2}{n} \tau + \eta^3 \frac{\sigma^2}{n} (n+1) \frac{\tau(\tau-1)}{2} L^2.
\end{align}
%%%%%%
\end{lemma}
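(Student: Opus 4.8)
The plan is to treat the \emph{virtual averaged iterate} $\xbar_{k,t}=\frac{1}{n}\sum_{i\in[n]}\bx^{(i)}_{k,t}$ as the quantity that descends, and then to pay for the fact that the $n$ local trajectories have drifted apart during the $\tau$ inner steps. Since the data are i.i.d.\ we have $f_i=f$ and $\bE_\xi[\tNab f_i(\bx)]=\gr f(\bx)$, so averaging the local SGD updates gives the recursion $\xbar_{k,t+1}=\xbar_{k,t}-\eta\,\tfrac{1}{n}\sum_{i\in[n]}\tNab f_i(\bx^{(i)}_{k,t})$ with $\xbar_{k,0}=\bx_k$. First I would apply the smoothness descent inequality (Assumption~\ref{assump-smooth}) to the consecutive averaged iterates, $f(\xbar_{k,t+1})\le f(\xbar_{k,t})+\langle\gr f(\xbar_{k,t}),\,\xbar_{k,t+1}-\xbar_{k,t}\rangle+\tfrac{L}{2}\norm{\xbar_{k,t+1}-\xbar_{k,t}}^2$, and then take the conditional expectation given the history $\ccalF_{k,t}$.

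For the inner-product term I would use unbiasedness to replace $\tNab f_i(\bx^{(i)}_{k,t})$ by $\gr f(\bx^{(i)}_{k,t})$ in expectation, and then apply the polarization identity $\langle a,b_i\rangle=\tfrac{1}{2}(\norm{a}^2+\norm{b_i}^2-\norm{a-b_i}^2)$ term by term with $a=\gr f(\xbar_{k,t})$ and $b_i=\gr f(\bx^{(i)}_{k,t})$. This produces exactly the desired $-\tfrac{\eta}{2}\norm{\gr f(\xbar_{k,t})}^2$ and $-\tfrac{\eta}{2n}\sum_{i}\norm{\gr f(\bx^{(i)}_{k,t})}^2$, together with a \emph{consensus} remainder $\tfrac{\eta}{2n}\sum_i\norm{\gr f(\xbar_{k,t})-\gr f(\bx^{(i)}_{k,t})}^2$ that $L$-smoothness bounds by $\tfrac{\eta L^2}{2n}\sum_i\norm{\xbar_{k,t}-\bx^{(i)}_{k,t}}^2$. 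For the quadratic term I would split the averaged stochastic gradient into its mean plus the zero-mean, across-node-independent noise; the cross terms vanish in conditional expectation, leaving $\tfrac{L\eta^2}{2}\norm{\frac{1}{n}\sum_i\gr f(\bx^{(i)}_{k,t})}^2+\tfrac{L\eta^2}{2}\tfrac{\sigma^2}{n}$ by Assumption~\ref{assump-gr}, and Jensen turns the first piece into $\tfrac{L\eta^2}{2n}\sum_i\norm{\gr f(\bx^{(i)}_{k,t})}^2$, which merges with the $-\tfrac{\eta}{2n}$ term to form the $(1-L\eta)$ factor.

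The crux is bounding the drift $\sum_i\bE\norm{\xbar_{k,t}-\bx^{(i)}_{k,t}}^2$; this is the step that genuinely uses $\tau>1$ and where the paper's relaxed bounded-\emph{variance} (rather than bounded-gradient) assumption must be invoked. I would unroll each local trajectory from the common start, $\bx^{(i)}_{k,t}-\bx_k=-\eta\sum_{s=0}^{t-1}\tNab f_i(\bx^{(i)}_{k,s})$, use the variance-minimizing property $\sum_i\norm{\bx^{(i)}_{k,t}-\xbar_{k,t}}^2\le\sum_i\norm{\bx^{(i)}_{k,t}-\bx_k}^2$, and again separate mean from independent noise so that $\bE\norm{\sum_{s<t}\tNab f_i(\bx^{(i)}_{k,s})}^2$ splits into a gradient part bounded by $t\sum_{s<t}\bE\norm{\gr f(\bx^{(i)}_{k,s})}^2$ (Cauchy--Schwarz) and a noise part bounded by $t\sigma^2$. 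The delicate bookkeeping is the resulting double sum over $(s,t)$: after summing over $t=0,\dots,\tau-1$ each gradient $\norm{\gr f(\bx^{(i)}_{k,s})}^2$ is counted with a factor of order $\tau(\tau-1)$, yielding the $\tfrac{L^2\tau(\tau-1)\eta^2}{n}$ correction to the gradient coefficient, while the noise accumulates to the $\eta^3\tfrac{\sigma^2}{n}(n+1)\tfrac{\tau(\tau-1)}{2}L^2$ term; keeping the noise of the individual nodes and of their average separately is what accounts for the $(n+1)$ factor.

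Finally I would telescope the per-step inequality over $t=0,\dots,\tau-1$: the left side collapses to $\bE f(\xbar_{k,\tau})-\bE f(\bx_k)$ because $\xbar_{k,0}=\bx_k$, and collecting the constants produced in the previous two paragraphs gives the stated bound of Lemma~\ref{lemma:noncnvx-2}. I expect the consensus/drift estimate to be the main obstacle, both because it requires unrolling $\tau$ coupled SGD steps without a uniform gradient bound and because carefully tracking the $(s,t)$ double sum, as well as the $1/n$ versus $O(1)$ noise contributions, is precisely what fixes the constants $\tau(\tau-1)$ and $(n+1)$; everywhere else the argument is the standard smoothness-descent bookkeeping.
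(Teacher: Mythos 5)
Your proposal is correct and follows essentially the same route as the paper: smoothness descent applied to the averaged iterate $\xbar_{k,t}$, the polarization identity to extract $-\frac{\eta}{2}\norm{\gr f(\xbar_{k,t})}^2$ and $-\frac{\eta}{2n}\sum_i\norm{\gr f(\bbx^{(i)}_{k,t})}^2$ plus a consensus remainder, $L$-smoothness to turn that remainder into a drift term, a mean/variance split with Jensen for the quadratic term, and an unrolling of the local trajectories to control the drift. The one (harmless) deviation is that you bound the drift via the variance-minimizing property $\sum_i\norm{\bbx^{(i)}_{k,t}-\xbar_{k,t}}^2\le\sum_i\norm{\bbx^{(i)}_{k,t}-\bx_k}^2$, whereas the paper uses $\norm{a-b}^2\le 2\norm{a-\bx_k}^2+2\norm{b-\bx_k}^2$ and tracks the averaged trajectory's noise separately to produce exactly the $(n+1)$ and $2\tau(\tau-1)$ coefficients; your version yields slightly smaller constants, which still implies the lemma as stated.
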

%%%%%%
\begin{proof}
See Section \ref{subsec:lemma-noncnvx-2-proof}.
\end{proof}

%%%%%%
\begin{lemma}\label{lemma:noncnvx-3}

If Assumptions \ref{assump-Q} and \ref{assump-gr} hold, then for sequences $\{\widehat{\bx}_{k+1} , \xbar_{k, \tau}\}$ defined in \eqref{eq:notation} we have
%%%%%%
\begin{align}
    \bE \norm{\widehat{\bx}_{k+1} - \xbar_{k, \tau}}^2
    \leq
    q \frac{\sigma^2}{n} \tau \eta^2 
    +
    q \frac{1}{n^2} \tau \eta^2 \sum_{i \in [n]} \sum_{t=0}^{\tau-1} \norm{\gr f \left(\bbx^{(i)}_{k, t} \right) }^2.
\end{align}
%%%%%%
\end{lemma}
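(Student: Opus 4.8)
The plan is to follow the same two-stage strategy used in the proof of Lemma~\ref{lemma:3}: first isolate the quantization randomness to reduce the claim to a bound on the per-node displacement $\bE\norm{\bx^{(i)}_{k,\tau}-\bx_{k}}^2$, and then bound that displacement in the non-convex setting, this time keeping the true local gradients explicit rather than invoking strong convexity to relate them to $\norm{\bx_k-\bx^*}^2$.

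For the first stage, I would start from the definitions in \eqref{eq:notation} and write the difference as an average of quantization errors,
\[
\widehat{\bx}_{k+1}-\xbar_{k,\tau}=\frac{1}{n}\sum_{i\in[n]}\left[Q\!\left(\bx^{(i)}_{k,\tau}-\bx_{k}\right)-\left(\bx^{(i)}_{k,\tau}-\bx_{k}\right)\right].
\]
Conditioning on the entire SGD trajectory $\ccalF_{k,\tau}$, the $n$ quantization errors are independent (the quantizers are applied independently at the nodes) and each is conditionally mean-zero by the unbiasedness part of Assumption~\ref{assump-Q}, so all cross terms vanish and only the diagonal terms survive. Taking total expectation and applying the variance bound of Assumption~\ref{assump-Q}, exactly as in \eqref{eq:8}, gives
\[
\bE\norm{\widehat{\bx}_{k+1}-\xbar_{k,\tau}}^2=\frac{1}{n^2}\sum_{i\in[n]}\bE\norm{Q\!\left(\bx^{(i)}_{k,\tau}-\bx_{k}\right)-\left(\bx^{(i)}_{k,\tau}-\bx_{k}\right)}^2\leq\frac{q}{n^2}\sum_{i\in[n]}\bE\norm{\bx^{(i)}_{k,\tau}-\bx_{k}}^2.
\]

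For the second stage, I would unroll the local update rule to get $\bx^{(i)}_{k,\tau}-\bx_{k}=-\eta\sum_{t=0}^{\tau-1}\tNab f_i(\bx^{(i)}_{k,t})$, so the per-node term is $\eta^2\,\bE\norm{\sum_{t=0}^{\tau-1}\tNab f_i(\bx^{(i)}_{k,t})}^2$. The natural decomposition is $\tNab f_i(\bx^{(i)}_{k,t})=\gr f(\bx^{(i)}_{k,t})+\zeta^{(i)}_{k,t}$ into the true gradient (recall $f_i=f$ under the i.i.d.\ assumption, so $\bE_\xi[\tNab f_i]=\gr f$) and a noise term $\zeta^{(i)}_{k,t}$ that is conditionally mean-zero given $\ccalF_{k,t}$ with conditional second moment at most $\sigma^2$ by Assumption~\ref{assump-gr}. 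The gradient part I would control by the power-mean inequality $\norm{\sum_t v_t}^2\le\tau\sum_t\norm{v_t}^2$, yielding the factor $\tau\sum_{t}\norm{\gr f(\bx^{(i)}_{k,t})}^2$; the noise part I would control by martingale orthogonality, since for $s<t$ the increment $\zeta^{(i)}_{k,s}$ is $\ccalF_{k,t}$-measurable and $\zeta^{(i)}_{k,t}$ has conditional mean zero, so $\bE\norm{\sum_t\zeta^{(i)}_{k,t}}^2=\sum_t\bE\norm{\zeta^{(i)}_{k,t}}^2\le\tau\sigma^2$. This produces $\bE\norm{\bx^{(i)}_{k,\tau}-\bx_{k}}^2\le\eta^2\big(\tau\sigma^2+\tau\sum_{t=0}^{\tau-1}\norm{\gr f(\bx^{(i)}_{k,t})}^2\big)$; substituting into the first-stage bound and using $\frac{1}{n^2}\sum_{i\in[n]}\tau\sigma^2\eta^2=\frac{\sigma^2}{n}\tau\eta^2$ reproduces the two claimed terms with the stated coefficients.

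The main obstacle is the coupling between the gradient part and the noise part. Because each iterate $\bx^{(i)}_{k,t}$ depends on the earlier noises $\zeta^{(i)}_{k,s}$ with $s<t$, the cross term $\sum_{s<t}\bE\langle\gr f(\bx^{(i)}_{k,t}),\zeta^{(i)}_{k,s}\rangle$ need not vanish, and this is precisely the bias-propagation phenomenon the paper flags as the source of difficulty once $\tau>1$. The delicate point is that matching the stated noise scaling $\tau\sigma^2$ — rather than the crude $\tau^2\sigma^2$ obtained by applying the power-mean inequality directly to the full stochastic-gradient sum — hinges on treating the noise sum by its martingale structure while arguing that this cross correlation does not degrade the gradient contribution beyond the clean form $\tau\sum_t\norm{\gr f(\bx^{(i)}_{k,t})}^2$. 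I would expect the bulk of the care (and the risk of loose constants) to sit in justifying that separation, which is exactly the piece that distinguishes this bound from the simpler parallel-SGD ($\tau=1$) case.
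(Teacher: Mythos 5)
Your proposal is correct and follows the paper's proof essentially step for step: the same reduction, via the quantizer's conditional unbiasedness and variance bound, to $\frac{q}{n^2}\sum_{i\in[n]}\bE\norm{\bx^{(i)}_{k,\tau}-\bx_k}^2$, and then the same mean--noise decomposition of the accumulated stochastic gradients, with the martingale orthogonality giving the $\tau\sigma^2\eta^2$ term and Cauchy--Schwarz giving the $\tau\eta^2\sum_{t}\norm{\gr f(\bx^{(i)}_{k,t})}^2$ term. The cross term you flag as the delicate point is not treated any more carefully in the paper: its display \eqref{eq:xi-xk} simply asserts that the squared norm splits into noise and drift contributions (only the $s\ge t$ cross terms vanish by conditioning), so your reservation points at a step the paper itself glosses over rather than at a divergence between your route and theirs.
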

%%%%%%
\begin{proof}
See Section \ref{subsec:lemma-noncnvx-3-proof}.
\end{proof}

%%%%%%
\begin{lemma}\label{lemma:noncnvx-4}
Under Assumptions \ref{assump-Q} and \ref{assump-gr}, for the sequence of averages $\{\widehat{\bx}_{k+1}\}$ defined in \eqref{eq:notation} we have
%%%%%%
\begin{align}
    \bE \norm{\widehat{\bx}_{k+1} 
    -
    \bx_{k+1}}^2 
    \leq
    \frac{1}{r(n-1)} \left( 1 - \frac{r}{n} \right) 4(1+q)  
    \left\{ n \sigma^2  \tau \eta^2 
    +
    \tau \eta^2 \sum_{i \in [n]} \sum_{t=0}^{\tau-1} \norm{\gr f \left(\bbx^{(i)}_{k, t} \right) }^2
    \right\}.
\end{align}
%%%%%%
\end{lemma}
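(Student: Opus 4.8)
The plan is to follow the same two-stage reduction used in the proof of Lemma~\ref{lemma:4}, and then replace its strongly-convex per-node bound by a purely non-convex one expressed in terms of the local gradients $\gr f(\bx^{(i)}_{k,t})$. Writing $\bbz^{(i)}_{k,\tau} = Q(\bx^{(i)}_{k,\tau} - \bx_k)$ and $\zbar_{k,\tau} = \frac1n\sum_{i\in[n]}\bbz^{(i)}_{k,\tau}$, note that $\bx_{k+1} - \widehat{\bx}_{k+1} = \frac1r\sum_{i\in\ccalS_k}\bbz^{(i)}_{k,\tau} - \zbar_{k,\tau}$, so the randomness in $\widehat{\bx}_{k+1} - \bx_{k+1}$ comes only from the subsampling $\ccalS_k$ and the quantizer $Q$, exactly as in the strongly-convex case.

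First I would reuse the subsampling identity \eqref{eq:x-xhat}, which only uses that $\ccalS_k$ is a uniformly random $r$-subset, to get $\bE_{\ccalS_k}\norm{\bx_{k+1} - \widehat{\bx}_{k+1}}^2 = \frac{1}{r(n-1)}(1 - \frac{r}{n})\sum_{i\in[n]}\norm{\bbz^{(i)}_{k,\tau} - \zbar_{k,\tau}}^2$. Then, taking expectation over $Q$ and arguing as in \eqref{eq:z} — namely $\norm{\zbar_{k,\tau}}^2 \le \frac1n\sum_i\norm{\bbz^{(i)}_{k,\tau}}^2$ by Jensen, followed by the variance bound $\bE\norm{Q(\bx)}^2 \le (1+q)\norm{\bx}^2$ implied by Assumption~\ref{assump-Q} — gives $\sum_i\bE_Q\norm{\bbz^{(i)}_{k,\tau} - \zbar_{k,\tau}}^2 \le 4(1+q)\sum_i\bE\norm{\bx^{(i)}_{k,\tau} - \bx_k}^2$. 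Neither step uses convexity, so both carry over verbatim, and the whole problem reduces to bounding the per-node drift $\bE\norm{\bx^{(i)}_{k,\tau} - \bx_k}^2$.

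For the drift, I would use the local update rule to write $\bx^{(i)}_{k,\tau} - \bx_k = -\eta\sum_{t=0}^{\tau-1}\tNab f_i(\bx^{(i)}_{k,t})$ and split each stochastic gradient into its conditional mean and a martingale-difference noise, $\tNab f_i(\bx^{(i)}_{k,t}) = \gr f(\bx^{(i)}_{k,t}) + \zeta_t$ with $\bE[\zeta_t\mid\ccalF_{k,t}]=0$ (using $f_i=f$ in the i.i.d.\ setting). The target is $\bE\norm{\bx^{(i)}_{k,\tau}-\bx_k}^2 \le \sigma^2\tau\eta^2 + \tau\eta^2\sum_{t=0}^{\tau-1}\bE\norm{\gr f(\bx^{(i)}_{k,t})}^2$: the mean-gradient sum is controlled by Cauchy--Schwarz, $\norm{\sum_t\gr f(\bx^{(i)}_{k,t})}^2 \le \tau\sum_t\norm{\gr f(\bx^{(i)}_{k,t})}^2$, while the noise sum is a martingale, so its second moment collapses to $\sum_t\bE\norm{\zeta_t}^2 \le \tau\sigma^2$ once the cross terms among distinct $\zeta_t$ vanish. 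Summing over $i$ and plugging into the two reduction steps yields the claimed inequality with constant $\frac{1}{r(n-1)}(1-\frac{r}{n})4(1+q)$, noise term $n\sigma^2\tau\eta^2$, and gradient term $\tau\eta^2\sum_{i}\sum_{t}\norm{\gr f(\bx^{(i)}_{k,t})}^2$.

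The main obstacle is precisely the coupling between the noise $\zeta_s$ and the later mean gradients $\gr f(\bx^{(i)}_{k,t})$ for $t>s$: because $\bx^{(i)}_{k,t}$ is built from $\zeta_s$, the cross products $\bE\langle\zeta_s,\gr f(\bx^{(i)}_{k,t})\rangle$ do not automatically vanish, and — crucially — Assumption~\ref{assump-gr} only bounds the \emph{variance} of $\tNab f_i$, not its norm, so one cannot fall back on a uniform gradient bound as in \citep{stich2018local,yu2019parallel}. The delicate part is therefore to isolate the martingale contribution via the conditional-expectation tower (so that the noise piece contributes only $\tau\sigma^2$ and the predictable gradient piece is kept intact), which is exactly the difficulty flagged in the paper's ``Challenges'' discussion; everything else is a routine substitution.
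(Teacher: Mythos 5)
Your proposal follows essentially the same route as the paper: the sampling-variance identity and the quantizer-variance bound are reused verbatim from the proof of Lemma~\ref{lemma:4} to reach \eqref{eq:x-xhat2}, and the per-node drift is then bounded exactly as in \eqref{eq:xi-xk} by splitting each stochastic gradient into its conditional mean and a martingale-difference noise, applying Cauchy--Schwarz to the former and the orthogonality of martingale increments to the latter. The cross-term coupling you flag between early noise and later gradients is indeed the delicate point, but the paper's own derivation of \eqref{eq:xi-xk} asserts that decomposition as an equality without further comment, so your treatment matches it and is, if anything, more candid about the issue.
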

%%%%%%
\begin{proof}
See Section \ref{subsec:lemma-noncnvx-4-proof}.
\end{proof}

After establishing the main building modules in the above lemmas, we now proceed to prove the convergence rate in Theorem \ref{thm:2}. In particular, we combine the results in Lemmas \ref{lemma:noncnvx-1}--\ref{lemma:noncnvx-4} to derive the following recursive inequality on the expected function value on the models updated at the parameter servers, i.e. $\{\bx_{k}:k=1,\cdots,K\}$:
%%%%%%
\begin{align}
    &\bE f(\bx_{k+1}) \leq
    \bE f (\bx_{k}) \\
    &\quad-
    \frac{1}{2} \eta \sum_{t=0}^{\tau-1} \bE \norm{\gr f (\xbar_{k, t})}^2 \\
    &\quad 
    -
    \eta \frac{1}{2n} \left(1 - L \left(  1 + \frac{1}{n} q \tau
    +
    4 \frac{n-r}{r(n-1)} (1+q)
    \tau \right) \eta - 2 L^2 \tau(\tau-1) \eta^2 \right) \sum_{t=0}^{\tau-1} \sum_{i\in[n]} \bE \norm{\gr f \left(\bbx^{(i)}_{k, t} \right)}^2 \\
    &\quad +
    \eta^2 \frac{L}{2} (1+q) \tau
    \left(
    \frac{\sigma^2}{m}
    +
    4 \frac{\sigma^2}{r} \frac{n-r}{n-1}
    \right)
    + \eta^3 \frac{\sigma^2}{m} (m+1) \frac{\tau(\tau-1)}{2} L^2.
\end{align}
%%%%%%
For sufficiently small $\eta$, such that
%%%%%%
\begin{align} \label{eq:19}
    %1 - L \left(  1 + \frac{1}{m} q \tau\right) \eta - 2 L^2 \tau(\tau-1) \eta^2
    1 - L \eta - L \left( \frac{1}{n} q 
    +
    4 \frac{n-r}{r(n-1)} (1+q)
    \right) \tau \eta - 2 L^2 \tau(\tau-1) \eta^2 
    \geq 0,
\end{align}
%%%%%%
we have
%%%%%%
\begin{align}
    \bE f(\bx_{k+1}) 
    &\leq
    \bE f (\bx_{k}) 
    -
    \frac{1}{2} \eta \sum_{t=0}^{\tau-1} \bE \norm{\gr f (\xbar_{k, t})}^2 \\
    &\quad +
    \eta^2 \frac{L}{2} (1+q) \tau \left(
    \frac{\sigma^2}{n}
    +
    4 \frac{\sigma^2}{r} \frac{n-r}{n-1}
    \right)
    +
    \eta^3 \frac{\sigma^2}{n} (n+1) \frac{\tau(\tau-1)}{2} L^2. \label{eq:noncnvx-recursive}
\end{align}
%%%%%%
In Section \ref{sec:stepsize} we show that if the stepsize is picked as $\eta = \nicefrac{1}{L\sqrt{T}}$ and the $T$ ans $\tau$ satisfy the condition \eqref{eq:thm2-cond} in Theorem \ref{thm:2}, then \eqref{eq:19} also holds. Now summing \eqref{eq:noncnvx-recursive} over $k=0,\cdots,K-1$ and rearranging the terms yield that
%%%%%%
\begin{align}
    & \frac{1}{2} \eta \sum_{k=0}^{K-1} \sum_{t=0}^{\tau-1} \bE \norm{\gr f (\xbar_{k,t})}^2 \\
    &\quad \leq
    f (\bx_{0}) - f^* 
    +
    K \eta^2 \frac{L}{2} (1+q) \tau \left(
    \frac{\sigma^2}{n}
    +
    4 \frac{\sigma^2}{r} \frac{n-r}{n-1}
    \right)
    + 
    K \eta^3 \frac{\sigma^2}{n} (n+1) \frac{\tau(\tau-1)}{2} L^2,
\end{align}
%%%%%%
or
%%%%%%
\begin{align}
    & \frac{1}{K\tau} \sum_{k=0}^{K-1} \sum_{t=0}^{\tau-1} \bE \norm{\gr f (\xbar_{k, t})}^2 \\
    &\quad\leq
    \frac{2 (f (\bx_{0}) - f^*)}{\eta K\tau}
    +
    \eta L (1+q) \left(
    \frac{\sigma^2}{n}
    +
    4 \frac{\sigma^2}{r} \frac{n-r}{n-1}
    \right) + 
    \eta^2 \frac{\sigma^2}{n} (n+1) (\tau-1) L^2.
\end{align}
%%%%%%
Picking the stepsize $\eta=\nicefrac{1}{L\sqrt{T}}=\nicefrac{1}{L\sqrt{K\tau}}$ results in the following convergence rate:
%%%%%%
\begin{align}
    & \frac{1}{T} \sum_{k=0}^{K-1} \sum_{t=0}^{\tau-1} \bE \norm{\gr f (\xbar_{k,t})}^2 \\
    &\leq
    \frac{2L(f (\bx_{0}) - f^*)}{\sqrt{T}}
    +
    (1+q) \left(
    \frac{\sigma^2}{n}
    +
    \frac{\sigma^2}{r} \frac{n-r}{n-1}
    \right) \frac{1}{\sqrt{T}} + 
    \frac{\sigma^2}{n} (n+1) \frac{\tau-1}{T},
\end{align}
%%%%%%
which completes the proof of Theorem \ref{thm:2}.

\subsection{Discussion on stepsize $\eta$} \label{sec:stepsize}

Here, we consider the constraint on the stepsize derived in \eqref{eq:19} and show that if $\eta$ is picked according to Theorem \ref{thm:2}, then it also satisfies \eqref{eq:19}. First, let the stepsize satisfy $1 - L \eta \geq 0.1$. Now, if the following holds
%%%%%%
\begin{align} \label{eq:cond1}
    %1 - L \left(  1 + \frac{1}{m} q \tau\right) \eta - 2 L^2 \tau(\tau-1) \eta^2
    L \left( \frac{1}{n} q 
    +
    4 \frac{n-r}{r(n-1)} (1+q)
    \right) \tau \eta
    +
    2 L^2 (\tau \eta)^2 
    \leq 0.1, 
\end{align}
%%%%%%
the condition in \eqref{eq:19} also holds. It is straightforward to see when \eqref{eq:cond1} holds. To do so, consider the following quadratic inequality in terms of $y=\eta \tau$:
%%%%%%
\begin{align} 
    2 L^2 y^2 
    +
    L B_2 y
    - 0.1
    \leq 0, \label{eq:quadratic}
\end{align}
%%%%%%
where
%%%%%%
\begin{align} 
    B_2 \coloneqq
    \frac{1}{n} q 
    +
    4 \frac{n-r}{r(n-1)} (1+q).
\end{align}
%%%%%%
We can solve the quadratic form in \eqref{eq:quadratic} for $y = \eta \tau$ which yields
%%%%%%
\begin{align} \label{eq:quadratic-sol}
    \eta \tau
    \leq
    \frac{\sqrt{ B_2^2 + 0.8 } -B_2}{4L}.
\end{align}
%%%%%%
This implies that if the parameter $\tau$ and the stepsize $\eta$ satisfy \eqref{eq:quadratic-sol} and $\eta \leq \nicefrac{0.9}{L}$, then the condition \eqref{eq:19} is satisfied. In particular, for our pick of $\eta = \nicefrac{1}{L\sqrt{T}}$, the condition $\eta \leq \nicefrac{0.9}{L}$ holds if $T \geq 2$; and the constraint in \eqref{eq:quadratic-sol} is equivalent to having
%%%%%%
\begin{align} 
    \tau
    &\leq
    \frac{\sqrt{ B_2^2 + 0.8 } -B_2}{8} \sqrt{T}.
\end{align}
%%%%%%

%Therefore, in Theorem \ref{thm:2}, $\tau$ can take on values at most  $\ccalO(\sqrt{T})$.

\subsection{Proof of Lemma \ref{lemma:noncnvx-1}}\label{subsec:lemma-noncnvx-1-proof}
Recall that for any $L$-smooth function $f$ and variables $\bx,\bby$ we have
%%%%%%
\begin{align}
    f(\bx) 
    &\leq
    f(\bby)
    +
    \left\langle \gr f (\bby) , \bx - \bby \right\rangle
    +
    \frac{L}{2} \norm{\bx - \bby}^2.
\end{align}
%%%%%%
Therefore, we can write
%%%%%%
\begin{align}
    f(\bx_{k+1}) 
    &=
    f(\widehat{\bx}_{k+1} 
    +
    \bx_{k+1}
    -
    \widehat{\bx}_{k+1}) \\ 
    &\leq
    f(\widehat{\bx}_{k+1})
    +
    \left\langle \gr f (\widehat{\bx}_{k+1}) , \bx_{k+1}
    -
    \widehat{\bx}_{k+1} \right\rangle
    +
    \frac{L}{2} \norm{\bx_{k+1}
    -
    \widehat{\bx}_{k+1}}^2. \label{eq:theta}
\end{align}
%%%%%%
We take expectation of both sides of \eqref{eq:theta} and since $\widehat{\bx}_{k+1}$ is unbiased for $\bx_{k+1}$, that is $\bE_{\ccalS_k} \bx_{k+1} =  \widehat{\bx}_{k+1}$ (See \eqref{eq:S}), it yields that 
%%%%%%
\begin{align}
    \bE f(\bx_{k+1}) 
    &\leq
    \bE f(\widehat{\bx}_{k+1})
    + 
    \frac{L}{2} \bE \norm{\widehat{\bx}_{k+1} 
    -
    \bx_{k+1}}^2.\label{eq:theta-thetahat}
\end{align}
%%%%%%
Moreover, $\widehat{\bx}_{k+1}$ is also unbiased for $\xbar_{k,\tau}$, i.e. $\bE_Q \widehat{\bx}_{k+1} = \xbar_{k,\tau}$ (See \eqref{eq:Q}), and since $f$ is $L$-smooth, we can write
%%%%%%
\begin{align}
    \bE f(\widehat{\bx}_{k+1}) 
    &\leq
    \bE f(\xbar_{k,\tau})
    + 
    \frac{L}{2} \bE \norm{\widehat{\bx}_{k+1} 
    -
    \xbar_{k,\tau}}^2, 
\end{align}
%%%%%%
which together with \eqref{eq:theta-thetahat} concludes the lemma.

\subsection{Proof of Lemma \ref{lemma:noncnvx-2}}\label{subsec:lemma-noncnvx-2-proof}
According to the update rule in Algorithm \ref{alg:update}, for every $t=0,\cdots,\tau-1$ the average model is 
%%%%%%
\begin{align}
    \xbar_{k,t + 1}
    =
    \xbar_{k, t} - \eta \frac{1}{n} \sum_{i \in [n]} \tNab f_i \left(\bbx^{(i)}_{k, t} \right).
\end{align}
%%%%%%
Since $f$ is $L$-smooth, we can write
%%%%%%
\begin{align}
    f(\xbar_{k,t + 1})
    &\leq
    f(\xbar_{k, t})
    -
    \eta \left\langle \gr f (\xbar_{k, t}) , \frac{1}{n} \sum_{i \in [n]} \tNab f_i \left(\bbx^{(i)}_{k, t} \right) \right\rangle 
    +
    \eta^2 \frac{L}{2} \norm{ \frac{1}{n} \sum_{i \in [n]} \tNab f_i \left(\bbx^{(i)}_{k, t} \right)}^2. \label{eq:16}
\end{align}
%%%%%%
The inner product term above can be written in expectation as follows:
%%%%%%
\begin{align}
    2 \bE \left\langle \gr f (\xbar_{k, t}) , \frac{1}{n} \sum_{i \in [n]} \tNab f_i \left(\bbx^{(i)}_{k, t} \right) \right\rangle 
    &=
    \frac{1}{n} \sum_{i \in [n]}
    2 \bE \left\langle \gr f (\xbar_{k, t}) ,  \gr f \left(\bbx^{(i)}_{k, t} \right) \right\rangle \\
    &=
    \bE \norm{\gr f (\xbar_{k, t})}^2
    +
    \frac{1}{n} \sum_{i \in [n]}
    \bE \norm{\gr f \left(\bbx^{(i)}_{k, t} \right)}^2 \\
    & \quad -
    \frac{1}{n} \sum_{i \in [n]}
    \bE \norm{\gr f (\xbar_{k, t}) - \gr f \left(\bbx^{(i)}_{k, t} \right)}^2, \label{eq:innerp}
\end{align}
%%%%%%
where we used the identity $2 \langle \bba , \bbb \rangle = \norm{\bba}^2 + \norm{\bbb}^2 - \norm{\bba - \bbb}^2$ for any two vectors $\bba , \bbb$. In the following, we bound each of the three terms in the RHS of \eqref{eq:innerp}. Starting with the third term, we use the smoothness assumption to write
%%%%%%
\begin{align}
    \norm{\gr f (\xbar_{k,t}) - \gr f \left(\bbx^{(i)}_{k, t} \right)}^2 
    \leq
    L^2 \norm{\xbar_{k,t} - \bbx^{(i)}_{k, t}}^2.
\end{align}
%%%%%%
Moreover, local models $\bbx^{(i)}_{k, t}$ and average model $\xbar_{k,t}$  are respectively
%%%%%%
\begin{align}
    \bbx^{(i)}_{k, t}
    =
    \bx_{k} - \eta \left(  \tNab f_i(\bx_{k}) +  \tNab f_i\left(\bbx^{(i)}_{k, 1} \right) + \cdots +  \tNab f_i\left(\bbx^{(i)}_{k, t-1} \right) \right), \label{eq:xi-theta}
\end{align}
%%%%%%
and
%%%%%%
\begin{align}
    \xbar_{k,t} 
    =
    \bx_{k\tau} 
    -
    \eta \left( \frac{1}{n} \sum_{j \in [n]} \tNab f_j(\bx_{k}) 
    + 
    \frac{1}{n} \sum_{j \in [n]} \tNab f_j\left(\bbx^{(j)}_{k,1}\right) 
    + \cdots +
    \frac{1}{n} \sum_{j \in [n]} \tNab f_j\left(\bbx^{(j)}_{k, t - 1}\right) \right).
\end{align}
%%%%%%
Therefore, the expected deviation of each local model form the average model can be written as
%%%%%%
\begin{align}
    & \bE \norm{\xbar_{k,t} - \bbx^{(i)}_{k, t}}^2 \\
    &\leq
    2 \eta^2 \bE \norm{\frac{1}{n} \sum_{j \in [n]} \tNab f_j(\bx_{k}) 
    + 
    \frac{1}{n} \sum_{j \in [n]} \tNab f_j\left(\bbx^{(j)}_{k,1}\right) 
    + \cdots +
    \frac{1}{n} \sum_{j \in [n]} \tNab f_j\left(\bbx^{(j)}_{k, t - 1}\right)}^2 \\
    & \quad +
    2 \eta^2 \bE \norm{ \tNab f_i(\bx_{k}) +  \tNab f_i\left(\bbx^{(i)}_{k, 1} \right) + \cdots +  \tNab f_i\left(\bbx^{(i)}_{k, t-1} \right)}^2 \\
    &\leq
    2 \eta^2 \left( t \frac{\sigma^2}{n} + \norm{\frac{1}{n} \sum_{j \in [n]} \gr f(\bx_{k}) 
    + 
    \frac{1}{n} \sum_{j \in [n]} \gr f\left(\bbx^{(j)}_{k,1}\right) 
    + \cdots +
    \frac{1}{n} \sum_{j \in [n]} \gr f\left(\bbx^{(j)}_{k, t - 1}\right)}^2\right) \\
    & \quad +
    2 \eta^2 \left( t \sigma^2 + \norm{ \gr f(\bx_{k}) +  \gr f\left(\bbx^{(i)}_{k, 1} \right) + \cdots +  \gr f\left(\bbx^{(i)}_{k, t-1} \right)}^2\right) \\
    &\leq
    2 \eta^2 t \frac{\sigma^2}{n}
    +
    2 \eta^2 t 
    \left(\frac{1}{n} \sum_{j \in [n]} \norm{\gr f(\bx_{k})}^2 + \frac{1}{n} \sum_{j \in [n]} \norm{\gr f \left(\bbx^{(j)}_{k,1} \right)}^2 + \cdots + \frac{1}{n} \sum_{j \in [n]} \norm{\gr f \left( \bbx^{(j)}_{k,t - 1} \right)}^2\right) \\
    &\quad +
    2 \eta^2 t \sigma^2 + 2 \eta^2 t 
    \left(
    \norm{  \gr f(\bx_{k})}^2 +  \norm{ \gr f \left(\bbx^{(i)}_{k, 1}\right)}^2 + \cdots +  \norm{ \gr f \left(\bbx^{(i)}_{k, t - 1} \right)}^2
    \right). \label{eq:xi-xbar}
\end{align}
%%%%%%
Summing \eqref{eq:xi-xbar} over all the workers $i \in [n]$ yields
%%%%%%
\begin{align}
    &\sum_{i \in [n]} \bE \norm{\xbar_{k,t} - \bbx^{(i)}_{k, t}}^2 \\
    &\leq
    2 \eta^2 t \sigma^2 + 
    2 \eta^2 t 
    \left( \sum_{j \in [n]} \norm{\gr f(\bx_{k})}^2 +  \sum_{j \in [n]} \norm{\gr f\left(\bbx^{(j)}_{k,1}\right)}^2 + \cdots + \sum_{j \in [n]} \norm{\gr f\left(\bbx^{(j)}_{k,t - 1}\right)}^2\right) \\
    &\quad +
    2 \eta^2 t \sigma^2 n
    +
    2 \eta^2 t 
    \left(\sum_{i \in [n]} \norm{  \gr f(\bx_{k})}^2 +  \sum_{i \in [n]} \norm{ \gr f\left(\bbx^{(i)}_{k, 1}\right)}^2 + \cdots +  \sum_{i \in [n]} \norm{ \gr f\left(\bbx^{(i)}_{k, t - 1}\right)}^2\right) \\
    &=
    2 \eta^2 t \sigma^2 (n+1)
    +
    4 \eta^2 t 
    \left(\sum_{j \in [n]} \norm{\gr f(\bx_{k})}^2 + \sum_{j \in [n]} \norm{\gr f\left(\bbx^{(j)}_{k,1}\right)}^2 + \cdots +  \sum_{j \in [n]} \norm{\gr f\left(\bbx^{(j)}_{k,t - 1}\right)}^2\right). \label{eq:sum-xi-xbar}
\end{align}
%%%%%%
Finally, summing \eqref{eq:sum-xi-xbar} over $t=0,\cdots,\tau-1$ results in the following:
%%%%%%
\begin{align}
    & \sum_{t=0}^{\tau-1} \sum_{i \in [n]} \bE \norm{\xbar_{k,t} - \bbx^{(i)}_{k, t}}^2 \\
    &\leq
    2 \eta^2 \sigma^2 (n+1) \sum_{t=0}^{\tau-1} t
    +
    4 \eta^2 \sum_{t=0}^{\tau-1}  t 
    \left(\sum_{j \in [n]} \norm{\gr f(\bx_{k})}^2 + \sum_{j \in [n]} \norm{\gr f\left(\bbx^{(j)}_{k,1}\right)}^2 + \cdots +  \sum_{j \in [n]} \norm{\gr f\left(\bbx^{(j)}_{k,t - 1}\right)}^2\right) \\
    &\leq
    \eta^2 \sigma^2 (n+1) \tau(\tau-1)
    +
    2 \eta^2 \tau(\tau-1) \sum_{t=0}^{\tau-2} \sum_{i \in [n]} \norm{\gr f\left(\bbx^{(i)}_{k,t}\right)}^2. \label{eq:sum-sum-xi-xbar}
\end{align}
%%%%%%
Next, we bound the third term in \eqref{eq:16}. Using Assumption \ref{assump-gr} we have
%%%%%%
\begin{align}
    \bE \norm{ \frac{1}{n} \sum_{i \in [n]} \tNab f_i \left(\bbx^{(i)}_{k, t} \right)}^2 
    &=
    \bE \norm{ \frac{1}{n} \sum_{i \in [n]} \gr f \left(\bbx^{(i)}_{k, t} \right)}^2
    +
    \bE \norm{ \frac{1}{n} \sum_{i \in [n]} \tNab f_i \left(\bbx^{(i)}_{k, t} \right) - \gr f \left(\bbx^{(i)}_{k, t} \right)}^2 \\
    &\leq
    \frac{1}{n} \sum_{i \in [n]} \bE \norm{\gr f\left(\bbx^{(i)}_{k, t} \right)}^2
    +
    \frac{\sigma^2}{n}. \label{eq:st-gr}
\end{align}
%%%%%%
Summing \eqref{eq:st-gr} over iterations $t=0,\cdots,\tau-1$ yields
%%%%%%
\begin{align}
    \sum_{t=0}^{\tau-1} \eta^2 \frac{L}{2} \bE \norm{\frac{1}{n} \sum_{i \in [n]} \tNab f_i \left(\bbx^{(i)}_{k, t} \right)}^2
    &\leq
    \eta^2 \frac{L}{2} \frac{1}{n}\sum_{t=0}^{\tau-1}  \sum_{i \in [n]} \bE \norm{\gr f \left(\bbx^{(i)}_{k, t} \right)}^2
    +
    \eta^2 \frac{L}{2} \frac{\sigma^2}{n} \tau. \label{eq:sum-st-gr}
\end{align}
%%%%%%
Now we can sum \eqref{eq:16} for $t=0,\cdots,\tau-1$ and use the results in \eqref{eq:sum-sum-xi-xbar} and \eqref{eq:sum-st-gr} to conclude:
%%%%%%
\begin{align}
    \bE f(\xbar_{k,\tau})
    &\leq
    \bE f (\bx_{k}) 
    -
    \frac{1}{2} \eta \sum_{t=0}^{\tau-1} \bE \norm{\gr f (\xbar_{k, t})}^2 \\
    & \quad -
    \frac{1}{2n} \eta \sum_{t=0}^{\tau-1} \sum_{i\in [n]} \bE \norm{\gr f\left(\bbx^{(i)}_{k, t}\right)}^2 \\
    & \quad +
    \frac{1}{2n} \eta \sum_{t=0}^{\tau-1} \sum_{i\in [n]} \bE \norm{\gr f (\xbar_{k,t}) - \gr f \left(\bbx^{(i)}_{k, t}\right)}^2 \\
    & \quad +
    \sum_{t=0}^{\tau-1} \eta^2 \frac{L}{2} \bE \norm{\frac{1}{n} \sum_{i\in [n]} \tNab f_i \left(\bbx^{(i)}_{k, t}\right)}^2 \\
    &\leq
    \bE f (\bx_{k}) 
    -
    \frac{1}{2} \eta \sum_{t=0}^{\tau-1} \bE \norm{\gr f (\xbar_{k, t})}^2 \\
    &\quad -
    \eta \left( \frac{1}{2n} - \frac{1}{2n} L \eta - \frac{1}{n} L^2 \tau(\tau-1) \eta^2 \right) \sum_{t=0}^{\tau-1} \sum_{i \in [n]} \bE \norm{\gr f\left(\bbx^{(i)}_{k,t}\right)}^2 \\
    &\quad +
    \eta^2 \frac{L}{2} \frac{\sigma^2}{n} \tau + \eta^3 \frac{\sigma^2}{n} (n+1) \frac{\tau(\tau-1)}{2} L^2. \label{eq:17}
\end{align}
%%%%%%

\subsection{Proof of Lemma \ref{lemma:noncnvx-3}}\label{subsec:lemma-noncnvx-3-proof}
According to definitions in \eqref{eq:notation} and using Assumption \ref{assump-Q} we have
%%%%%%
\begin{align}
    \bE \norm{\widehat{\bx}_{k+1} - \xbar_{k, \tau}}^2
    &\leq
    \frac{1}{n^2} \sum_{i \in [n]} q \, \bE \norm{\bx^{(i)}_{k, \tau} - \bx_{k} }^2. \label{eq:lemma7}
\end{align}
%%%%%%
Using the model update in \eqref{eq:xi-theta} and Assumption \ref{assump-gr}, we can write
%%%%%%
\begin{align}
    \bE \norm{\bx^{(i)}_{k,\tau} - \bx_{k} }^2 
    &=
    \eta^2 \bE \norm{\tNab f_i(\bx_{k}) + \tNab f_i\left(\bbx^{(i)}_{k, 1}\right) + \cdots + \tNab f_i\left(\bbx^{(i)}_{k,\tau - 1}\right)}^2 \\
    &=
    \eta^2 \bE \norm{\tNab f_i(\bx_{k}) - \gr f(\bx_{k}) + \cdots + \tNab f_i\left(\bbx^{(i)}_{k,\tau - 1}\right) - \gr f\left(\bbx^{(i)}_{k,\tau - 1}\right) }^2 \\
    &\quad +
    \eta^2  \norm{\gr f(\bx_{k\tau}) + \cdots + \gr f\left(\bbx^{(i)}_{k,\tau - 1}\right) }^2 \\
    &\leq
    \eta^2 \sigma^2 \tau
    +
    \eta^2 \tau \sum_{t=0}^{\tau-1} \norm{\gr f\left(\bbx^{(i)}_{k,t}\right) }^2. \label{eq:xi-xk}
\end{align}
%%%%%%
Summing \eqref{eq:xi-xk} over all workers $i \in [n]$ and using \eqref{eq:lemma7} yields
%%%%%%
\begin{align}
    \bE \norm{\widehat{\bx}_{k+1} - \xbar_{k, \tau}}^2
    &\leq
    q \frac{\sigma^2}{n} \tau \eta^2 
    +
    q \frac{1}{n^2} \tau \eta^2 \sum_{i \in [n]} \sum_{t=0}^{\tau-1} \norm{\gr f \left(\bbx^{(i)}_{k, t}\right) }^2, \label{eq:18}
\end{align}
%%%%%%
as desired in Lemma \ref{lemma:noncnvx-3}.

\subsection{Proof of Lemma \ref{lemma:noncnvx-4}}\label{subsec:lemma-noncnvx-4-proof}

The steps to prove the bound in \eqref{eq:x-xhat2} for strongly convex losses in Lemma \ref{lemma:4} can also be applied for non-convex losses. That is, we can use \eqref{eq:x-xhat2} and together with \eqref{eq:xi-xk} conclude the following:
%%%%%%
\begin{align}
    \bE \norm{\widehat{\bx}_{k+1} 
    -
    \bx_{k+1}}^2 
    &\leq
    \frac{1}{r(n-1)} \left( 1 - \frac{r}{n} \right) 4(1+q) \sum_{i \in [n]} \norm{\bx^{(i)}_{k,\tau} - \bx_{k}}^2 \\
    &\leq
    \frac{1}{r(n-1)} \left( 1 - \frac{r}{n} \right) 4(1+q)  
    \left\{ n \sigma^2  \tau \eta^2 
    +
    \tau \eta^2 \sum_{i \in [n]} \sum_{t=0}^{\tau-1} \norm{\gr f \left(\bbx^{(i)}_{k, t}\right) }^2
    \right\}.
\end{align}
%%%%%%

\section{Additional Numerical Results}\label{sec:additional-numerical}

To further illustrate the practical performance of the proposed \texttt{FedPAQ} method, in this section we provide more numerical results using different and more complicated datasets and model parameters. The network settings, communication and computation time models remain the same as those in Section \ref{sec:numerical}. The following figures demonstrate the training time corresponding to the following scenarios:
\begin{itemize}
    \item Figure \ref{fig:cifar10-248Kpar}: Training time of a neural network with four hidden layers and more than $248$K parameters over $10$K samples of the CIFAR-10 dataset with $10$ labels.
    \item Figure \ref{fig:cifar100}: Training time of a neural network with one hidden layer over $10$K samples of the CIFAR-100 dataset with $100$ labels.
    \item Figure \ref{fig:FMNIST}: Training time of a neural network with one hidden layer over $10$K samples of the Fashion-MNIST dataset with $10$ labels.
\end{itemize}

Similar to Section \ref{sec:numerical-cifar10}, in all of the above scenarios, the data samples are uniformly distributed among $n=50$ nodes. We also keep the communication-computation ratio and the batchsize to be $C_{\texttt{comm}}/C_{\texttt{comp}} = 1000/1$ and $B=10$ respectively, and finely tune the stepsize for every training.

\begin{figure*}[h!]
\centering
    \includegraphics[width=0.245\linewidth]{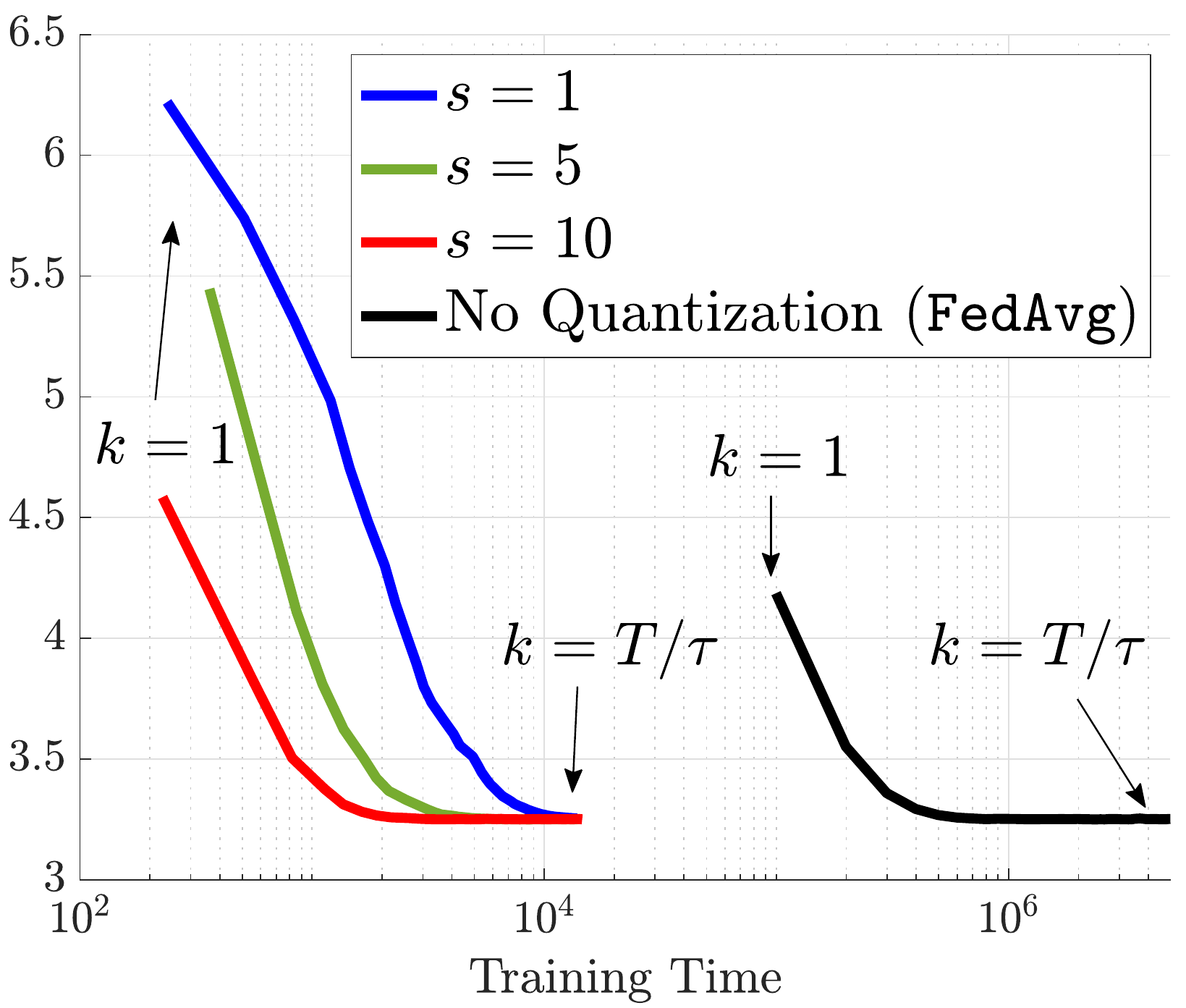}
    \includegraphics[width=0.245\linewidth]{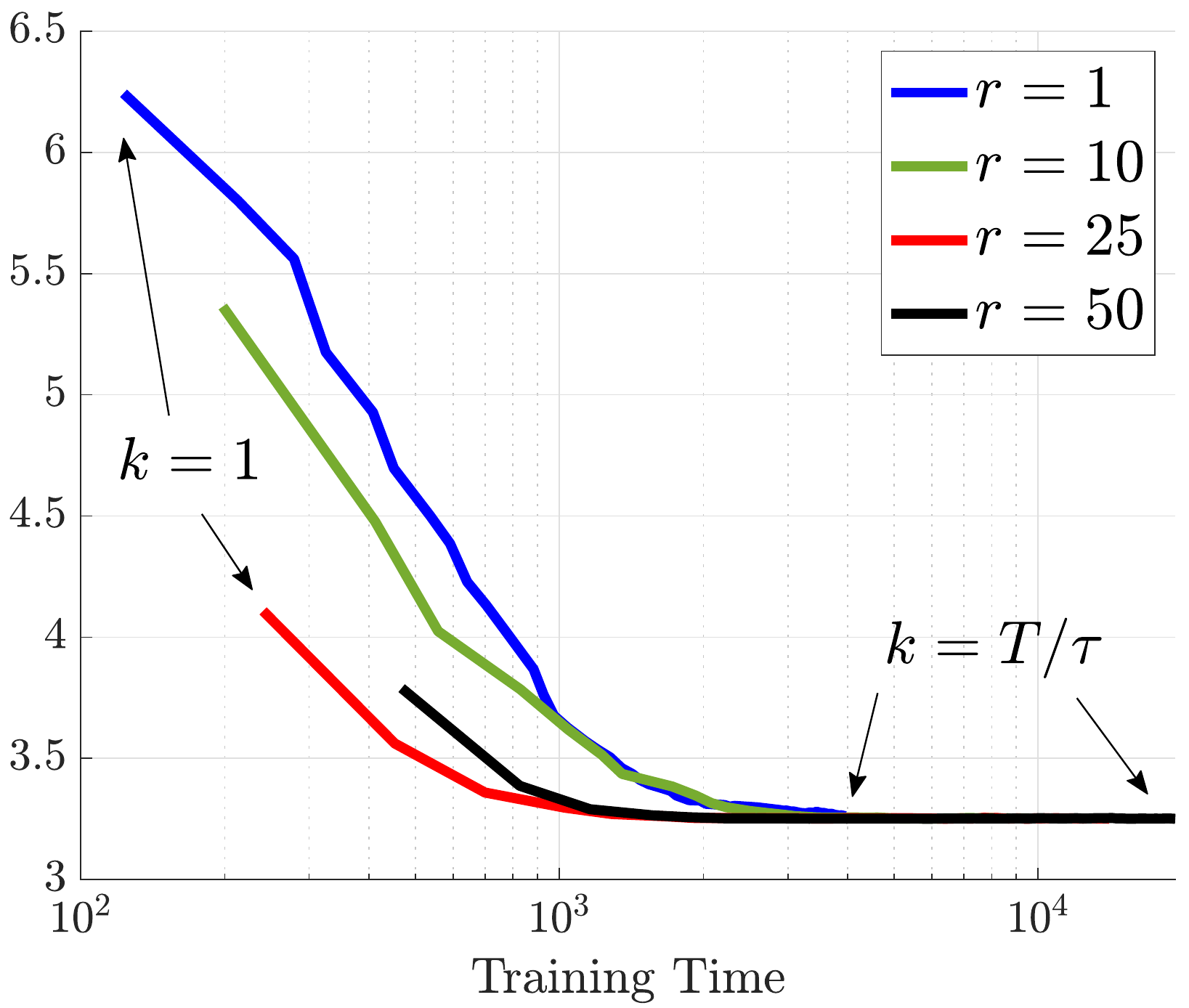}
    \includegraphics[width=0.245\linewidth]{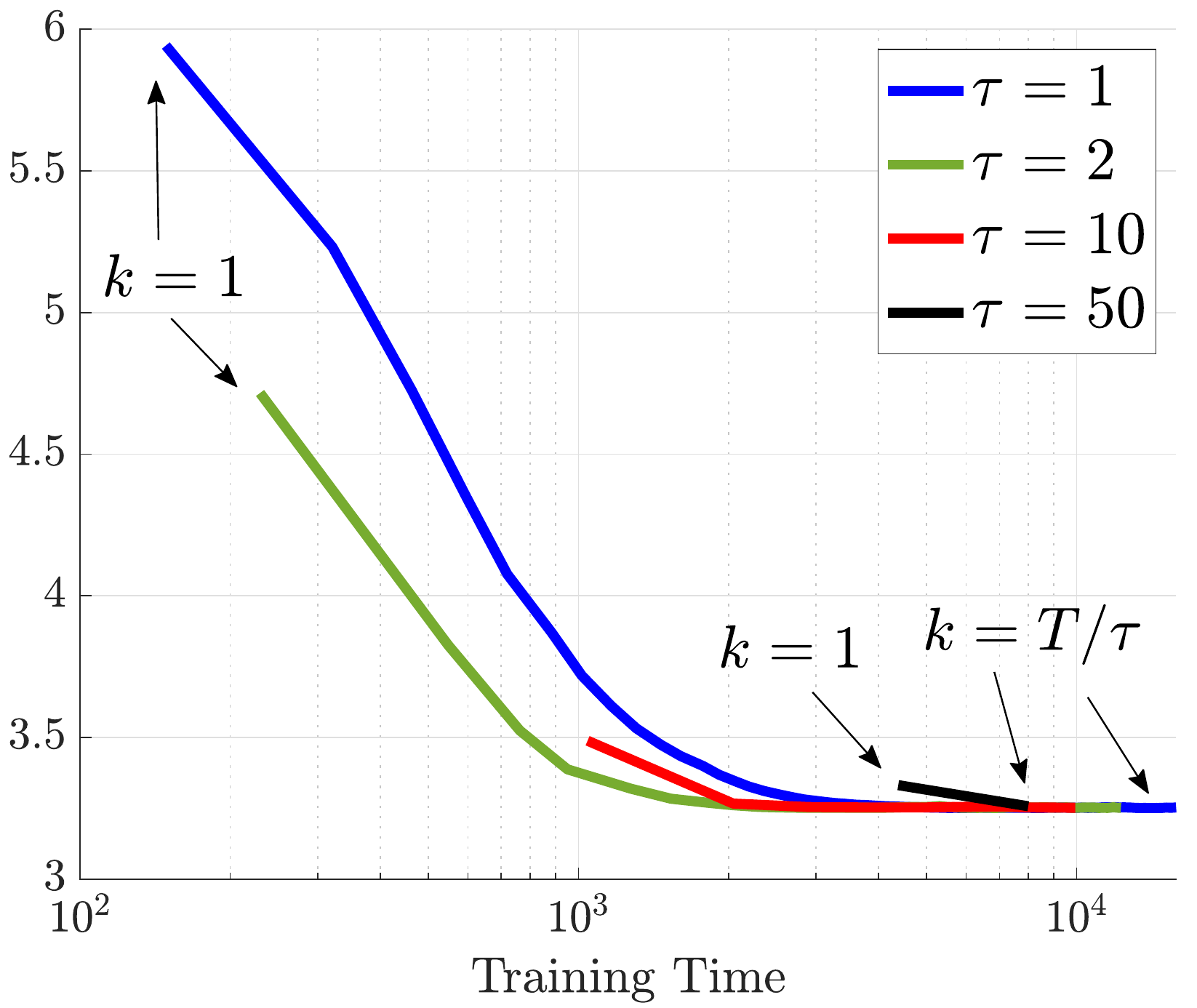}
    \includegraphics[width=0.245\linewidth]{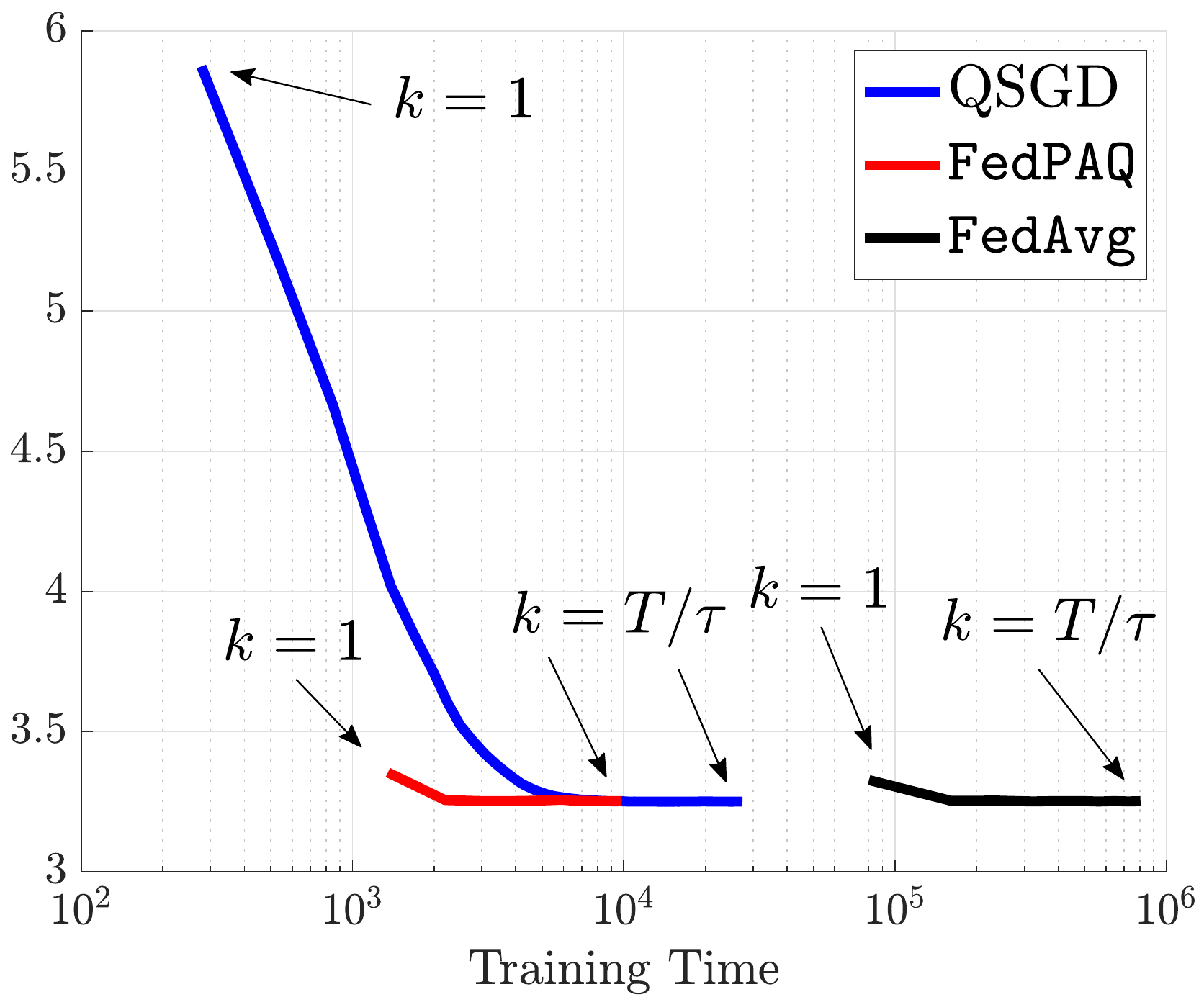}
    %\vspace{-5mm}
    \caption{Training Loss vs. Training Time: Neural Network on CIFAR-10 dataset with $248$K parameters.}
    \label{fig:cifar10-248Kpar}
\end{figure*}

\begin{figure*}[h!]
\centering
    \includegraphics[width=0.245\linewidth]{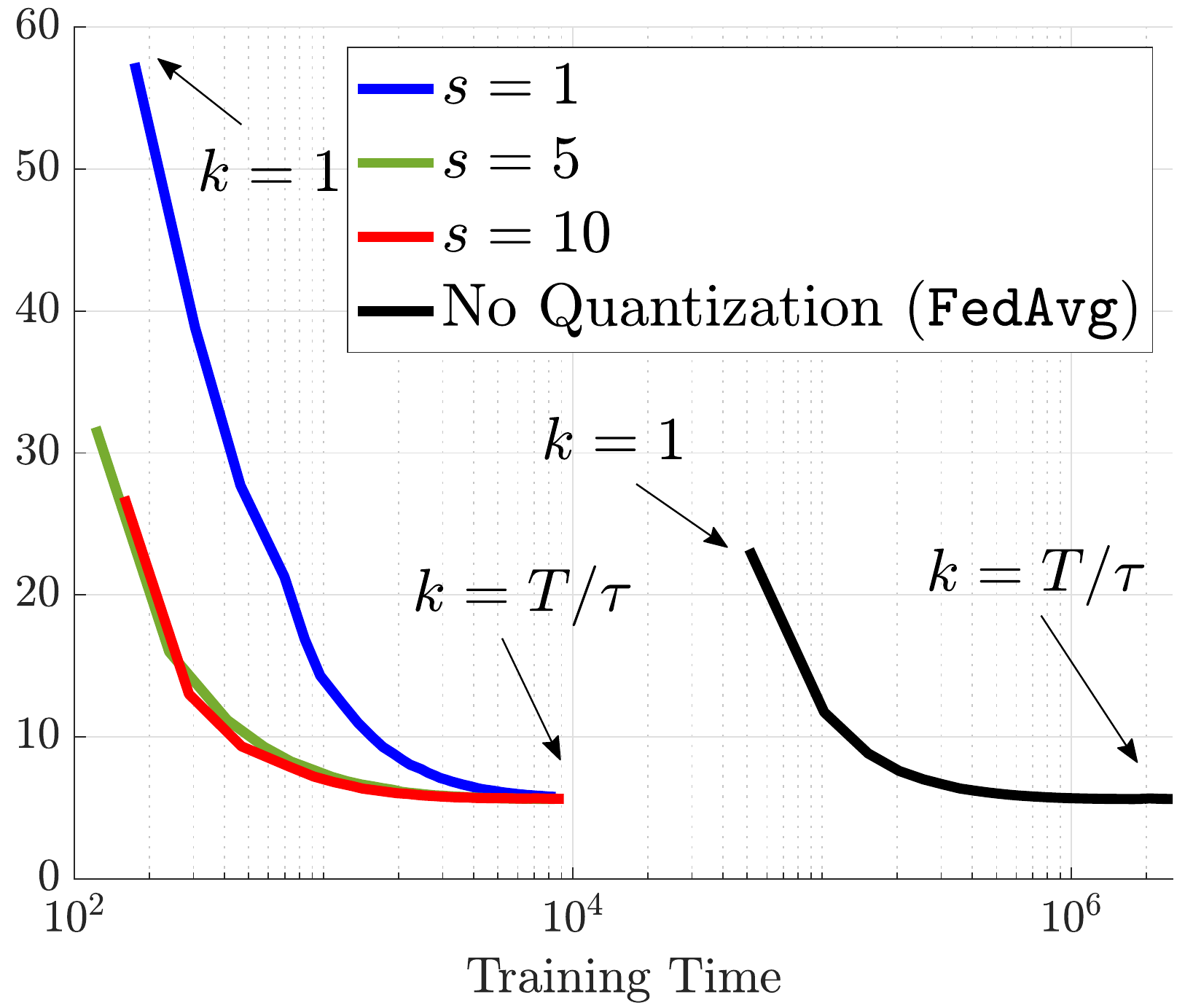}
    \includegraphics[width=0.245\linewidth]{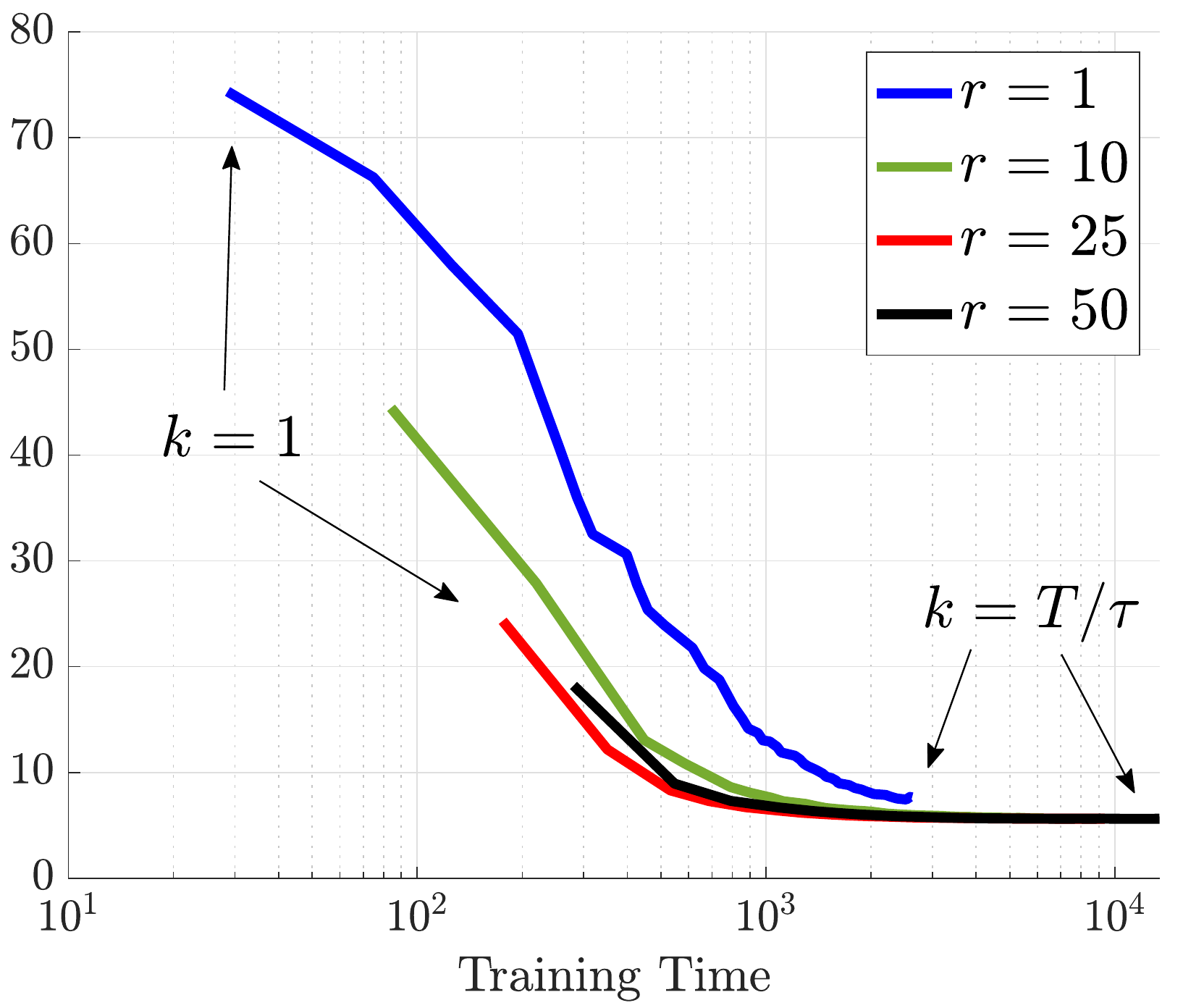}
    \includegraphics[width=0.245\linewidth]{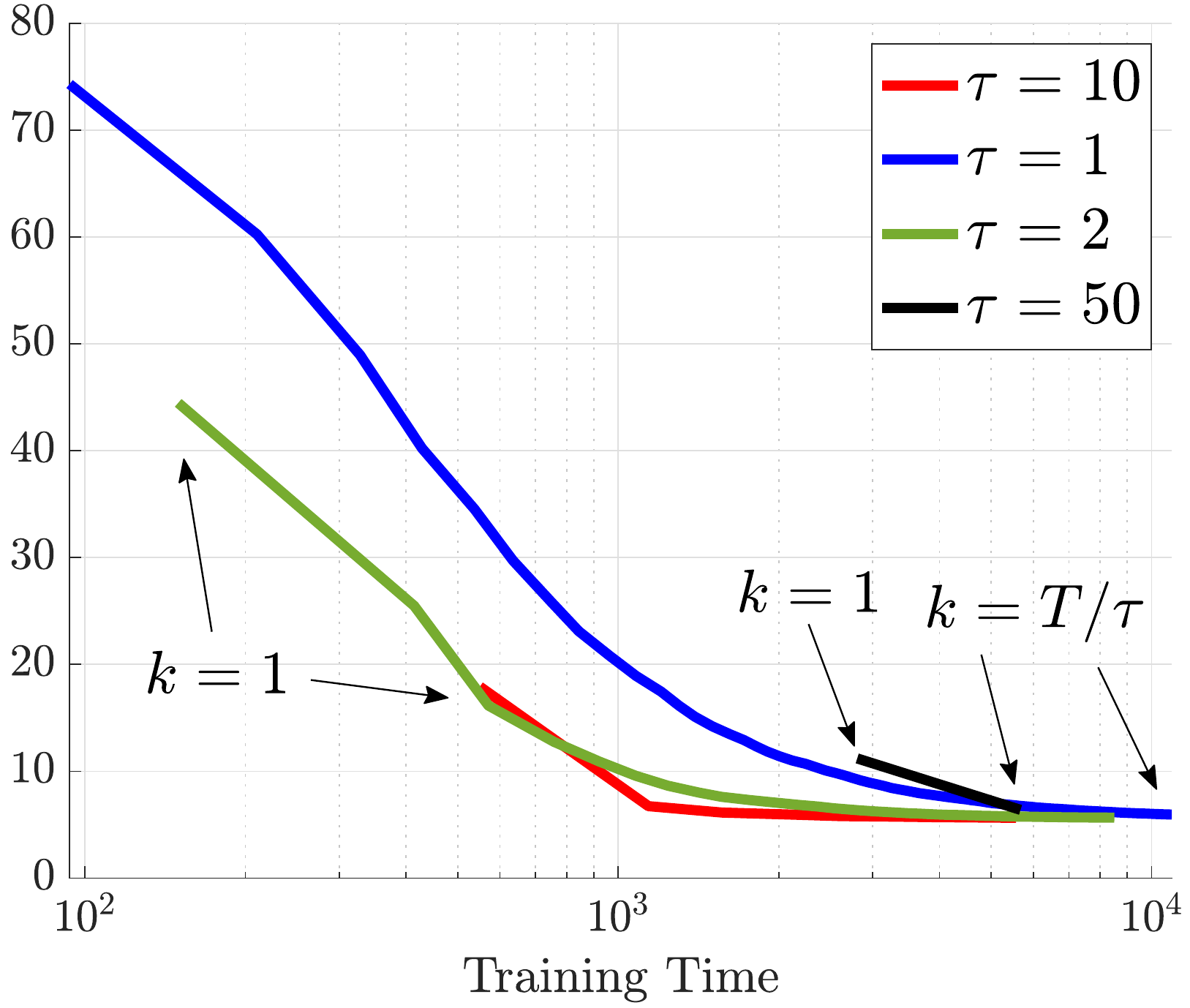}
    \includegraphics[width=0.245\linewidth]{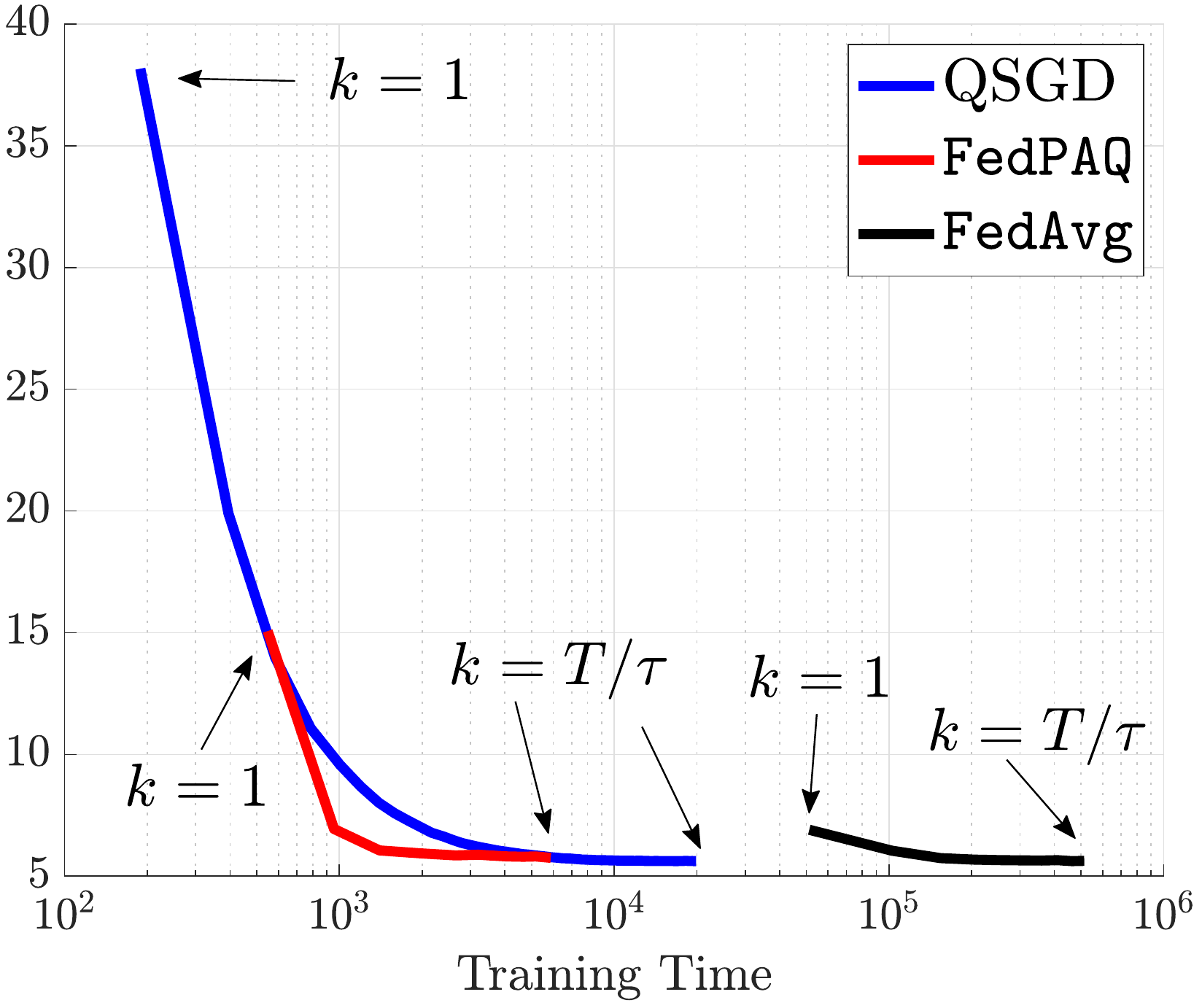}
    %\vspace{-5mm}
    \caption{Training Loss vs. Training Time: Neural Network on CIFAR-100 dataset.}
    \label{fig:cifar100}
\end{figure*}

\begin{figure*}[h!]
\centering
    \includegraphics[width=0.245\linewidth]{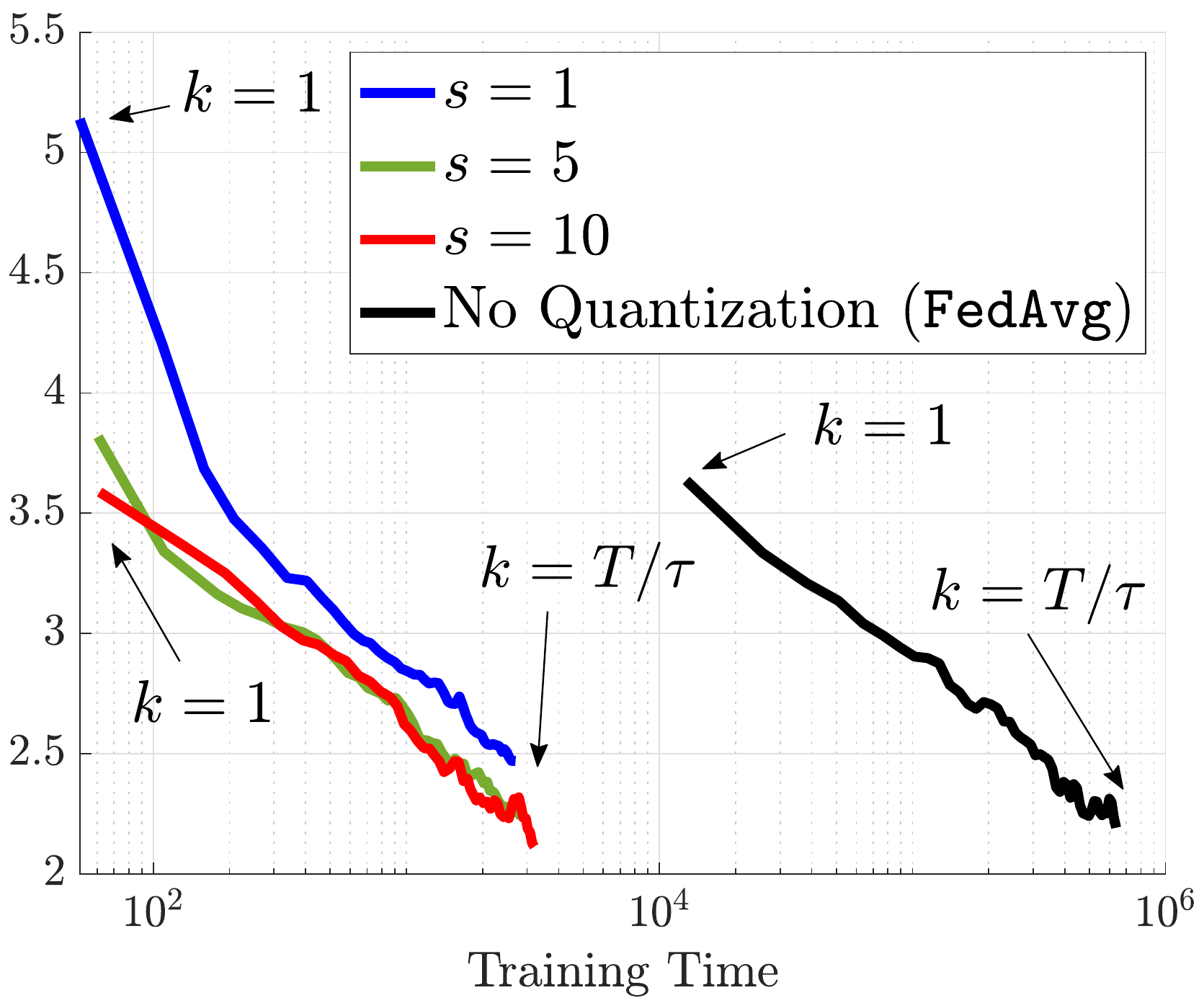}
    \includegraphics[width=0.245\linewidth]{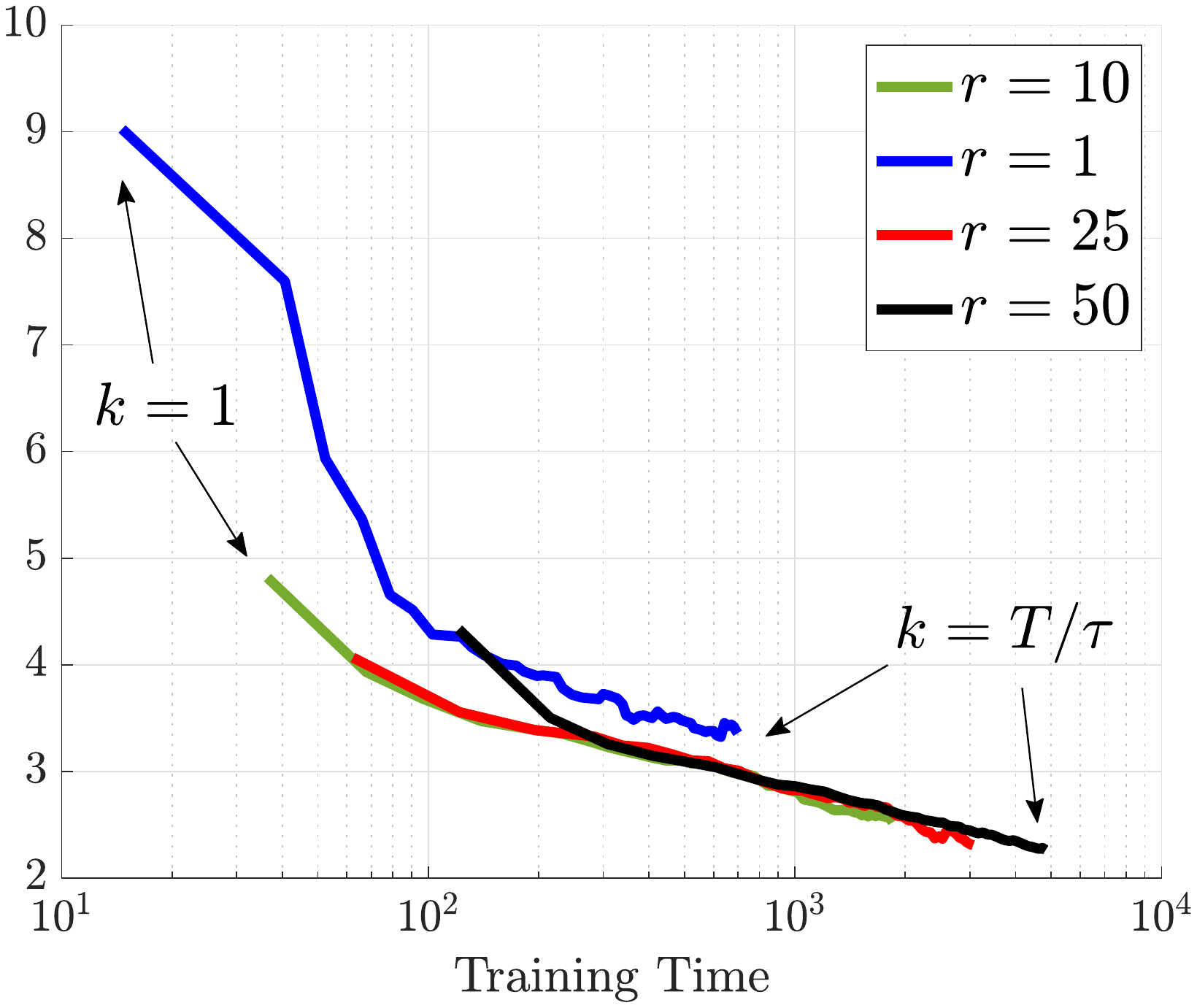}
    \includegraphics[width=0.245\linewidth]{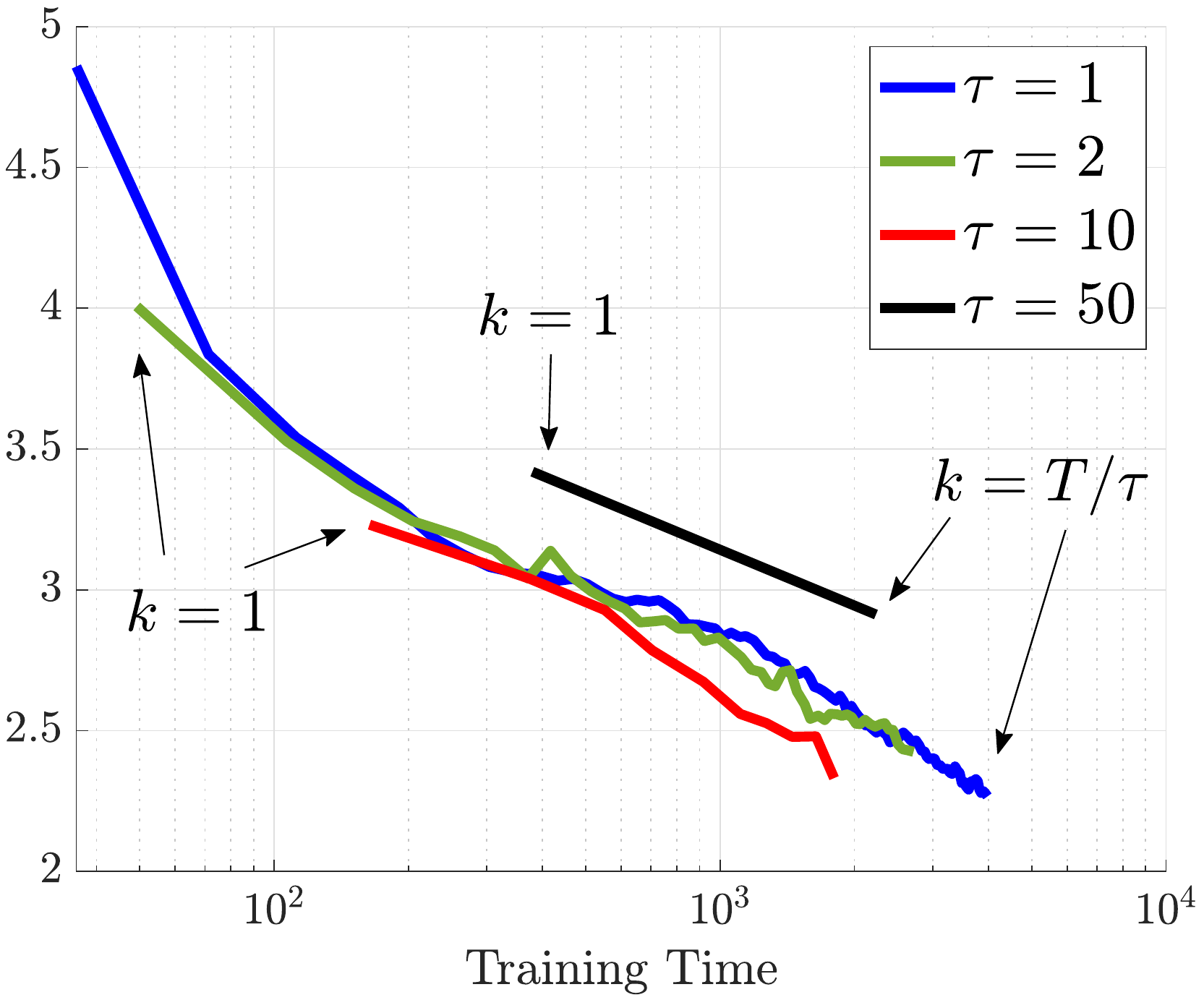}
    \includegraphics[width=0.245\linewidth]{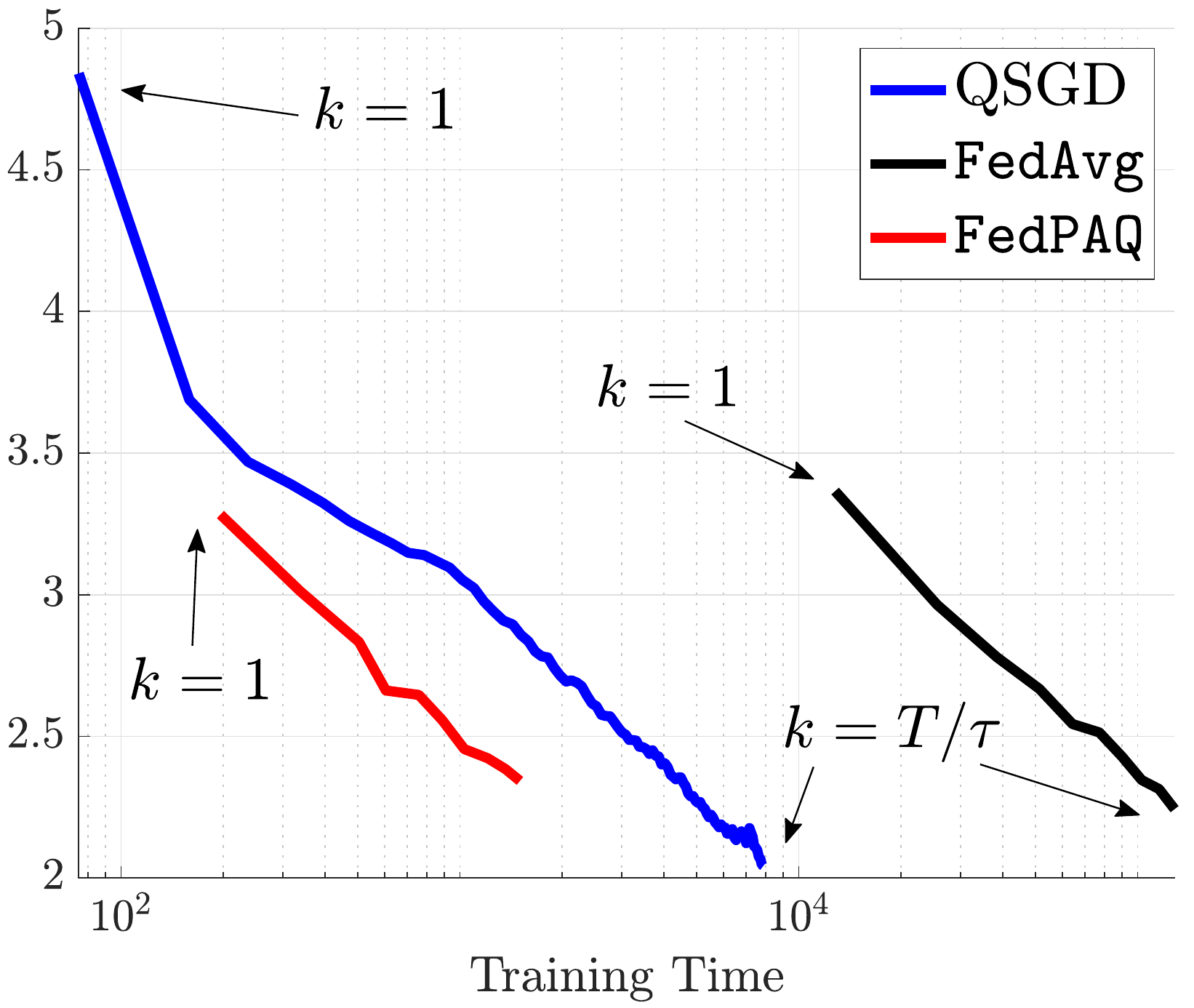}
    %\vspace{-5mm}
    \caption{Training Loss vs. Training Time: Neural Network on Fashion-MNIST dataset.}
    \label{fig:FMNIST}
\end{figure*}

\end{document}